\documentclass[lettersize,journal]{IEEEtran}
\usepackage{amsmath,amsfonts}
\usepackage{algorithmic}
\usepackage{algorithm}
\usepackage{array}
\usepackage{subfigure}
\usepackage{textcomp}
\usepackage{stfloats}
\usepackage{url}
\usepackage{verbatim}
\usepackage{graphicx}
\usepackage{cite}
\usepackage{bm}
\usepackage{booktabs}
\usepackage{xcolor}
\usepackage{epstopdf}
\hyphenation{op-tical net-works semi-conduc-tor IEEE-Xplore}
\usepackage{amsthm}
\usepackage[inline, shortlabels]{enumitem}
\theoremstyle{plain}
\newtheorem{theorem}{Theorem}[section]
\newtheorem{proposition}[theorem]{Proposition}

\theoremstyle{definition}

\theoremstyle{remark}

\newcommand{\bE}{\mathbb{E}}

\newcommand{\vmu}{{\boldsymbol \mu}}
\newcommand{\vepsilon}{{\boldsymbol \epsilon}}
\newcommand{\vphi}{{\boldsymbol \phi}}
\newcommand{\vtheta}{{\boldsymbol \theta}}
\newcommand{\vSigma}{{\boldsymbol \Sigma}}


\newcommand{\hD}{\mathcal{D}}
\newcommand{\hE}{\mathcal{E}}

\newcommand{\hH}{\mathcal{H}}

\newcommand{\hL}{\mathcal{L}}

\newcommand{\hN}{\mathcal{N}}

\newcommand{\hR}{\mathcal{R}}

\newcommand{\hU}{\mathcal{U}}

\newcommand{\vzero}{{\bf 0}}
\newcommand{\vx}{{\bf x}}
\newcommand{\vz}{{\bf z}}
\newcommand{\vc}{{\bf c}}
\newcommand{\vy}{{\bf y}}
\newcommand{\vI}{{\bf I}}
\newcommand{\vd}{{\bf d}}

\begin{document}

\title{Domain-Guided Conditional Diffusion Model \\
for Unsupervised Domain Adaptation}

\author{Yulong~Zhang,~
Shuhao~Chen,~
Weisen~Jiang,~Yu~Zhang,~
Jiangang~Lu,~ 
and~James T. Kwok
\IEEEcompsocitemizethanks{\IEEEcompsocthanksitem
Yulong Zhang is with the State Key Laboratory of Industrial Control Technology, College of Control Science and Engineering, Zhejiang University, Hangzhou, China. (e-mail: zhangylcse@zju.edu.cn)

Shuhao Chen and Yu Zhang are with the Department of Computer Science and Engineering, Southern University of Science and Technology, Shenzhen, China. (e-mail: 12232388@mail.suctech.edu.cn; yu.zhang.ust@gmail.com)

Weisen Jiang is with the Department of Computer Science and Engineering, Southern University of Science and Technology and Hong Kong
University of Science and Technology. (e-mail:
wjiangar@cse.ust.hk)

Jiangang Lu is with the State Key Laboratory of Industrial Control Technology, College of Control Science and Engineering, Zhejiang University, Hangzhou, China, and also with the Zhejiang Laboratory, Hangzhou, China. (e-mail: lujg@zju.edu.cn)

James T. Kwok is with the Department of Computer Science and Engineering, Hong Kong
University of Science and Technology, Hong Kong, China. (e-mail:
jamesk@cse.ust.hk).

Yulong Zhang and Shuhao Chen contributed equally to this work.

Corresponding author: Yu Zhang.}

}

{}

\maketitle

\begin{abstract}
Limited transferability hinders the performance of deep learning models when applied to new application scenarios.
Recently, Unsupervised Domain Adaptation (UDA) has achieved significant progress in addressing this issue via learning domain-invariant features.
However, the performance of existing UDA methods is constrained by the large domain shift and limited target domain data.
To alleviate these issues, we propose DomAin-guided Conditional Diffusion Model (DACDM) to generate high-fidelity and diversity samples for the target domain.
In the proposed DACDM, by introducing class information, the labels of generated samples can be controlled, and a domain classifier is further introduced in DACDM to guide the generated samples for the target domain.
The generated samples help existing UDA methods transfer from the source domain to the target domain more easily, thus improving the transfer performance.
Extensive experiments on various benchmarks demonstrate that DACDM brings a large improvement to the performance of existing UDA methods.
\end{abstract}

\begin{IEEEkeywords}
Diffusion models, conditional diffusion models,
transfer learning,
unsupervised domain adaptation.

\end{IEEEkeywords}

\section{Introduction}
Deep neural networks have achieved remarkable advances in a variety of applications due to their powerful representation learning capabilities~\cite{li2021survey, zhang2021survey, subramanian2022generalization, zhang2022diversifying}.
However, when domain shift occurs, a model trained from a source domain may suffer
noticeable performance degradation in the target domain \cite{oza2023unsupervised}.
To handle this issue, Unsupervised Domain Adaptation (UDA) \cite{yang2020transfer,
gu2022unsupervised, zhao2020review} is proposed to learn knowledge from the source
domain as well as unlabeled target domain, and then transfer the knowledge to help learn from the unlabeled target domain.
    
To alleviate domain shift, many UDA methods \cite{long2015learning, MDD, zhu2020deep, chen2020domain, rangwani2022closer, zhang2023free} have been proposed to learn domain-invariant features explicitly or implicitly.
Specifically, \cite{long2015learning, MDD, zhu2020deep} explicitly minimize the distribution discrepancy between the source and target domains.
On the other hand, adversarial-based UDA methods enhance transfer learning
capabilities by confusing the source and target domain features \cite{chen2020domain, rangwani2022closer, zhang2023free} or generating auxiliary samples to bridge the source and target domains \cite{hoffman2018cycada, yang2020bi}.
However, the performance of existing UDA methods is constrained by the unlabeled target domain. 
When there is a large domain shift across domains, it further poses challenges to the UDA methods.
Moreover, when the target domain samples are limited\footnote{For example, in the experiments, the Dslr and Webcam domains in the \textit{Office31} dataset have an average of only 16 and 26 samples per class, respectively. The Art domain in the \textit{Office-Home} dataset has an average of 37 samples per class.}, it is difficult for UDA methods to accurately model the data distribution of the target domain. As a result, direct domain alignment between the source samples and limited target samples could cause suboptimal transfer effects.
    
To address the aforementioned limitations, in this paper, we propose a DomAin-guided Conditional Diffusion Model (DACDM) to generate ``labeled" target samples for UDA methods.
Built on the Diffusion Probabilistic Models (DPM) \cite{ho2020denoising, dhariwal2021diffusion}, which are also called diffusion models, 
DACDM can control the class and domain of each generated sample. 
Specifically, to control the class of each generated sample,
we inject label information into the adaptive group normalization (AdaGN) layers \cite{dhariwal2021diffusion}, where for source samples, we use their ground truth labels, and for unlabeled target samples,
we use their pseudo-labels predicted by a pretrained UDA model. 
To control the domain of each generated sample,
we train a domain classifier to guide the diffusion sampling process toward the target domain in the generation,
where the DPM-Solver++ sampler \cite{lu2022dpm} is used for efficiency.
The generated samples for the target domain are combined with the source samples as an augmented source domain. 
Finally, we train a UDA model to transfer from the augmented source domain to the target domain.
Empirical results show that the target domain is closer to the augmented source domain than the original source domain, making it easier for UDA methods to perform transfer learning.

The proposed DACDM framework is a plug-and-play module that can be integrated into any existing UDA method to boost performance.
We theoretically analyze the generalization bound of DACDM.
Empirically, we combine the proposed DACDM framework with some state-of-the-art UDA methods and conduct extensive experiments on four UDA benchmark datasets to demonstrate the superiority of the proposed DACDM framework.

The contributions of this paper are four-fold.
\begin{enumerate}[\indent 1)]
\item We bring domain guidance into DPMs and propose a novel DACDM framework that
directly generates high-fidelity and diverse samples for the target domain. 
To the best of our knowledge, this is the first application of DPMs for UDA.
\item DACDM can be integrated into any existing UDA methods to improve performance. 
Specifically, in our experiments, we combine DACDM with MCC\cite{jin2020minimum} and ELS \cite{zhang2023free} to achieve state-of-the-art performance on various benchmark datasets.
\item We establish a generalization bound of DACDM.
\item We perform extensive experiments on UDA benchmark datasets to demonstrate
the effectiveness of the proposed DACDM method. Results show that the generated
samples are similar to the target samples, and the augmented source domain consisting of source samples and generated target samples can help reduce domain shift.
\end{enumerate}

\section{Related Work} \label{section:RW}

\subsection{Unsupervised Domain Adaptation}
    
UDA methods \cite{long2015learning, ganin2016domain, zhao2020review} extract knowledge from the labeled source domain to facilitate learning in the unlabeled target domain.
As the data distribution of the target domain differs from that of the source
domain, various methods are proposed to reduce the domain discrepancy. They can
mainly be classified into two categories: discrepancy-based methods and adversarial methods.
    
Most discrepancy-based methods learn a feature extractor to minimize the
distribution discrepancy between source and target domains. For example,    DAN
\cite{long2015learning} tries to minimize the maximum mean discrepancy (MMD)
\cite{gretton2012kernel} between domains, while \cite{zhu2020deep} considers the
relationship between two subdomains within the same class but across different domains.
Margin disparity discrepancy (MDD) \cite{MDD} performs domain alignment with 
generalization error
analysis.
Adaptive feature norm (AFN) \cite{AFN} progressively adapts the feature norms of the source and target domains to improve transfer performance.
Maximum structural generation discrepancy (MSGD) \cite{xia2022maximum} introduces an intermediate domain to gradually reduce the domain shift between the source and target domains.
    Unlike the above methods that explicitly align domains, minimum class confusion (MCC) \cite{jin2020minimum} introduces a general class confusion loss as regularizer.

On the other hand, adversarial methods align the source and target distributions through adversarial training \cite{goodfellow2014explaining}.
For example, domain adversarial neural network (DANN) \cite{ganin2016domain}
learns a domain discriminator to distinguish samples in the two domains, and uses a feature generator to confuse the domain discriminator. 
Conditional domain adversarial network (CDAN) \cite{long2018conditional} injects class-specific information into the discriminator to facilitate the alignment of multi-modal distributions. 
Smooth domain adversarial training (SDAT) \cite{rangwani2022closer} employs
sharpness-aware minimization \cite{foret2020sharpness} to seek a flat minimum for better generalization. 
Additionally, environment label smoothing (ELS) \cite{zhang2023free} alleviates the effect of label noise by encouraging the domain discriminator to output soft domain labels.
    
Adversarial training can also serve as a generative method to bridge the source and target domains.
For instance, cycle-consistent adversarial domain adaptation (CyCADA)
\cite{hoffman2018cycada} uses the GAN for image-to-image translation via the cycle consistency loss \cite{zhu2017unpaired} and semantic consistency loss. 
Bi-directional generation (BDG) \cite{yang2020bi} uses two cross-domain generators to synthesize data of each domain conditioned on the other, and learns two task-specific classifiers.

The aforementioned generation-based UDA methods \cite{hoffman2018cycada,
yang2020bi} require complex mechanisms to preserve the detailed and semantic
information in the generated images, leading to training instability and requires
careful tuning of the hyperparameters and regularizers for convergence \cite{dhariwal2021diffusion, murphy2023probabilistic}.
Different from the GAN-based methods, the proposed DACDM framework directly generates target samples.
Instead of using adversarial training strategies as in CyCADA and BDG, DACDM is based on the diffusion models \cite{dhariwal2021diffusion}, which are better at covering the modes of a distribution than GANs \cite{nichol2021improved}. In DACDM, the label and domain information of each generated sample are controllable.

\subsection{Diffusion Probabilistic Model (DPM)}
\label{sec:dpm}
In recent years,
the DPM \cite{sohl2015deep} has achieved great success in various generative tasks.
It includes a forward diffusion process that converts an original sample $\vx_0$
to a Gaussian noise $\vx_T$, and a reverse denoising process that infers the Gaussian noise back to a sample.
Specifically, the forward process gradually injects noise to $\vx_0$ until it
becomes a random noise $\vx_T$,  via the sampling:
$q\left( {{\vx_t}| {{\vx_{t - 1}}}} \right) = \mathcal{N} \left( {\sqrt
{\alpha_t} {\vx_{t - 1}},{(1-\alpha_t)}\bf{I} } \right)$, where
$\mathcal{N}(\vmu,\vSigma)$ is the multivariate normal distribution with mean
$\vmu$ and variance $\vSigma$, and ${\alpha _t}$'s follow a decreasing schedule.
On the other hand, the reverse process denoises $\vx_T$ to $\vx_0$ using a decoder
${{p_\vtheta }\left( {{\vx_{t - 1}}| {{\vx_t}}} \right)}$.  Specifically, a model
${\vepsilon_\vtheta }\left( {{\vx_t},t} \right)$, parameterized by $\vtheta$,
    is trained to 
    predict the noise injected in the forward process by minimizing an approximate loss
\begin{align}
\label{eqn_noise_origin}
{\mathbb{E}_{t \sim \hU(1,T),\vx_0, \vepsilon \sim \mathcal{N}\left( {\vzero,\bf{I}} \right)}} {{\omega _t}{{\left\| {\vepsilon  - {\vepsilon _\vtheta }\left( {{\vx_t},t} \right)} \right\|}^2}} ,
\end{align}
where $\hU(1,T)$ is the uniform distribution on $\{1,\dots,T\}$, 
${\omega _t} = (1- \alpha_t)^2/2\sigma _t^2{\alpha _t}\left( {1 - \bar{\alpha}_t}
\right)$ with
$\bar{\alpha}_t = \prod\nolimits_{s = 0}^t {{\alpha _s}} $,
${\sigma _t} = \frac{{1 - {{\bar \alpha }_{t - 1}}}}{{1 - \bar{\alpha}_t}}{(1 - \alpha _t)}$,
and ${\vx_t} = \sqrt {\bar{\alpha}_t} \vx_0 + \sqrt {1 - \bar{\alpha}_t} \vepsilon$.
During inference, the learned model ${\vepsilon _\vtheta }\left( {{\vx_t},t}
\right)$ generates a sample $\vx_0$ by gradually denoising the random noise $\vx_T\sim\mathcal{N}\left( {\vzero,\bf{I}} \right)$ according to ${\vx_{t - 1}} = \frac{1}{{\sqrt {{\alpha _t}} }}\left( {{\vx_t} - \frac{{1 - {\alpha _t}}}{{\sqrt {1 - \bar{\alpha}_t} }}{{\rm{\vepsilon}}_\vtheta }\left( {{\vx_t},t} \right)} \right) + {\sigma _t}\vz$, where $\vz \sim \mathcal{N}\left( {\vzero,\bf{I}} \right)$.

\begin{figure*}[t]
\centering
\includegraphics[width=6.5in]{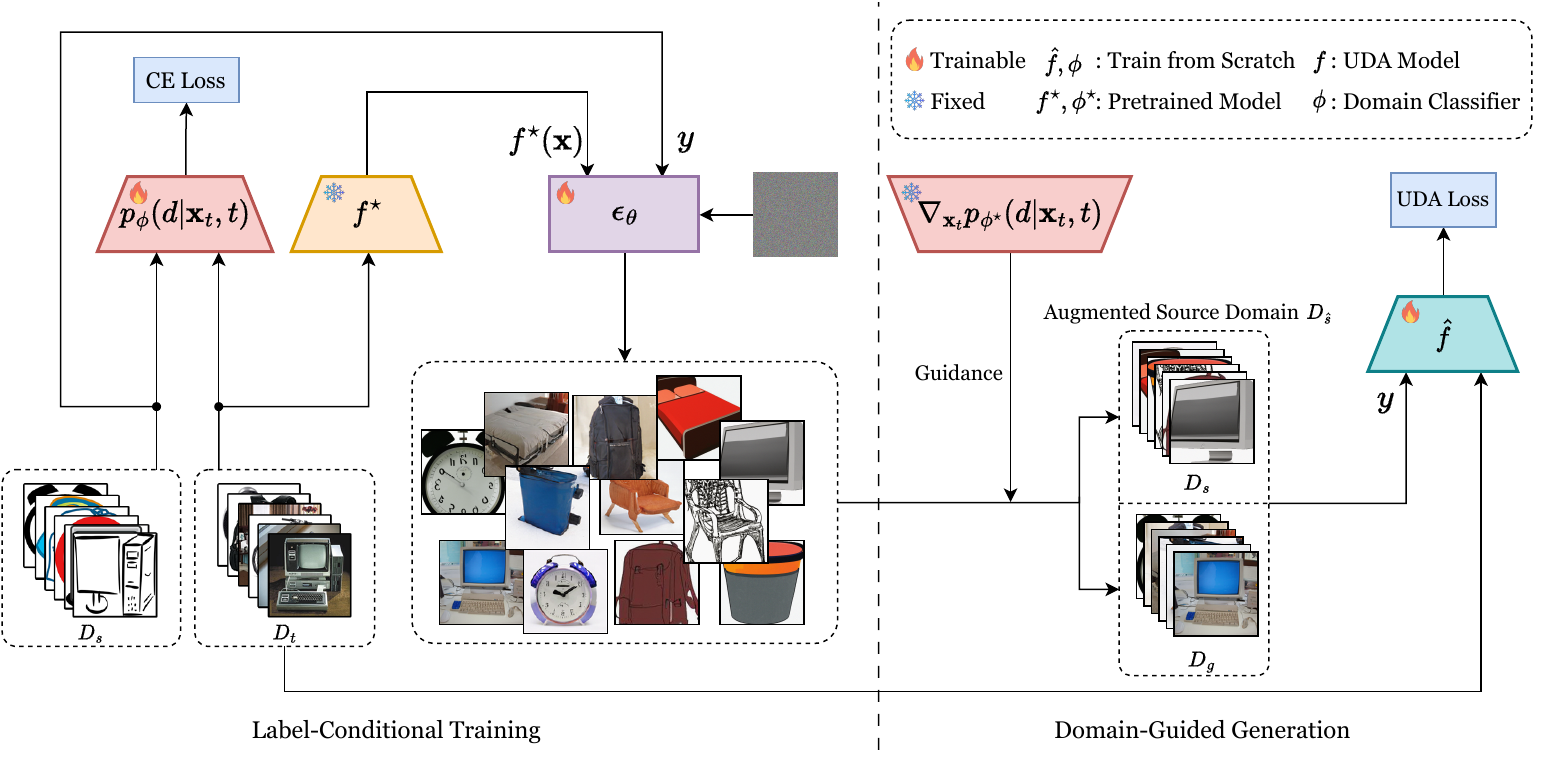}
\caption{An overview of the proposed DACDM framework. In \textit{label-conditional training}, a conditional diffusion model $\vepsilon _\vtheta$ is trained on source samples with ground truth labels and target samples with pseudo labels given by the pretrained classifier $f^{\star}$. 
A domain classifier $\vphi$ is trained to discriminate samples from source and target domains. 
In \textit{domain-guided generation}, the generated target domain data $\mathcal{D}_g$ are sampling from the trained $\vepsilon _\vtheta^{\star}$ with guidance provided by domain classifier $\vphi^\star$. 
Finally, a UDA model $\hat f$ is trained to transfer from the augmented source domain ${\mathcal{D}_{\hat{s}}}=\hD_s\cup \hD_g$ to the target domain $\mathcal{D}_t$.}
\label{model}
\end{figure*}

The DPM is unable to control the classes of generated samples.
To deal with this issue, conditional diffusion probabilistic model (CDPM) \cite{nichol2021improved} incorporates label information into the model by adding a label embedding to the time embedding.
 Dhariwal and Nichol \cite{dhariwal2021diffusion} further improve the CDPM
by embedding 
the condition (i.e., label)
into the AdaGN layers \cite{dhariwal2021diffusion} of the UNet \cite{ronneberger2015u}.
Moreover,
classifier guidance
is also introduced, and
the denoising process of class $c$ is modified as 
$\tilde{\vepsilon}_\vtheta(\vx_t,t,c) = \vepsilon_\vtheta(\vx_t,t,c) - s\sigma_t \nabla_{\vx_t}\log p_\phi(c | \vx_t,t)$,
where $\phi$ is a learned classifier and $s$ is a scale factor.

The DPM suffers from slow sampling, as thousands of denoising steps are required to generate samples \cite{nichol2021improved, ho2020denoising, lu2022dpm, yang2022diffusion}.
To improve efficiency, the denoising diffusion implicit model (DDIM) \cite{ho2020denoising} uses a non-Markov diffusion process to achieve $10\times$ faster sampling.
DPM-Solver \cite{lu2022dpmsolver} formulates DPM as diffusion ordinary
differential equations and reduces the number of denoising steps from 1000 to about 10.
DPM-Solver++ \cite{lu2022dpm} further improves the robustness of classifier guidance in DPM-Solver by using a high-order solver.

\section{Domain-guided Conditional Diffusion Model (DACDM)}

In UDA, we are given a labeled source domain ${\hD_s} = \{ (\vx^{sd},y^{sd})\}$
and an unlabeled target domain ${\hD_t} = \{\vx^{td}\}$. Let $N_s=|\hD_s|$ and
$N_t=|\hD_t|$ be the number of samples in $\hD_s $ and $\hD_t$, respectively.
Usually, $N_t$ is much smaller than $N_s$ \cite{yang2020transfer}. 
UDA aims to train a model from $\mathcal{D}_s\cup \hD_t$, and then use this model to make
predictions on $\hD_t$. 
However, the target domain has limited samples, which are also unlabeled, hindering the transfer performance.

To alleviate this problem, the proposed DACDM aims to generate more ``labeled" target domain samples.
As illustrated in Fig.~\ref{model}, it has two stages: \textit{label-conditioned training} and \textit{domain-guided generation}.
In label-conditioned training, we train a label-conditioned diffusion model on the source and target domain samples to control the labels of generated samples.
For target samples whose labels are unavailable, we first train a UDA model $f^\star$ to predict their pseudo-labels as conditional information.
In domain-guided generation, 
we first train a domain classifier $\vphi^\star$, which is then used in domain-guidance
sampling for generating target samples.  Finally, the generated samples are combined
with the source samples as an augmented source domain to train a UDA model.


\subsection{Label-Conditioned Training}
\label{sec:condition}

To generate more target domain samples, a straightforward solution is to train a DPM on the target domain. 
Analogous to (\ref{eqn_noise_origin}), 
a noise prediction model ${\vepsilon _\vtheta}\left( {{\vx^{td}_t},t} \right)$ 
can be trained
by minimizing the following loss:
\begin{align}
\label{eqn_noise_td}
{\mathbb{E}_{t \sim \hU(1,T),\vx^{td}_0\sim\hD_t, \vepsilon \sim \mathcal{N}\left( {\vzero,\bf{I}} \right)}} {{\omega _t}{{\left\| {\vepsilon  - {\vepsilon _\vtheta }\left( {{\vx^{td}_t},t} \right)} \right\|}^2}}.
\end{align}
However, this cannot control the samples' labels, and thus the generated samples may not cover all classes.

To alleviate this problem, one can 
use the label embedding of $\vx_0^{td}$ as condition $\vc$
and 
feed this to a CDPM \cite{nichol2021improved} 
${\vepsilon_\vtheta}\left( {{\vx^{td}_t},t, \vc} \right)$
(Section~\ref{sec:dpm}).
However, while 
labels for the source domain samples are available, those 
for the target domain samples are not.
To address this issue, we propose to first
train a UDA model $f^{\star}$ by minimizing the
following commonly-used objective \cite{tlbook}:
\begin{align}
\label{eqn1}
\min_{f}\frac{1}{N_s}\sum_{ (\vx, y) \in \hD_s } \ell ( f(\vx), y) 
+ \beta\hR(\mathcal{D}_s,\mathcal{D}_t),
\end{align}
where $\ell(\cdot,\cdot)$ is a supervised loss (e.g., cross-entropy (CE) loss for
classification problems), 
$\beta>0$ is a regularization parameter, and $\hR(\cdot,\cdot)$ is a
loss 
used to bridge the source and target domains
(e.g., domain discrepancy loss 
in discrepancy-based methods
\cite{long2015learning, MDD, jin2020minimum},
and domain discrimination loss 
in adversarial-based methods
\cite{ganin2016domain, rangwani2022closer, zhang2023free}).
We then use 
$f^{\star}$'s 
predictions 
($\hat{\vy}$)
on the
target domain samples 
as their pseudo-labels.
The loss in (\ref{eqn_noise_td}) is consequently changed to:
\begin{equation}
\label{eqn4}
{\bE_{{t\sim \hU(1,T), \vx_0^{td}\sim \hD_t ,\vepsilon \sim \hN \left(
{\vzero,\bf{I}} \right)}}} \omega_t {\left\| {\vepsilon  - {\vepsilon _\vtheta
}\left( \vx_t^{td}, t, f^{\star}(\vx_0^{td})\right)} \right\|^2}.\!\!\!\!
\end{equation}

Using more training data is always beneficial to DPM training.
It can also improve the 
quality  and diversity of the 
generated samples
\cite{croitoru2023diffusion}. 
Thus, instead of using only the target domain samples 
to train $\vepsilon_\vtheta$
as in 
(\ref{eqn4}),
we use both the source and target domain samples 
to minimize the loss 
\begin{align}    \label{eqn:loss_func_2}
{\bE_{{t\sim \hU(1,T), \vx_0\sim \hD_s\cup \hD_t,\vepsilon \sim \hN \left( {\vzero,\bf{I}} \right)}}} \omega_t {\left\| {\vepsilon \! - \! {\vepsilon _\vtheta }\left( \vx_t, t, \vc \right)} \right\|^2},\!\!\!
\end{align}
where $\vc$
is the (true) label when 
$\vx_0$ is from the 
source domain, and the
pseudo-label predicted by $f^{\star}$ when
$\vx_0$ is from the
target domain.
The complete label-conditioned training procedure is 
shown in Algorithm \ref{algorithm:training_cdpm}.

\begin{algorithm}
\caption{Label-conditioned training.}
\label{algorithm:training_cdpm}
    \renewcommand{\algorithmicrequire}{\textbf{Input:}} 
    \renewcommand{\algorithmicensure}{\textbf{Output:}}
    \begin{algorithmic}[1]
        \REQUIRE{Source domain ${\hD_s}$, target domain ${\hD_t}$, step size $\gamma$, decreasing sequence $\alpha_1,\dots,\alpha_T$, mini-batch $B$;}
        \STATE Train a UDA model $f^\star$ on $\hD_s\cup \hD_t$;
        \REPEAT
        \STATE Sample a mini-batch $\{ {{{\rm{\vx}}_{0,i}}}\}_{i = 1}^B$ from $\hD_s\cup \hD_t$;
        \FOR{$i=1,\dots,B$}
        \IF{$\vx_{0,i} \in \hD_s$}
            \STATE $\vc_i=y_i$;
        \ELSE
            \STATE $\vc_i=f^{\star}(\vx_{0,i})$;
        \ENDIF
        \ENDFOR
        \STATE $t\sim \hU(1,T)$;
        \STATE $\vepsilon\sim \hN(\vzero, \vI)$;
        \STATE $\bar{\alpha}_t = \prod\nolimits_{j = 1}^t {{\alpha _j}}$, ${\sigma _t} = \frac{{1 - {{\bar \alpha }_{t - 1}}}}{{1 - \bar{\alpha}_t}}{(1-\alpha _t)}$;
        \STATE $\vx_{t,i} = \sqrt{\bar{\alpha}_t}\vx_{0,i} + \sqrt {1 - \bar{\alpha}_t}\vepsilon$; $i=1,\dots,B$;
        \STATE $ {\omega _t} = \frac{(1-\alpha _t)^2}{2\sigma _t^2{\alpha _t}\left( {1 - \bar{\alpha}_t} \right)}$;
        \STATE $\hL_{\text{mini-batch}}(\vtheta) = \frac{\omega_t}{B}\sum_{i=1}^B {\left\| {\vepsilon - {\vepsilon_{\vtheta} }\left( {\vx_{t,i}}, t, \vc_i \right)} \right\|^2}$;
        \STATE $\vtheta \gets \vtheta - \gamma \nabla_{\vtheta} \hL_{\text{mini-batch}}(\vtheta)$;
        \UNTIL{converged.}
        \RETURN $\vtheta^\star$.
    \end{algorithmic}
    \end{algorithm}

\subsection{Domain-Guided Generation}
\label{sec:guidance}
While label-conditioned training
can control the 
generated sample's
class label, it
does not control its domain.
In UDA, as we focus on the performance of the target domain,
it is more crucial to generate samples for the target domain.

To achieve this, we use classifier guidance \cite{dhariwal2021diffusion},
and use a domain classifier to guide generation towards the target domain.
Specifically, we first train a domain classifier
(with parameter $\vphi$)
on the noisy images $\vx_t$'s 
by minimizing the loss
\begin{align}
\label{loss_func_3}
\vphi^{\star} =\arg\min_{\vphi} \sum_{t=1}^T\;\sum_{(\vx_0, \cdot) \in \hD_s \cup \hD_t}   \ell(\vphi(\vx_t,t), d),
\end{align}
where 
${\vx_t} = \sqrt {\bar{\alpha}_t} \vx_0 + \sqrt {1 - \bar{\alpha}_t} \vepsilon$,
$\ell(\cdot,\cdot)$ is the cross-entropy loss, 
$d$ is ground-truth domain label of $\vx$.
After obtaining $\phi^\star$, the gradient  
$\nabla_{\vx_t}\log p_{\vphi^{\star}}(d^{td} | \vx_t,t)$ 
(where $p_{\vphi^{\star}}(d^{td} | \vx_t,t)$ is the predicted domain probability, $d^{td}$ denotes the target domain label)
is used to guide the sampling process towards the target domain.
The domain-guided noise prediction model becomes
\begin{align}\!\!\!\tilde{\vepsilon}_\vtheta(\vx_t,t,\hat{\vc})\! = \!\vepsilon_\vtheta(\vx_t,t,y)\!-\! s \sqrt{1-\bar \alpha_t} \; \nabla_{\vx_t}\log p_{\vphi^{\star}}(d^{td} | \vx_t,t),\!\! \!\!\label{eq:temp-askd}
\end{align}
where 
$s$ is a scale factor,
and 
$\hat{\vc}=[y,d^{td}]$
with $y$ being the label
for class conditional controlling.

To accelerate the sampling procedure, 
we use DPM-Solver++ \cite{lu2022dpm} to generate samples ${\mathcal{D}_g}$
for the target domain.
Given an initial noisy $\vx_T \sim \hN(\vzero, \vI)$ and time steps
$\left\{ {{t_i}} \right\}_{i = 0}^M$, the multi-step second-order solver iterates
\begin{align}
\!\!\!\vx_{t_i}& \!=\! {\sqrt{\alpha _{t_i}}}\left( {e^{ - {b_i}} - 1} \right) \bigg( \! \frac{{{b_i}}}{{2{b_{i - 1}}}}{\vx_\vtheta }\!\left( {{{\vx}_{{t_{i - 2}}}}, {t_{i - 2}}, \hat{\vc}} \right) \nonumber \\
\!\!- &\!\Big(\!{1\! +\!\frac{b_i}{{2{b_{i-1}}}}} \!\Big) {\vx_\vtheta }\left( {{{\vx}_{t_{i - 1}}}, {t_{i - 1}},\hat{\vc}} \right) \!\! \bigg) \! +\! \frac{\sqrt{1-\bar \alpha_{t_i}}}{\sqrt{1-\bar \alpha_{t_{i-1}}}}{{\vx_{t_{i-1}}}},
\label{eqn:DPM_Solver++}
\end{align}
for $i=0,...,M$,
where $\vx_\vtheta(\vx_t, t, \hat{\vc}) \! =\! \frac{\vx_t - \sqrt{1-\bar \alpha_t} \tilde{\vepsilon}_\vtheta(\vx_t,t,\hat{\vc})}{\sqrt{\bar \alpha_t}}$, ${b_i} = {\lambda _{t_i}} - {\lambda _{{t_{i - 1}}}}$, and ${\lambda _t} = \frac{1}{2} \log \left( {\frac{\bar \alpha_t }{1 - \bar \alpha _t}} \right)$.
The whole domain-guided generation procedure is shown in Algorithm
\ref{algorithm:DACDM_sampling}.
    
    \begin{algorithm}[!t]
    \caption{Domain-guided generation.}
    \label{algorithm:DACDM_sampling}
    \renewcommand{\algorithmicrequire}{\textbf{Input:}} 
    \renewcommand{\algorithmicensure}{\textbf{Output:}}

\begin{algorithmic}[1]
\REQUIRE{Initial $\vx_T$, time steps $\{t_i\}^M_{i=0}$, noise prediction model $\bm{\vepsilon_{\vtheta^\star}}$ obtained from Algorithm \ref{algorithm:training_cdpm}, domain classifier model 
$\vphi^{\star}$, 
target domain label $d^{td}$, and guidance scale $s$, decreasing sequence $\alpha_1,\dots,\alpha_T$;}
\STATE $\vx_{t_0} = \vx_T\sim \hN(\vzero, \vI)$;
\STATE Sample a class $y$;
\STATE $\hat{\vc}=[y,d^{td}]$;
        \FOR{$i = 1,\dots, M$}
        \STATE $\bar{\alpha}_{t_i} = \prod\nolimits_{j = 1}^{t_i} {{\alpha_j}}$;
            \STATE $\lambda_{t_i} =\frac{1}{2} \log \left( {\frac{\bar \alpha_{t_i} }{1 - \bar \alpha _{t_i}}} \right)$;
            \STATE $b_i=\lambda_{t_i} - \lambda_{t_{i-1}}$;
        \STATE Compute guidance $\vd_{t_i} \!=\! \sqrt{1\!-\!\bar \alpha_{t_i}}  \nabla_{\vx_{t_i}} \!\!\log p_{\vphi^{\star}}(d | \vx_{t_i}, t_i)$;\!\!
        \STATE $\tilde{\vepsilon}_\vtheta(\vx_{t_i},{t_i},\hat{\vc})=\vepsilon_{\vtheta^\star}(\vx_{t_i},t_i,y) - s\vd_{t_i}$;
        \IF{$i=1$}
          \STATE \!$\vx_{t_i} = \frac{\sqrt{1-\bar \alpha_{t_i}}}{\sqrt{1-\bar \alpha_{t_{i-1}}}} \vx_{t_{i-1}} + \sqrt{ \bar \alpha_{t_i}} ( 1 \!- e^{-b_i} )  \!\bigg(\!\!\vx_{t_{i-1}} \!- \frac{\sqrt{1-\bar \alpha_{t_i}}}{\sqrt{\bar \alpha_{t_i}}}  \tilde{\vepsilon}_\vtheta(\vx_{t_{i-1}},t_{i-1},\hat{\vc})\!\!\bigg)$;
        \ELSE
          \STATE Compute $\vx_{t_i}$ according to Eq. \eqref{eqn:DPM_Solver++};
        \ENDIF
        \ENDFOR
        \RETURN $(\vx_{t_M}, y)$.
    \end{algorithmic}
    \end{algorithm}
    
To reduce the domain shifts, we combine the generated samples $\mathcal{D}_g$ with the source
domain samples $\mathcal{D}_s$ to form an augmented source domain data
${\mathcal{D}_{\hat{s}}}$. The effectiveness of this combination is verified in
Section \ref{Ablation Studies}.
A UDA model is then used to transfer from the augmented source domain
${\mathcal{D}_{\hat{s}}}$ to the target domain $\mathcal{D}_t$ by minimizing the
following objective 
\begin{align} 
\label{eqn:final_UDA}
\hat f =  \arg
\min_{f} \frac{1}{N_{\hat s}}\sum_{ (\vx,y) \in \hD_{\hat{s}}} \ell (f (\vx), y)
+ \beta\hR(\mathcal{D}_{\hat{s}},\mathcal{D}_t),
\end{align}
where $N_{\hat s}=|{\mathcal{D}_{\hat{s}}}|=N_s+N_g$ is the number of samples in $\mathcal{D}_{\hat{s}}$.
Finally, the learned $\hat{f}$ is evaluated on the target domain
samples.
The whole DACDM algorithm is shown in Algorithm \ref{algorithm:overall_algorithm}.
DACDM can be integrated into any UDA method.
In the experiments, DACDM is combined with MCC \cite{jin2020minimum} and ELS \cite{zhang2023free}.

\begin{algorithm}[!t]
\caption{Domain-guided Conditional Diffusion Model (DACDM).}
\label{algorithm:overall_algorithm}
\renewcommand{\algorithmicrequire}{\textbf{Input:}} 
\renewcommand{\algorithmicensure}{\textbf{Output:}}

\begin{algorithmic}[1]
\REQUIRE{Source domain ${\hD_s}$, target domain ${\hD_t}$, number of generated samples $N_g$;}
\STATE Train a label-conditioned diffusion model $\vepsilon_{\vtheta^\star}$ on $\hD_s\cup \hD_t$ by Algorithm \ref{algorithm:training_cdpm};
\STATE Train a domain classifier $\vphi^{\star}$ on $\hD_s\cup \hD_t$;
\STATE $\hD_g = \emptyset$;
\FOR{$i = 1,\dots, N_g$}
\STATE Generate $(\vx_i,y_i)$ with $\vphi^{\star}$ by Algorithm \ref{algorithm:DACDM_sampling};
\STATE Append $(\vx_i,y_i)$ to $\hD_g$;
\ENDFOR
\STATE $\hD_{\hat s} = \hD_g \cup \hD_s$;
\STATE Train a UDA model $\hat f$ on $\hD_{\hat s} \cup \hD_t$;
\STATE Evaluate the learned $\hat{f}$ on $\hD_t$;
\RETURN Evaluation results.
\end{algorithmic}
\end{algorithm}

\subsection{Generalization Analysis}

In this section, we study the generalization properties of DACDM. 
Let
$\hE_{s}(f,f^\prime) = P_{(\vx, y)\in \hD_{s}}(f(\vx)\neq
f^\prime(\vx))$,
$\hE_{t}(f,f^\prime) = P_{(\vx, y)\in \hD_{t}}(f(\vx)\neq
f^\prime(\vx))$,
and $\hE_{g}(f,f^\prime) = P_{(\vx, y)\in \hD_{g}}(f(\vx)\neq
f^\prime(\vx))$
be the disagreement between models $f$ and $f^\prime$ on data
distribution $\hD_{s}$, $\hD_{t}$, and $\hD_{g}$, respectively.
When $f^\prime$ is an oracle (i.e., $f^\prime(\vx)$ is the ground-truth label $y$ of $\vx$), 
we simply
write
$\hE_{s}(f) = P_{(\vx, y)\in \hD_{s}}(f(\vx)\neq  y)$, $\hE_{t}(f) = P_{(\vx, y)\in \hD_{t}}(f(\vx)\neq  y)$,
and $\hE_{g}(f) = P_{(\vx, y)\in \hD_{g}}(f(\vx)\neq  y)$
(the expected risks of $f$ on 
$\hD_{s}$, $\hD_{t}$, and $\hD_{g}$, respectively).
The corresponding empirical risks
are  
$\hat \hE_s(f)=\frac{1}{{{N_s}}}\sum_{(\vx_i,y_i) \in \hD_s}{\mathbb{I}({f(\vx_i) \neq y_i})}$;  $\hat \hE_t(f)=\frac{1}{{{N_t}}}\sum_{(\vx_i,y_i) \in \hD_t}{\mathbb{I}({f(\vx_i)\neq y_i})}$ and $\hat \hE_s(f)=\frac{1}{{\hat{N_s}}}\sum_{(\vx_i,y_i) \in \hD_{\hat{s}}} {\mathbb{I}({f(\vx_i)\neq y_i})}$, where $\mathbb{I}(\cdot)$ is the indicator function.

\begin{proposition}
\label{unpperboundProof}
Let the VC dimension 
of the hypothesis space $\mathcal{H}$ be $V$ for classifier model $\hat{f}$.
Given $f^\star_s$ trained from the source domain and $\hat{f}$ in
(\ref{eqn:final_UDA}), for $\delta > 0$, with the probability $1-2\delta$, the
expected risk of $\hE_t(\hat {f})$ satisfies 
\begin{align}
\label{eqn:target_bound}
\hE_t(\hat {f}) \leq & \eta \bigg( {\hat \hE_s}\left( f^\star_s \right) + \frac{1}{2}{d_{{\rm{{\cal H}}}\Delta {\rm{{\cal H}}}}}(\hD_s,\hD_t) + C\bigg) + \varepsilon(\delta, V, N_{\hat s}) \nonumber \\
& + (1-\eta) \bigg(\hat \hE_g(f^\star_s) + \frac{1}{2} d_{{\rm{{\cal H}}}\Delta {\rm{{\cal H}}}}(\hD_g, \hD_t) + C \bigg),
\end{align}
where $\eta=N_s / N_{\hat s}$, $C=4\sqrt{\frac{2V\log (2N_{\hat s})+\log \frac{2}{\delta}}{N_{\hat s}}}$, $\varepsilon(\delta, V, N_{\hat s})=\sqrt{\frac{1}{2N_{\hat s}}\ln
\frac{2V}{\delta}}$,
and ${d_{\mathcal{H} \Delta\mathcal{H}} }(\hD, \hD')$ is the empirical $\mathcal{H} \Delta
\mathcal{H} $-distance between $\hD$ and $\hD'$
\cite{ben2010theory}.
\end{proposition}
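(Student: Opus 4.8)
The plan is to reduce the statement to the classical $\hH\Delta\hH$-divergence domain-adaptation bound of Ben-David et al.~\cite{ben2010theory}, applied with the augmented source domain $\hD_{\hat s}=\hD_s\cup\hD_g$ in the role of the ``source'' and $\hD_t$ as the target, and then to unfold the mixture structure of $\hD_{\hat s}$. Viewing $\hD_{\hat s}$ as the mixture $\eta\hD_s+(1-\eta)\hD_g$ with $\eta=N_s/N_{\hat s}$, two elementary facts do most of the work. First, the $0$--$1$ risk is linear in the mixture weights, so $\hE_{\hat s}(h)=\eta\hE_s(h)+(1-\eta)\hE_g(h)$, and likewise $\hat\hE_{\hat s}(h)=\eta\hat\hE_s(h)+(1-\eta)\hat\hE_g(h)$, for every $h\in\hH$. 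Second, the $\hH\Delta\hH$-distance is subadditive under mixtures: since the disagreement probability of any $g\in\hH\Delta\hH$ obeys $P_{\hD_{\hat s}}(g)=\eta P_{\hD_s}(g)+(1-\eta)P_{\hD_g}(g)$, the triangle inequality applied to $|P_{\hD_{\hat s}}(g)-P_{\hD_t}(g)|$ followed by a supremum over $g$ gives $d_{\hH\Delta\hH}(\hD_{\hat s},\hD_t)\le\eta\,d_{\hH\Delta\hH}(\hD_s,\hD_t)+(1-\eta)\,d_{\hH\Delta\hH}(\hD_g,\hD_t)$. Substituting both into the Ben-David inequality for $\hE_t(\hat f)$ already produces the two $\eta$- and $(1-\eta)$-weighted divergence terms of (\ref{eqn:target_bound}).

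It then remains to convert the expected source risk $\hE_{\hat s}(\hat f)$ in the Ben-David bound into the empirical risk of the source-trained reference model $f^\star_s$. I would first invoke a VC uniform-convergence bound over $\hH$ with the $N_{\hat s}$ augmented samples: on an event of probability at least $1-\delta$, $\hE_{\hat s}(h)\le\hat\hE_{\hat s}(h)+C$ for all $h\in\hH$, with $C=4\sqrt{\frac{2V\log(2N_{\hat s})+\log\frac{2}{\delta}}{N_{\hat s}}}$ --- exactly the constant in the statement. Since $\hat f$ is, up to the domain-bridging regularizer $\beta\hR$ and the surrogate-versus-$0$--$1$ gap, the empirical risk minimizer over $\hD_{\hat s}$ in (\ref{eqn:final_UDA}), it does no worse than $f^\star_s$ on the augmented empirical risk, i.e. $\hat\hE_{\hat s}(\hat f)\le\hat\hE_{\hat s}(f^\star_s)=\eta\,\hat\hE_s(f^\star_s)+(1-\eta)\,\hat\hE_g(f^\star_s)$ by the linearity identity above. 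Chaining these with the divergence subadditivity and writing $C=\eta C+(1-\eta)C$ regroups the bound into $\eta\big(\hat\hE_s(f^\star_s)+\tfrac12 d_{\hH\Delta\hH}(\hD_s,\hD_t)+C\big)+(1-\eta)\big(\hat\hE_g(f^\star_s)+\tfrac12 d_{\hH\Delta\hH}(\hD_g,\hD_t)+C\big)$; the remaining additive term $\varepsilon(\delta,V,N_{\hat s})$ then comes from a second, Hoeffding-type concentration estimate (accounting for the ideal-joint-hypothesis contribution of the Ben-David bound) on an event of probability at least $1-\delta$, and a union bound over the two events gives the overall confidence $1-2\delta$.

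I expect the main obstacle to be precisely the term I glossed over: the ideal-joint-hypothesis error of the Ben-David bound for the pair $(\hD_{\hat s},\hD_t)$. Bounding $\min_{h\in\hH}[\hE_{\hat s}(h)+\hE_t(h)]$ by evaluating at $f^\star_s$ naively reintroduces an $\hE_t(f^\star_s)$ contribution absent from (\ref{eqn:target_bound}), so one must argue it is negligible --- or already dominated by the divergence and concentration terms --- which is where an implicit low-noise/closeness assumption slips in, and where the correct identification with $\varepsilon(\delta,V,N_{\hat s})$ has to be checked carefully. A secondary, more mechanical difficulty is the mismatch between the cross-entropy loss that actually defines $\hat f$ in (\ref{eqn:final_UDA}) and the $0$--$1$ risks $\hE_\bullet$ of the proposition, together with the $\beta\hR(\hD_{\hat s},\hD_t)$ regularizer in the objective: the ERM-optimality step $\hat\hE_{\hat s}(\hat f)\le\hat\hE_{\hat s}(f^\star_s)$ must be justified either by noting that $\hat f$ does not increase $\hR$ relative to $f^\star_s$ or by carrying an extra slack. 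By contrast, the mixture identities for the risk and the divergence, and the VC concentration bookkeeping, are routine.
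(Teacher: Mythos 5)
Your proposal follows essentially the same route as the paper's proof: invoke the Ben-David $\hH\Delta\hH$ bound with the augmented source $\hD_{\hat s}$ playing the role of the source, split both the empirical risk and the divergence along the mixture $\eta\,\hD_s+(1-\eta)\,\hD_g$, replace $\hat\hE_{\hat s}(\hat f)$ by $\hat\hE_{\hat s}(f^\star_s)$ via the same optimality argument for $\hat f$ in (\ref{eqn:final_UDA}), and union-bound the two $\delta$-events to get confidence $1-2\delta$. The only deviations are bookkeeping: in the paper $C$ is the divergence-estimation constant already contained in Theorem 2 of \cite{ben2010theory}, $\varepsilon(\delta,V,N_{\hat s})$ is the Hoeffding-type gap used for $\hE_{\hat s}(\hat f)\le\hat\hE_{\hat s}(\hat f)+\varepsilon(\delta,V,N_{\hat s})$ (not an ideal-joint-hypothesis term), and the obstacle you flag never arises because the paper simply invokes the bound without that term and asserts the ERM-comparison step exactly as you propose.
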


\begin{proof}
Using Theorem 2 in \cite{ben2010theory}, for any
$\hat{f},\tilde f \in \hH$, with probability $1- \delta$, 
\[ {\hE_t}(\hat f) 
\leq {\hE_{\hat{s}}}( {\hat f}) + \frac{1}{2}{d_{{\rm{{\cal H}}}\Delta
{\rm{{\cal H}}}}}(\hD_{\hat s},\hD_t) + C. \]

${d_{\mathcal{H} \Delta\mathcal{H}}}( {\hD_{\hat s},\hD_t} )$, the $\mathcal{H} \Delta \mathcal{H}$-distance between ${\hD_{\hat s}}$ and ${\hD_{t}}$, satisfies
\begin{eqnarray}
\lefteqn{\frac{1}{2}{d_{\mathcal{H} \Delta\mathcal{H}} }( {\hD_{\hat s},\hD_t} )} \nonumber\\
&=& \sup_{\hat f,\tilde f} |\hE_t(\hat f,\tilde f)-\hE_{\hat{s}}(\hat f,\tilde f)| \nonumber \\ 
&= & \sup_{\hat f,\tilde f}(|\eta(\hE_t(\hat f,\tilde f)-\hE_s(\hat f,\tilde f)) \nonumber \\
& &+ (1-\eta)(\hE_t(\hat f,\tilde f)-\hE_g(\hat f,\tilde f))|) \nonumber \\
&\leq  & \frac{1}{2}\eta{d_{\mathcal{H} \Delta\mathcal{H}} }\left( {\hD_s,\!\hD_t} \right) \!+\! \frac{1}{2}(1\!-\!\eta){d_{\mathcal{H} \Delta\mathcal{H}} }\left( {\hD_g,\hD_t} \right). \label{proof_inequality_1}
\end{eqnarray}

\begin{figure*}[!tbh]
\centering
\subfigure[\textit{Office-31}.]{
    \includegraphics[width=0.19\textwidth]{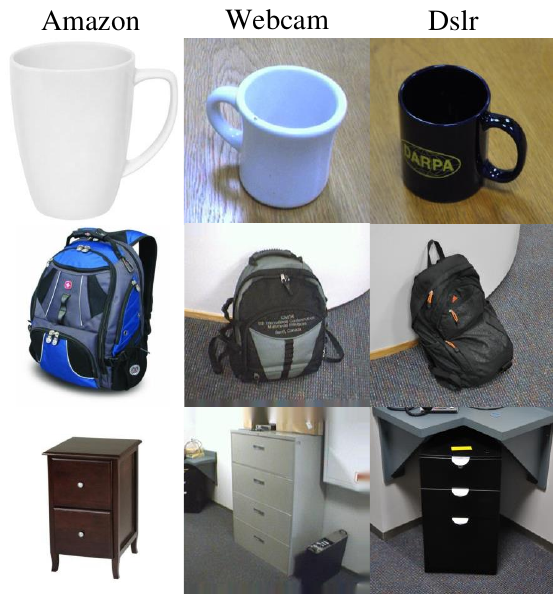}
    \label{fig:example_Office_31}
} 
\subfigure[\textit{Office-Home}.]{
    \includegraphics[width=0.25\textwidth]{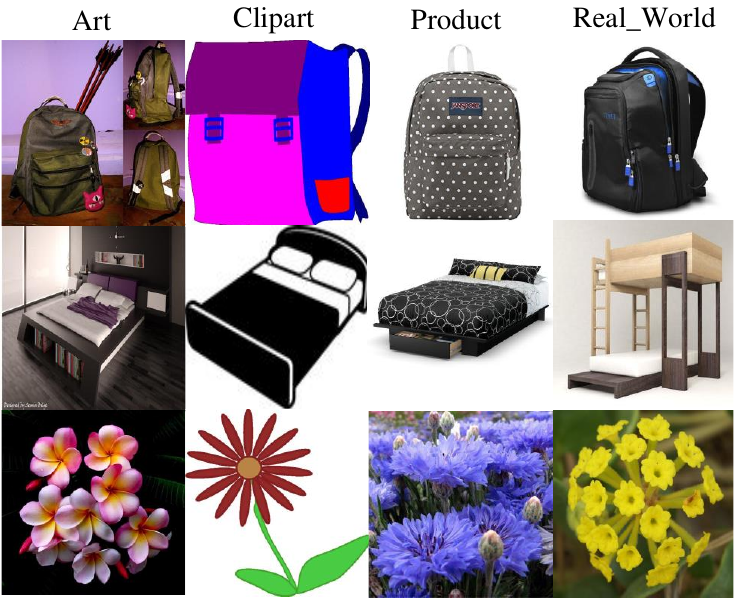}
    \label{fig:example_Office_Home}
} 
\subfigure[\textit{VisDA-2017}.]{
    \includegraphics[width=0.128\textwidth]{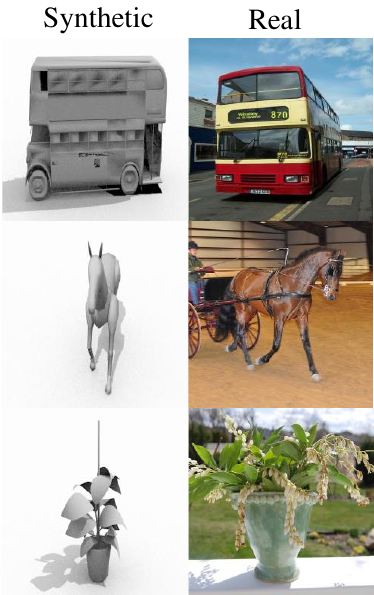}
    \label{fig:example_VisDA}
} 
\subfigure[\textit{miniDomainNet.}]{
    \includegraphics[width=0.25\textwidth]{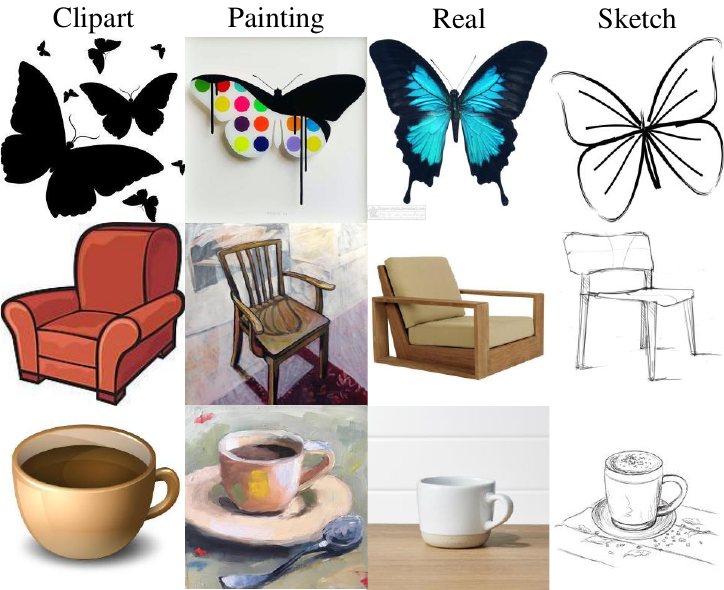}
    \label{fig:example_miniDomainNet}
}
\vskip -.1in
\caption{Example images from \textit{Office-31}, \textit{Office-Home}, \textit{VisDA-2017}, and \textit{miniDomainNet} datasets.}
\label{example}
\end{figure*}

Combining those two inequalities gives
\begin{eqnarray}
\lefteqn{{\hE_t}(\hat f)} \nonumber\\
&\leq& {\hE_{\hat{s}}}( {\hat f} ) + \frac{1}{2}{d_{{\rm{{\cal H}}}\Delta {\rm{{\cal H}}}}}(\hD_{\hat s},\hD_t ) + C \nonumber \\
&=& {\hE_{\hat s}}( {\hat f} ) + {\hat \hE_{\hat s}}\left(f^\star_s \right) - {\hat \hE_{\hat s}}\left( f^\star_s \right) + \frac{1}{2}{d_{{\rm{{\cal H}}}\Delta {\rm{{\cal H}}}}}(\hD_{\hat s},\hD_t ) + C \nonumber \\ 
&\leq & {\hat \hE_{\hat s}}( {\hat f} ) + \varepsilon(\delta, V, N_{\hat s}) + {\hat \hE_{\hat s}}\left(f^\star_s \right) - {\hat \hE_{\hat s}}\left( f^\star_s \right) \nonumber \\
&&+ \frac{1}{2}{d_{{\rm{{\cal H}}}\Delta {\rm{{\cal H}}}}}(\hD_{\hat s},\hD_t ) + C \label{proof_inequality_2} \\ 
&\leq & {\hat \hE_{\hat s}}\left( f^\star_s \right) + \frac{1}{2}{d_{{\rm{{\cal H}}}\Delta {\rm{{\cal H}}}}}(\hD_{\hat s}, \hD_t) + C + \varepsilon(\delta, V, N_{\hat s}) \label{temp:askdlll} \\
&\leq & \eta {\hat \hE_s}\left( f^\star_s \right) + (1-\eta){\hat \hE_g}\left( f^\star_s \right) + \frac{1}{2}\big[ \eta {d_{{\rm{{\cal H}}}\Delta {\rm{{\cal H}}}}}( \hD_{s}, \hD_t ) \nonumber \\ 
&& + \left( {1 - \eta } \right){d_{{\rm{{\cal H}}}\Delta {\rm{{\cal H}}}}}(\hD_g,\hD_t) \big] + C +\varepsilon(\delta, V, N_{\hat s}) \label{temp:askdlllsk} \\ 
&=& \eta \bigg( {\hat \hE_s}\left( f^\star_s \right) + \frac{1}{2}{d_{{\rm{{\cal H}}}\Delta {\rm{{\cal H}}}}}(\hD_s,\hD_t) + C \bigg) + \varepsilon(\delta, V, N_{\hat s}) \nonumber \\
&& + (1-\eta) \bigg(\hat  \hE_g(f^\star_s) + \frac{1}{2} d_{{\rm{{\cal H}}}\Delta {\rm{{\cal H}}}}(\hD_g, \hD_t) + C  \bigg). \nonumber
\end{eqnarray}
Inequality \eqref{proof_inequality_2} holds with probability $1-\delta$ (Eq. (2.1) in \cite{abu2012learning}).
Inequality \eqref{temp:askdlll} holds since $\hat{f}$ is optimized by the UDA methods based on the augmented source domain and target domain, while $f^\star_s$ is optimized only on the source domain, leading to $ {\hat \hE_{\hat s}}( {\hat f} ) \leq {\hat \hE_{\hat{s}}}\left( f^\star_s \right)$, and inequality \eqref{temp:askdlllsk} holds due to ${\hat \hE_{\hat s}}( f^\star_s)=\eta {\hat \hE_s}\left( f^\star_s \right) + (1-\eta){\hat \hE_g}\left( f^\star_s \right)$ and inequality \eqref{proof_inequality_1}.
\end{proof}

Proposition \ref{unpperboundProof} shows that the expected risk of DACDM on the
target domain consists of two terms. The first one is determined by the empirical 
risk on the source domain 
and the distance between the source and target domains,
while the second term depends on the empirical risk on the generated domain and its distance
to the target domain.

\begin{table}[!t]
\centering
\caption{Statistics of the datasets used.}
\vskip -.1in
\begin{tabular}{c c c c c}
    \toprule
    & number of & number of & number of & number of \\
    dataset & \#images & classes & domains & tasks \\
    \midrule
    \textit{Office-31}  & $4,110$ & $31$  & $3$ & $6$ \\
    \textit{Office-Home}  & $15,500$ & $65$ & $4$ &  $12$ \\
    \textit{VisDA-2017}  & $207,785$ & $1$2 & $2$ & $1$ \\
    \textit{miniDomainNet}  & $140,006$ & $126$  & $4$ & $12$ \\
    \bottomrule
\end{tabular}
\label{table:ds} 
\end{table}

\begin{table}[!t]\small
\centering
\caption{Accuracy (\%) on the \textit{Office-31} dataset with \textit{ResNet-50}. 
$\uparrow$ denotes the accuracy improvement brought by the DACDM framework over the corresponding baseline. The best is in \textbf{bold}. 
Results with $^*$ are from the original papers.
}
    \vskip -.1in
    \label{office31}
    \setlength{\tabcolsep}{1.5mm}{
    \resizebox{\columnwidth}{!}{
    \begin{tabular}{ccccccc @{\hskip 0.2in} c}
    \toprule  & A$\rightarrow$W        & D$\rightarrow$W        & W$\rightarrow$D        & A$\rightarrow$D        & D$\rightarrow$A        & W$\rightarrow$A        & Average \\
    \midrule
    ERM \cite{vapnik1999nature}  & 77.07 & 96.60 & 99.20 & 81.08 & 64.11 & 64.01 & 80.35 \\
    DANN \cite{ganin2016domain} & 89.85 & 97.95 & 99.90 & 83.26 & 73.28 & 73.75 & 86.33 \\
    AFN \cite{AFN}  & 91.82 & 98.77 & \textbf{100.00} & 95.12 & 72.43 & 70.71 & 88.14 \\
    CDAN \cite{long2018conditional} & 92.42 & 98.62 & \textbf{100.00}  & 91.44 & 74.61 & 72.80 & 88.32 \\
    BDG$^*$ \cite{yang2020bi}  & 93.60 & 99.00 & \textbf{100.00}  & 93.60 & 73.20 & 72.00 & 88.50 \\
    MDD \cite{MDD}  & 93.55 & 98.66 & \textbf{100.00}  & 93.92 & 75.29 & 73.95 & 89.23 \\
    SDAT \cite{rangwani2022closer} & 91.32 & 98.83 & \textbf{100.00} & 95.25 & 76.97 & 73.19 & 89.26 \\
    MSGD$^*$ \cite{xia2022maximum} & 95.50 & 99.20 & \textbf{100.00} & 95.60 & 77.30 & 77.00 & 90.80 \\
    \midrule
    MCC \cite{jin2020minimum}  & 94.09 & 98.32 & 99.67  & 94.25 & 75.89 & 75.46 & 89.61 \\
    MCC+DACDM  & 95.51 & 98.58 & 99.93  & 95.31 & 78.26 & \textbf{78.43} & 91.01 \\
    \midrule
    ELS \cite{zhang2023free}  & 93.84 & 98.78 & \textbf{100.00} & 95.78 & 77.72 & 75.13 & 90.21 \\
    ELS+DACDM  & \textbf{96.90} & \textbf{98.91} & \textbf{100.00} & \textbf{97.46} & \textbf{79.79} & 77.74 & \textbf{91.80} \\
    \bottomrule
\end{tabular}}}
\end{table}

\begin{table*}[!tbph]\small
\centering
    \caption{Accuracy (\%) on the \textit{Office-Home} dataset. $\uparrow$ denotes the accuracy improvement brought by the DACDM framework over the corresponding baseline (i.e., MCC or ELS). The best is \textbf{in bold}. Results with $^*$ are from the original papers.}
    \vskip -.1in
    \label{officehome}
    \setlength{\tabcolsep}{1mm}{
    \begin{tabular}{ccccccccccccc @{\hskip 0.05in} c}
    \toprule  & Ar$\rightarrow$Cl   & Ar$\rightarrow$Pr  & Ar$\rightarrow$Rw & Cl$\rightarrow$Ar & Cl$\rightarrow$Pr & Cl$\rightarrow$Rw & Pr$\rightarrow$Ar & Pr$\rightarrow$Cl & Pr$\rightarrow$Rw & Rw$\rightarrow$Ar & Rw$\rightarrow$Cl & Rw$\rightarrow$Pr & Average \\
    \midrule
    ERM \cite{vapnik1999nature}  & 44.06 & 67.12 & 74.26 & 53.26 & 61.96 & 64.54 & 51.91 & 38.90 & 72.94 & 64.51 & 43.84 & 75.39 & 59.39 \\
    DANN \cite{ganin2016domain} & 52.53 & 62.57 & 73.20 & 56.89 & 67.02 & 68.34 & 58.37 & 54.14 & 78.31 & 70.78 & 60.76 & 80.57 & 65.29   \\
    AFN \cite{AFN} & 52.58 & 72.42 & 76.96 & 64.90 & 71.14 & 72.91 & 64.08 & 51.29 & 77.83 & 72.21 & 57.46 & 82.09 & 67.99 \\
    CDAN \cite{long2018conditional} & 54.21 & 72.18 & 78.29 & 61.97 & 71.43 & 72.39 & 62.96 & 55.68 & 80.68 & 74.71 & 61.22 & 83.68 & 69.12 \\
    BDG$^*$ \cite{yang2020bi} & 51.50 & 73.40 & 78.70 & 65.30 & 71.50 & 73.70 & 65.10 & 49.70 & 81.10 & 74.60 & 55.10 & 84.80 & 68.70 \\
    MDD \cite{MDD} & 56.37 & 75.53 & 79.17 & 62.95 & 73.21 & 73.55 & 62.56 & 54.86 & 79.49 & 73.84 & 61.45 & 84.06 & 69.75 \\
    SDAT \cite{rangwani2022closer} & 58.20 & 77.46 & 81.35 & 66.06 & 76.45 & 76.41 & 63.70 & 56.69 & 82.49 & 76.02 & 62.09 & 85.24 & 71.85 \\
    MSGD$^*$ \cite{xia2022maximum} & 58.70 & 76.90 & 78.90 & 70.10 & 76.20 & 76.60 & \textbf{69.00} & 57.20 & 82.30 & 74.90 & 62.70 & 84.50 & 72.40 \\
    \midrule
    MCC \cite{jin2020minimum}  & 56.83 & 79.81 & 82.66 & 67.80 & 77.02 & 77.82 & 66.98 & 55.43 & 81.79 & 73.95 & 61.41 & 85.44 & 72.24 \\
    MCC+DACDM & 58.23 & \textbf{80.33} & \textbf{82.91} & \textbf{70.14} & 79.15 & \textbf{81.36} & 68.49 & 57.75 & \textbf{83.44} & 74.18 & 63.81 & \textbf{85.67} & \textbf{73.71}\\
    \midrule
    ELS \cite{zhang2023free}  & 57.79 & 77.65 & 81.62 & 66.59 & 76.74 & 76.43 & 62.69 & 56.69 & 82.12 &  75.63 & 62.85 & 85.35 & 71.84  \\
    ELS+DACDM      & \textbf{60.35} & 78.81 & 82.74 & 69.59 & \textbf{80.53} & 79.55 & 65.16 & \textbf{58.26} & 83.11 & \textbf{75.81} & \textbf{64.18} & 85.55 & 73.64\\
    \bottomrule
\end{tabular}}
\end{table*}

\section{Experiments}
\label{sec:expt}

In this section, we empirically evaluate the proposed DACDM on a number of domain
adaptation datasets.

\subsection{Settings}

    \noindent
    \textbf{Datasets.}
Experiments are performed on four image classification benchmark datasets
(example images are shown in Fig.~\ref{example} and dataset statistics are in Table \ref{table:ds}):
(i) \textit{Office-31} \cite{saenko2010adapting}, 
which contains $4,110$ images from $31$ classes of three domains (Amazon (A), DSLR (D), and Webcam (W)).
Six transfer tasks (A$\rightarrow$W, D$\rightarrow$W, W$\rightarrow$D,  A$\rightarrow$D, D$\rightarrow$A, W$\rightarrow$A) are constructed.
(ii)
\textit{Office-Home} \cite{venkateswara2017deep}, 
which contains $15,500$ images from $65$ classes of four domains (Art (Ar), Clipart (Cl), Product (Pr), and Real-World (Rw)). All combinations of domain transfer are considered, leading to a total of $12$ transfer tasks.
(iii) \textit{VisDA-2017} \cite{peng2017visda}, which is a large-scale
synthetic-to-real dataset with $207,785$ images from $12$ classes of two
domains (Synthetic and Real).
Following \cite{jin2020minimum, MDD, rangwani2022closer}, we consider the transfer
task Synthetic $\to$ Real.
(iv) \textit{miniDomainNet} \cite{zhou2021domain},
which is a subset of DomainNet \cite{peng2019moment}, with $140,006$ images from
$126$ classes of four domains (Clipart (C), Painting (P), Real (R), Sketch (S)).
Following \cite{xie2023dirichletbased},
$12$ transfer tasks are constructed.

\noindent{\bf Baselines.} 
We compare with ERM \cite{vapnik1999nature} and a number of UDA methods, including (i) discrepancy-based methods: AFN \cite{AFN}, MDD \cite{MDD}, MSGD \cite{xia2022maximum}, MCC \cite{jin2020minimum}, and (ii) adversarial-based methods:
DANN \cite{ganin2016domain}, CDAN \cite{long2018conditional}, BDG \cite{yang2020bi}, SDAT \cite{rangwani2022closer}, ELS \cite{zhang2023free}.
We combine the proposed DACDM with 
the state-of-the-art in these two categories
(namely, MCC and ELS).
Note that 
the same UDA method 
is used
in \textit{label-conditioned training} and \textit{domain-guided generation},
though in general they can be different.

\noindent\textbf{Implementation Details.} 
To initialize $\vepsilon_\vtheta$,
we use the CDPM in \cite{nichol2021improved}, which is
pretrained on \textit{ImageNet} and has
a U-Net \cite{ronneberger2015u} architecture.
The resolutions
of generated images are $128\times128$ for the \textit{miniDomainNet} dataset, and $256\times256$ for the other three datasets. 
And we use an image size of 96$\times$96 in \textit{miniDomainNet} dataset \cite{zhou2021domain}, and $256\times256$ for the other three datasets \cite{jin2020minimum, zhang2023free}.
The images in the source and target domains are processed to the same resolution as the generated images.
For each class, $200$, $200$, $2,000$, and $200$ images are generated for \textit{Office-31}, \textit{Office-Home}, \textit{VisDA-2017}, and \textit{miniDomainNet}, respectively.
For the backbones of the UDA models, following \cite{rangwani2022closer,zhou2021domain}, \textit{ResNet-50} \cite{he2016deep} is used for both \textit{Office-31} and \textit{Office-Home} datasets,  while \textit{ResNet-101} and \textit{ResNet-18} are used for the \textit{VisDA-2017} and \textit{miniDomainNet} datasets, respectively.
The learning rate scheduler follows \cite{ganin2016domain}.
For ELS+DACDM, the initial learning rate is $0.002$ for \textit{Office-31} and \textit{VisDA-2017}, and $0.01$ for the other two datasets.
As for MCC+DACDM, 
the initial learning rate is $0.002$ for \textit{VisDA-2017}, and $0.005$ for the other three datasets.
All experiments are run on an NVIDIA V100 GPU.

\subsection{Results}

Table \ref{office31} shows the accuracies on six transfer tasks of \textit{Office-31}.
As can be seen, ELS+DACDM performs the best on average.  Furthermore, MCC+DACDM
outperforms MCC, showing the effectiveness of DACDM.  Specifically, ELS+DACDM
achieves an average accuracy of $91.80\%$, which surpasses ELS by $1.59\%$.
MCC+DACDM achieves an average accuracy of $91.01\%$, showing DACDM brings an
improvement of $1.40\%$ over MCC. On the three challenging tasks (A$\to$W,
W$\to$A, and D$\to$A), both MCC+DACDM and ELS+DACDM perform better than the baselines by a large margin.

Table \ref{officehome} shows the accuracies on the $12$ tasks in \textit{Office-Home}.
As can be seen, MCC+DACDM achieves the highest average accuracy.
Furthermore, ELS+DACDM outperforms ELS, showing the effectiveness of DACDM.

\begin{table*}[!tbph]\small
\centering
\caption{Accuracy (\%) on the \textit{VisDA-2017} dataset using \textit{ResNet-101}. 
$\uparrow$ denotes the accuracy improvement brought by the DACDM framework over the corresponding baseline. The best is \textbf{in bold}. Results with $^*$ are from the original papers.}
\vskip -.1in
\label{visda}
\setlength{\tabcolsep}{1.5mm}{
\begin{tabular}{ccccccccccccc @{\hskip 0.2in} c}
\toprule 
& aero & bicycle & bus & car & horse & knife & motor & person & plant & skate & train & truck & mean \\
\midrule 
ERM \cite{vapnik1999nature}  & 81.71 & 22.46 & 54.08 & \textbf{76.21} & 74.83 & 10.69 & 83.81 & 18.71 & 80.88 & 28.66 & 79.66 & 5.98	& 51.47  \\
DANN \cite{ganin2016domain} & 94.75 & 73.47 & 83.46 & 47.91 & 87.00 & 88.30 & 88.47 & 77.18 & 88.16 & 90.05 & 87.21 & 42.26 & 79.02   \\
AFN \cite{AFN} & 93.13 & 54.76 & 81.03 & 69.74 & 92.36 & 75.88 & 92.11 & 73.83 & 93.16 & 55.55 & \textbf{90.48} & 23.63	& 74.64 \\
CDAN \cite{long2018conditional} & 94.55 & 74.41 & 82.22 & 58.92 & 90.56 & 96.22 & 89.71 & 78.90 & 86.11 & 89.06 & 84.81 & 43.42 & 80.74 \\
MDD \cite{MDD} & 92.68 & 65.26 & 82.29 & 66.78 & 91.68 & 92.09 & \textbf{93.18} & 79.67 & 92.12 & 84.95 & 83.85 & 48.66	& 81.10 \\
SDAT \cite{rangwani2022closer} & 94.51 & 83.56 & 74.28 & 65.78 & 93.00 & 95.83 & 89.61 & 80.04 & 90.86 & 91.47 &  84.95 & 54.93 & 83.23  \\
MSGD$^*$ \cite{xia2022maximum} & \textbf{97.50} & 83.40 & \textbf{84.40} & 69.40 & \textbf{95.90} & 94.10 & 90.90 & 75.50 & \textbf{95.50} & \textbf{94.60} & 88.10 & 44.90 & 84.60 \\
\midrule
MCC \cite{jin2020minimum}  & 95.26 & 86.14 & 77.12 & 69.98 & 92.83 & 94.84 & 86.52 & 77.78 & 90.26 & 90.98 & 85.68 & 52.52 & 83.32 \\
MCC+DACDM   & 96.43  & \textbf{87.28} & 83.16 & 74.37 & 94.59 & 96.13 & 88.60 & 81.90 & 92.98 & 94.45 & 87.26 & \textbf{61.56} & \textbf{86.56} \\
\midrule
ELS \cite{zhang2023free} & 94.76 & 83.38 & 75.44 & 66.45 & 93.16 & 95.14 & 89.09 & 80.13 & 90.77 & 91.06 & 84.09 & 57.36 & 83.40 \\
ELS+DACDM   & 96.20  & 84.79 & 83.15 & 73.28 & 94.76 & \textbf{96.58} & 90.99 & \textbf{82.21} & 92.98 & 93.37 & 87.49 & 59.70 & 86.29 \\
\bottomrule
\end{tabular}}
\end{table*}

\begin{table*}[!tbph]\small
\centering
\caption{Accuracy (\%) on the \textit{miniDomainNet} dataset using \textit{ResNet-18}. $\uparrow$ denotes the accuracy improvement brought by the DACDM framework over the corresponding baseline. The best is \textbf{in bold}.}
\vskip -.1in
\label{miniDomainNet}
\setlength{\tabcolsep}{1.5mm}{
\begin{tabular}{ccccccccccccc @{\hskip 0.2in} c}
\toprule  & C$\rightarrow$P   & C$\rightarrow$R  & C$\rightarrow$S & P$\rightarrow$C & P$\rightarrow$R & P$\rightarrow$S & R$\rightarrow$C & R$\rightarrow$P & R$\rightarrow$S & S$\rightarrow$C & S$\rightarrow$P & S$\rightarrow$R & Average \\
\midrule
ERM \cite{vapnik1999nature}  & 39.48 & 53.27 & 42.93 & 49.55 & 68.18 & 41.99 & 49.30 & 55.52 & 37.35 & 54.60 & 45.33 & 53.08 & 49.22 \\
DANN \cite{ganin2016domain} & 45.94 & 56.12 & 49.40 & 50.72 & 65.61 & 50.07 & 55.15 & 60.55 & 49.95 & 58.54 & 54.64 & 58.99 & 54.64 \\
AFN \cite{AFN} & 49.23 & 60.11 & 51.11 & 55.60 & 70.59 & 51.78 & 55.84 & 60.41 & 47.46 & 60.69 & 56.36 & 62.28 & 56.79 \\
CDAN \cite{long2018conditional} & 47.99 & 58.50 & 51.17 & 56.36 & 68.71 & 53.01 & 61.15 & 62.85 & 53.44 & 60.89 & 55.90 & 60.88 & 57.57 \\
MDD \cite{MDD} & 48.53 & 61.75 & 52.32 & 59.74 & 70.62 & 55.43 & 62.18 & 62.22 & 54.04 & 63.07 & 58.55 & 64.50 & 59.41 \\
SDAT \cite{rangwani2022closer} & 50.97 & 62.42 & 53.91 & 60.57 & 69.97 & 55.85 & 64.39 & 64.83 & 55.86 & 64.07 & 59.43 & 64.28 & 60.55 \\
\midrule
MCC \cite{jin2020minimum}  & 51.95 & 67.73 & 52.66 & 60.96 & 76.61 & 54.67 & 64.15 & 64.02 & 50.34 & 63.64 & 59.68 & 69.78 & 61.35 \\
MCC+DACDM & 55.28 & \textbf{70.21} & 54.42 & 63.25 & \textbf{77.05} & 55.74 & \textbf{65.16} & \textbf{65.01} & 51.96 & 65.76 & 60.83 & \textbf{71.09} & 62.98 \\
\midrule
ELS \cite{zhang2023free}  & 50.11 & 61.45 & 53.02 & 60.77 & 70.61 & 56.04 & 62.43 & 64.16 & 54.89 &  63.93 & 59.19 & 64.47 & 60.09 \\
ELS+DACDM      & \textbf{56.14} & 68.52 & \textbf{55.51} & \textbf{64.96} & 73.44 & \textbf{58.23} & 64.67 & 64.36 & \textbf{57.66} & \textbf{67.23} & \textbf{62.56} & 69.90 & \textbf{63.60} \\
\bottomrule
\end{tabular}}
\end{table*}

\begin{figure*}[!tbph]
\centering
\subfigure[Real source samples.]{
    \includegraphics[width=0.18\textwidth]{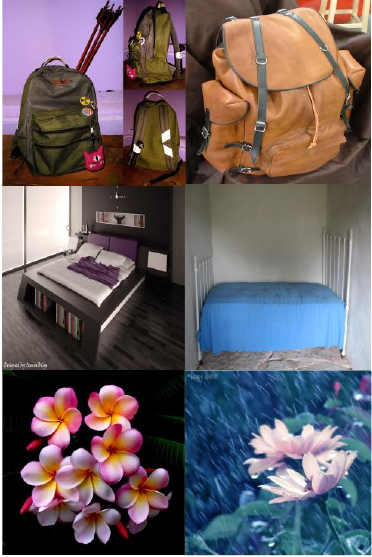}
    \label{fig:Office_home_1}
} \vrule
\subfigure[Real target samples.]{
    \includegraphics[width=0.18\textwidth]{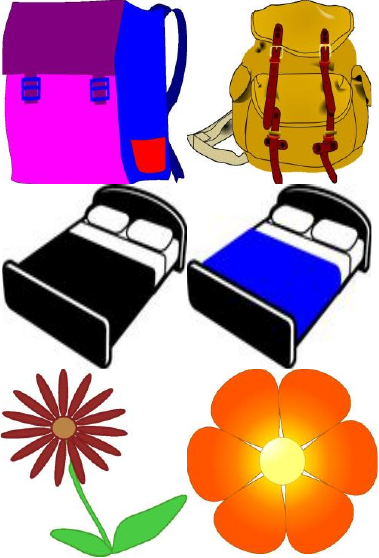}
    \label{fig:Office_home_2}
} \vrule
\subfigure[Generated target samples.]{
    \includegraphics[width=0.53\textwidth]{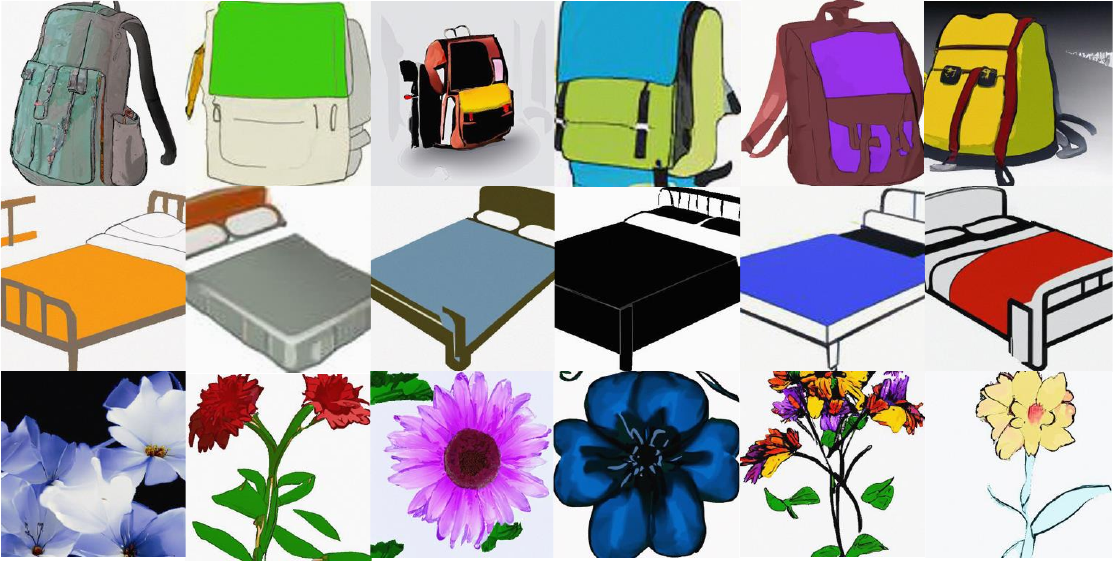}
    \label{fig:Office_home_3}
}
\caption{Real and generated images for the transfer task Ar$\rightarrow$Cl on the \textit{Office-Home} dataset.}
\label{generate1}
\end{figure*}

Table \ref{visda} shows the accuracies on \textit{VisDA-2017}.
As can be seen, 
the proposed DACDM is beneficial to both MCC and ELS. Specifically, DACDM brings noticeable improvements ($>+1\%$) to both MCC and ELS in all classes, demonstrating its effectiveness.

Table \ref{miniDomainNet} shows the accuracy on $12$ tasks in \textit{miniDomainNet}.
Again,
DACDM brings a large improvement to both MCC and ELS in average accuracy.
Moreover, DACDM is beneficial to MCC and ELS on all tasks, demonstrating its effectiveness.

\subsection{Analysis on Generated Samples}
\label{Analyses on Generated Results}

\begin{figure*}[t]
\centering
\subfigure[Real source samples.]{
    \includegraphics[width=0.176\textwidth]{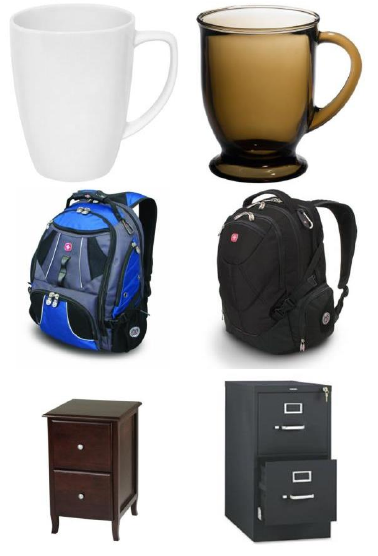}
    \label{fig:Office31_1}
} \vrule
\subfigure[Real target samples]{
    \includegraphics[width=0.176\textwidth]{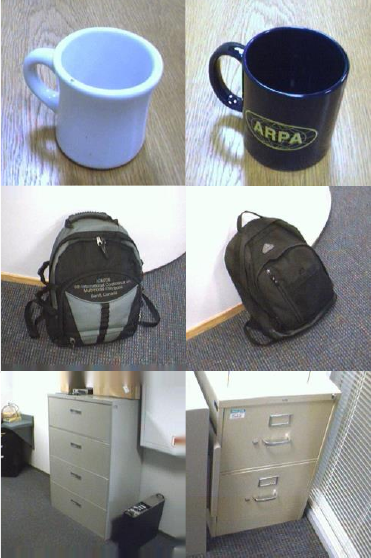}
    \label{fig:Office31_2}
} \hspace{-2mm}  \vrule
\subfigure[Generated target samples.]{
    \includegraphics[width=0.53\textwidth]{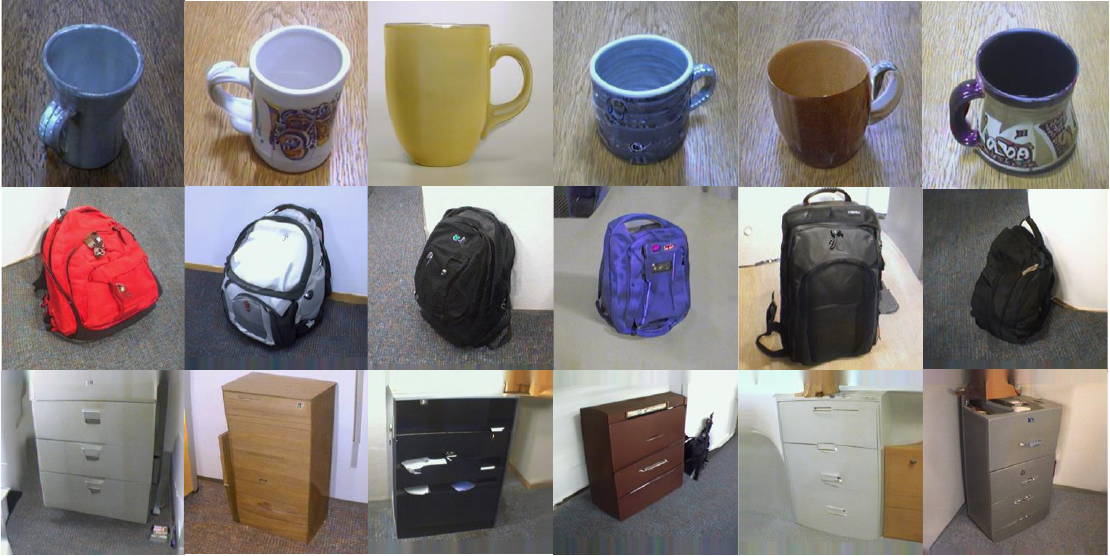}
    \label{fig:Office31_3}
}
\caption{Real and generated images for the transfer task A$\rightarrow$W on the \textit{Office-31} dataset.}
\label{generate}
\end{figure*}

\begin{figure}[!t]
\centering
\!\!
\subfigure[Real source samples]{
    \includegraphics[width=0.23\textwidth]{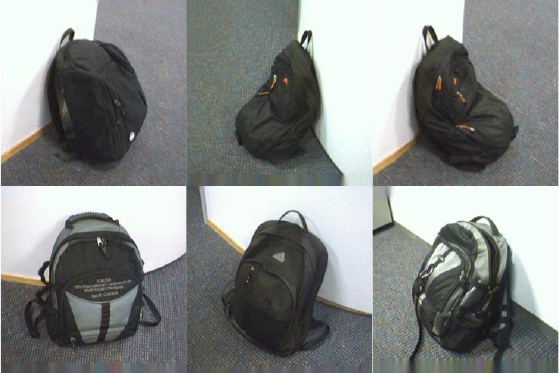}
    \label{fig:spurious1}
}
\subfigure[Generated target samples]{
    \includegraphics[width=0.23\textwidth]{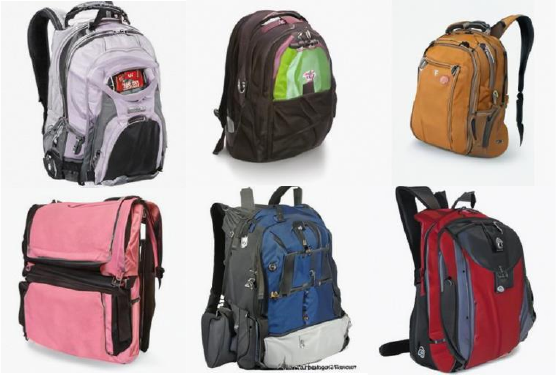}
    \label{fig:spurious2}
}\!\!
\vskip -.1in
\caption{Real source images and generated target images for the transfer task W$\rightarrow$A on the \textit{Office-31} dataset.}
\label{fig:spurious}
\end{figure}

\begin{figure}[!t]
\centering
\!\!
\subfigure[Real source samples.]{
    \includegraphics[width=0.087\textwidth]{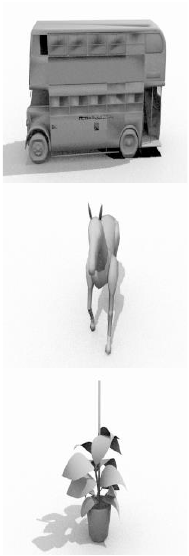}
    \label{fig:visda_1}
}
\subfigure[Real target samples.]{
    \includegraphics[width=0.087\textwidth]{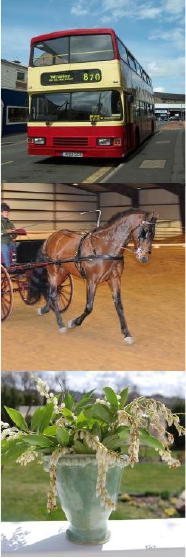}
    \label{fig:visda_2}
} 
\subfigure[Generated target samples.]{
    \includegraphics[width=0.262\textwidth]{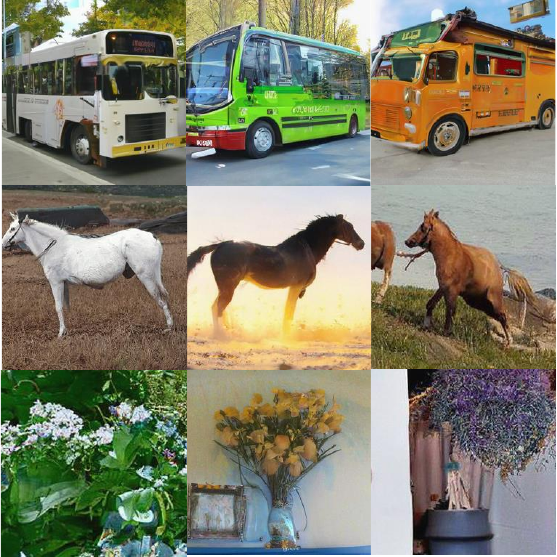}
    \label{fig:visda_3}
}\!\!
\vskip -.1in
\caption{Real and generated images for the transfer task Synthetic$\rightarrow$Real on the \textit{VisDA} dataset.}
\label{visda_generate}
\end{figure}

\begin{figure}[!ht]
\centering
\!\!
\subfigure[Real source samples.]{
    \includegraphics[width=0.087\textwidth]{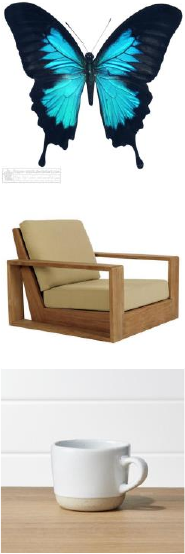}
    \label{fig:minidomainnet_1}
}
\subfigure[Real target samples.]{
    \includegraphics[width=0.087\textwidth]{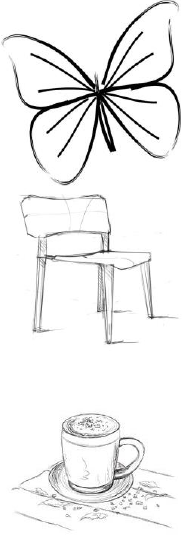}
    \label{fig:minidomainnet_2}
}
\subfigure[Generated target samples.]{
    \includegraphics[width=0.262\textwidth]{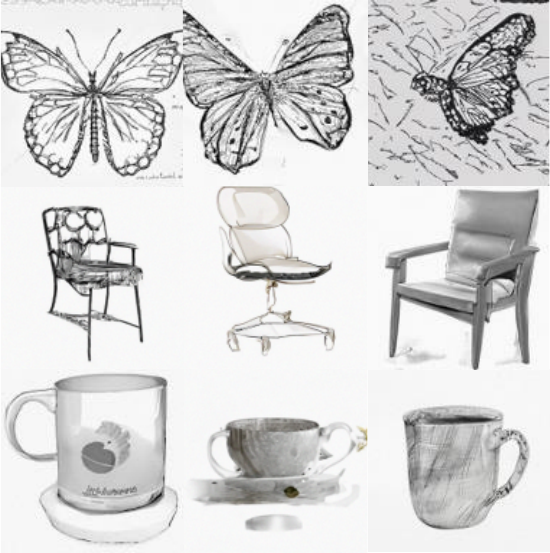}
    \label{fig:minidomainnet_3}
}\!\!
\vskip -.1in
\caption{Real and generated images for the transfer task R$\rightarrow$S on the \textit{miniDomainNet} dataset.}
\label{fig:minidomainnet}
\end{figure}

\begin{figure*}[!h]
\centering
\includegraphics[width=0.9\textwidth]{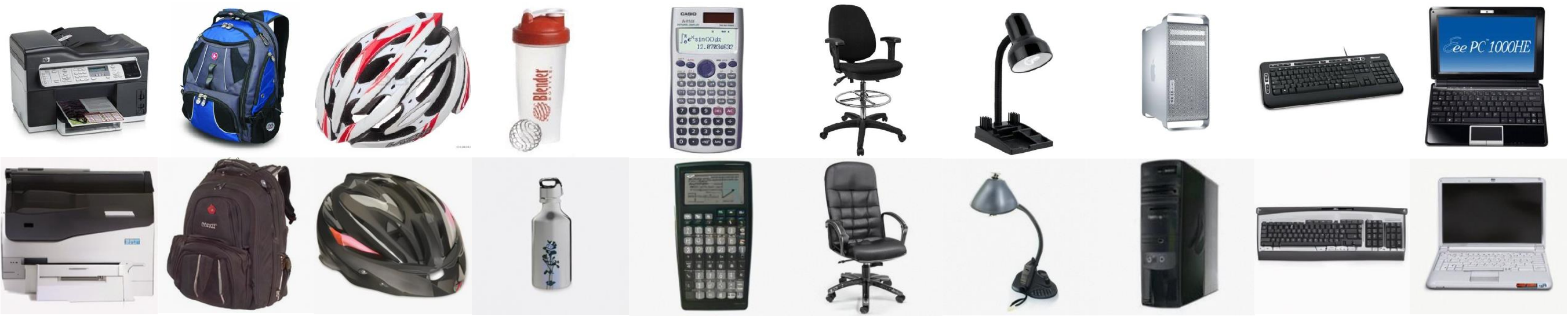}
\caption{Real source images (top) and generated source images (bottom) for the transfer task A$\rightarrow$W on the \textit{Office-31} dataset.}
\label{office31_img}
\end{figure*}

\begin{figure}[t]
\centering
\subfigure[A$\rightarrow$W]{
    \includegraphics[width=0.141\textwidth]{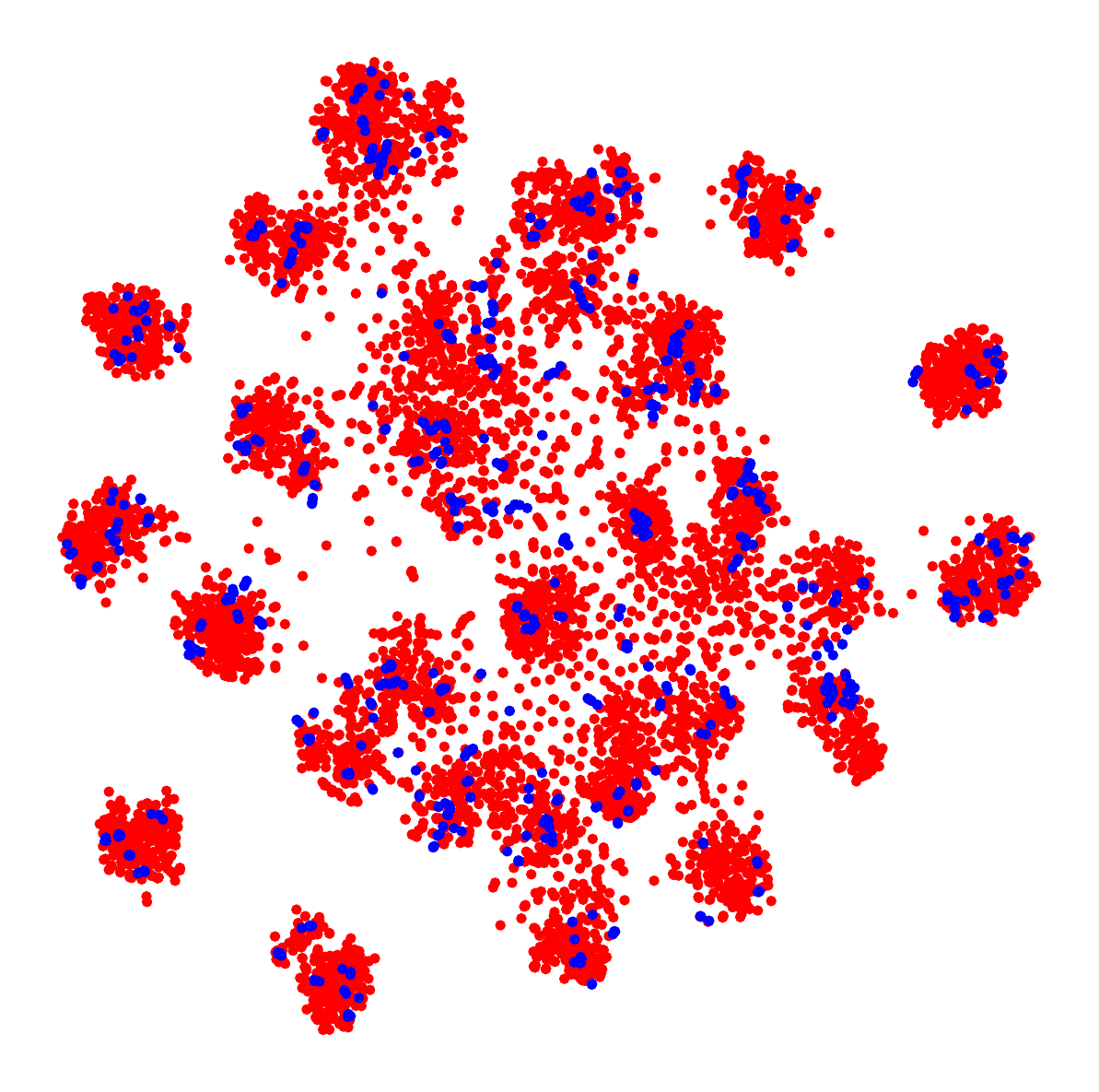}
    \label{fig:tsne_a2w}
}
\subfigure[D$\rightarrow$W]{
    \includegraphics[width=0.141\textwidth]{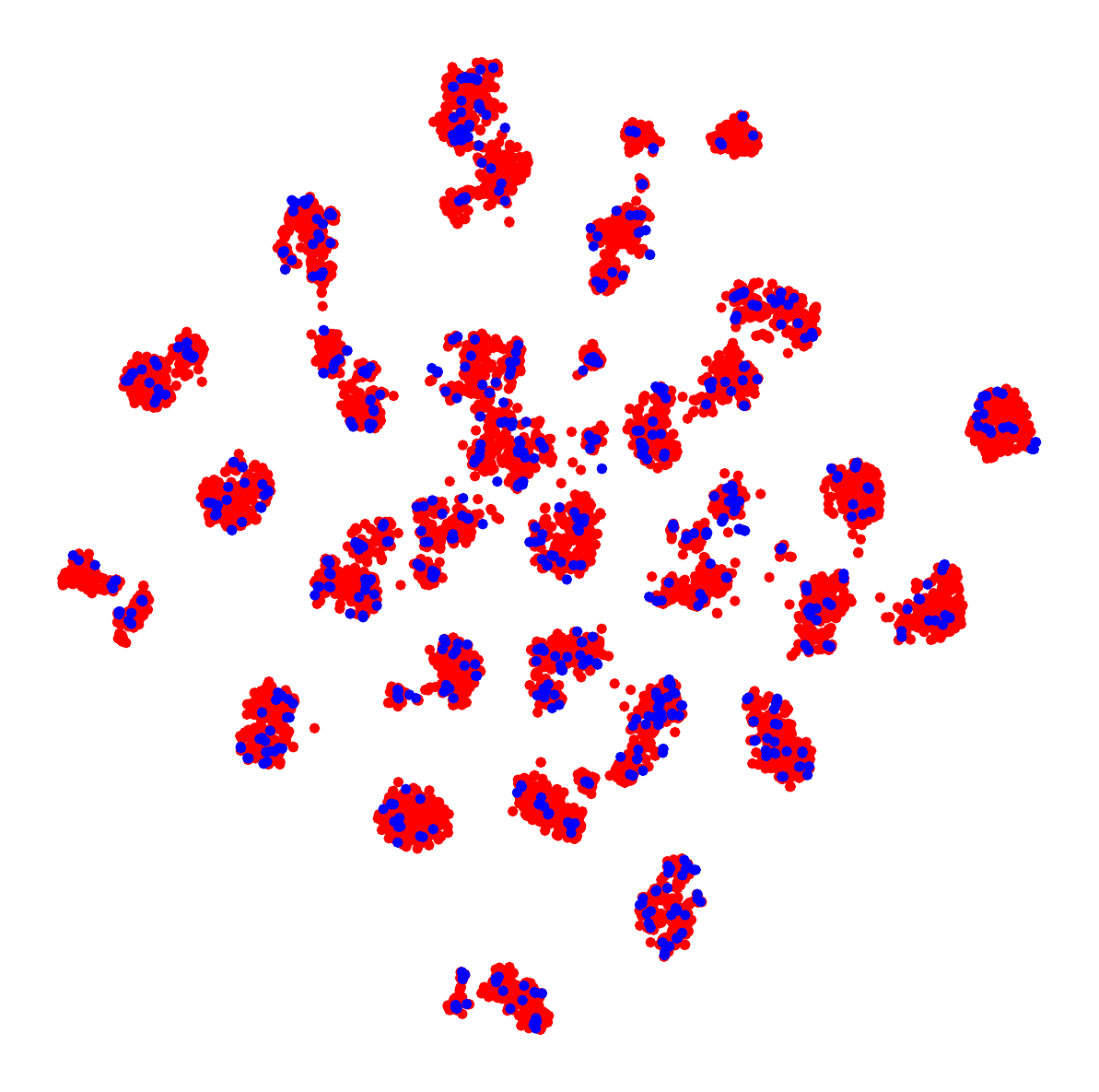}
    \label{fig:tsne_d2w}
} 
\subfigure[W$\rightarrow$D]{
    \includegraphics[width=0.141\textwidth]{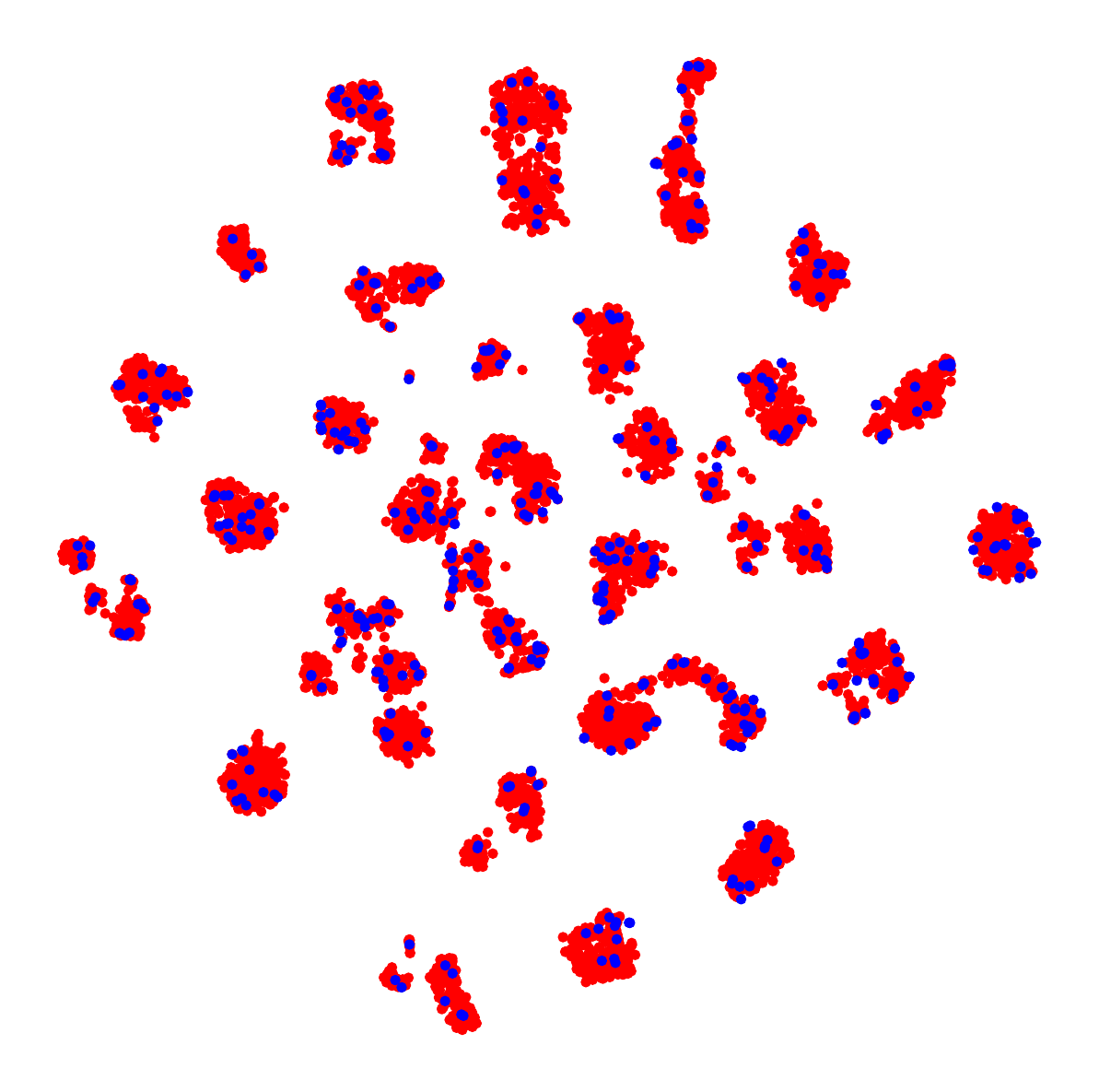}
    \label{fig:tsne_w2d}
} 
\subfigure[A$\rightarrow$D]{
\includegraphics[width=0.141\textwidth]{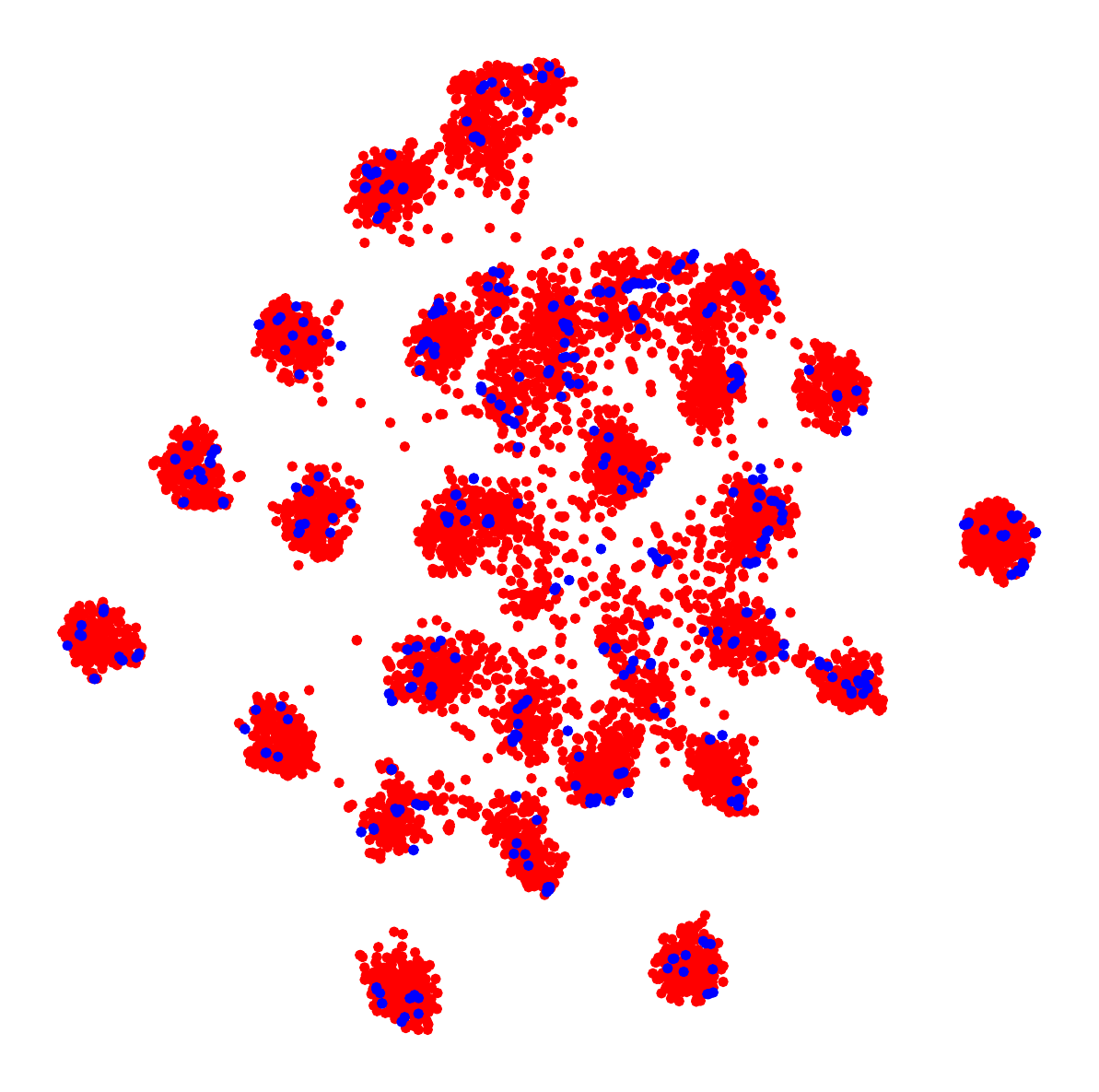}
    \label{fig:tsne_a2d}
} 
\subfigure[D$\rightarrow$A]{
\includegraphics[width=0.141\textwidth]{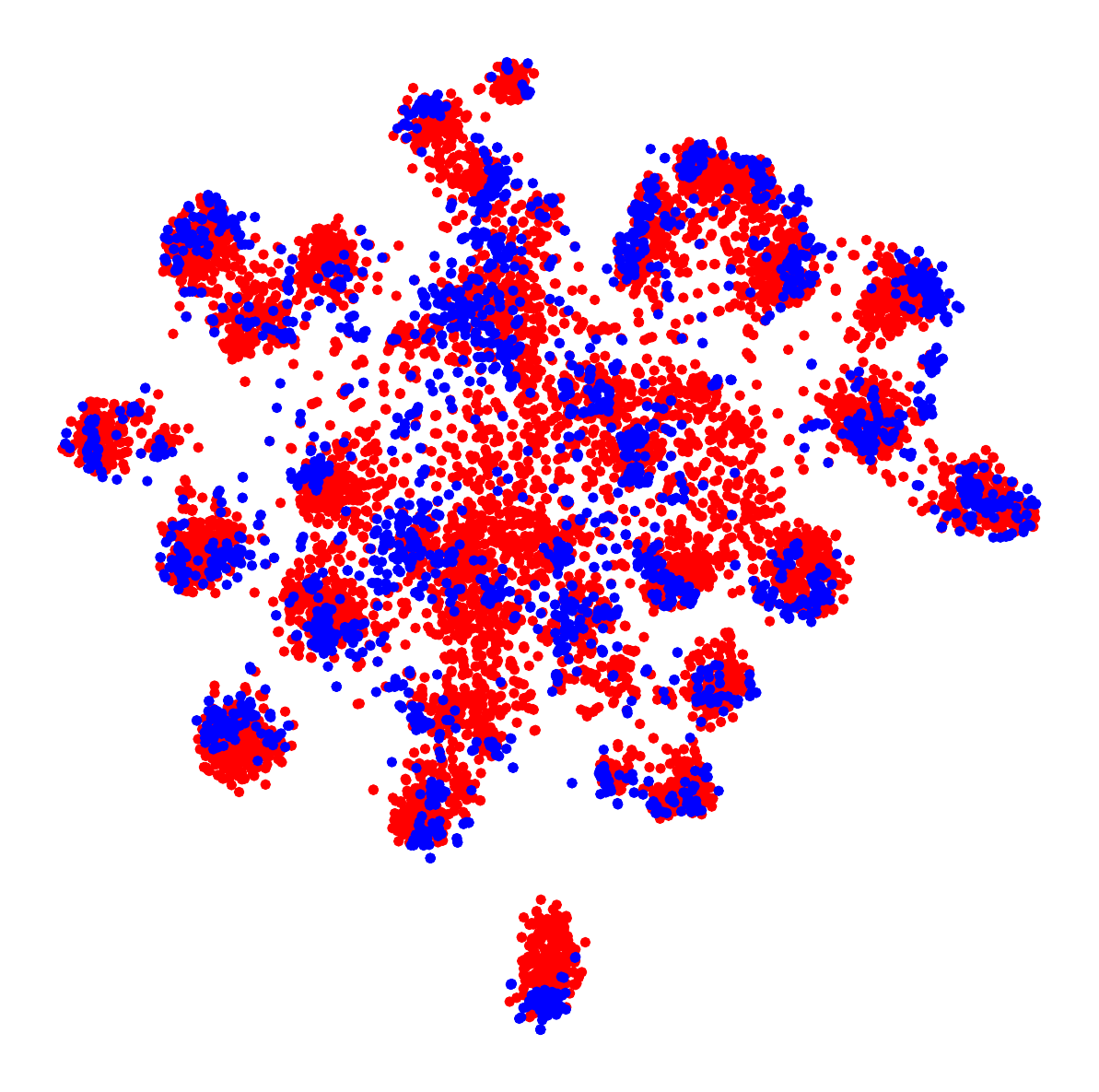}
    \label{fig:tsne_d2a}
} 
\subfigure[W$\rightarrow$A]{
    \includegraphics[width=0.141\textwidth]{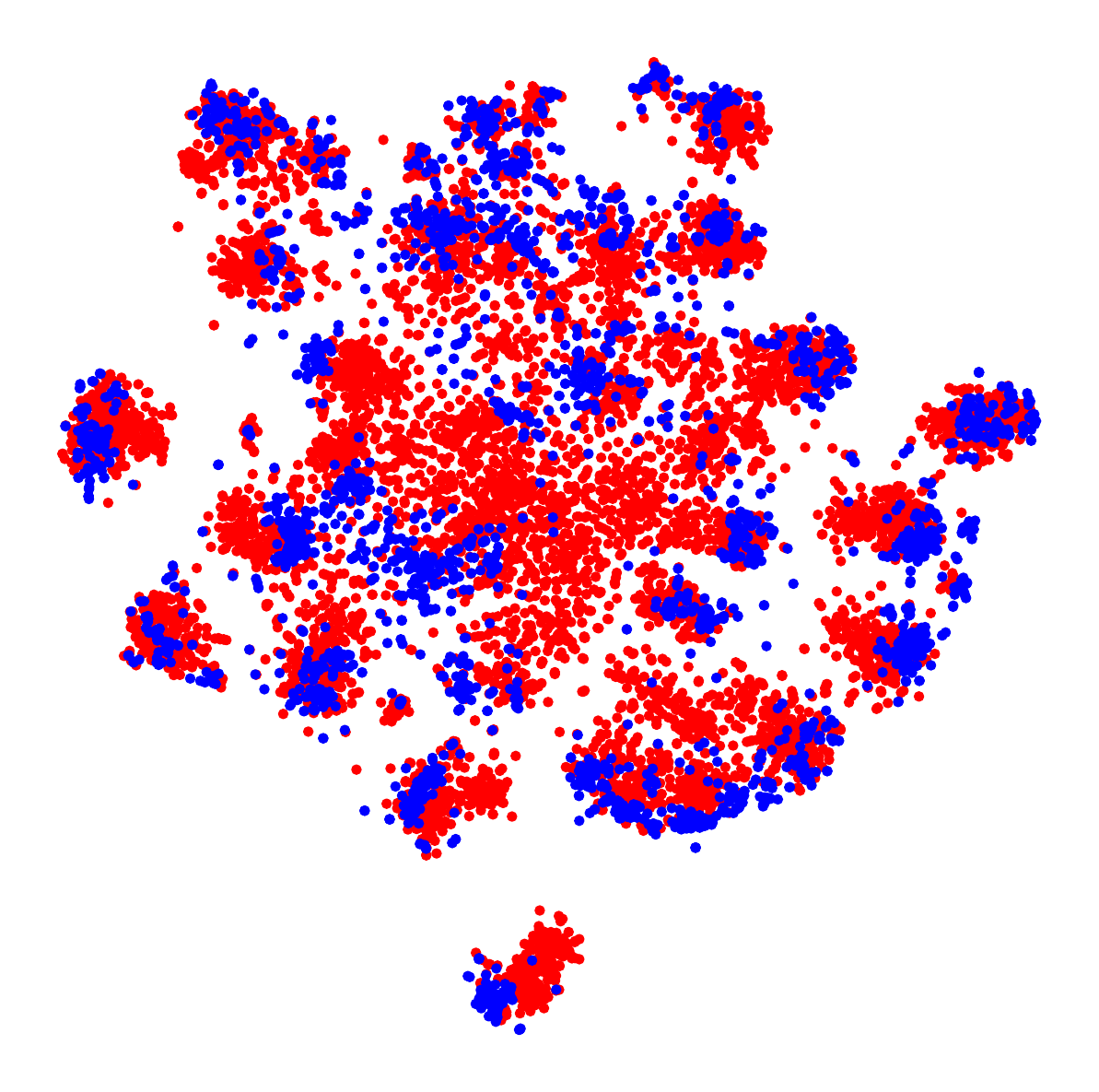}
    \label{fig:tsne_w2a}
} 
\caption{t-SNE visualization of six tasks on the \textit{Office-31} dataset. Red and blue points denote generated target samples and original target samples, respectively. 
Best viewed in color.}
\label{tSNE}
\end{figure}

\begin{table}[t]\small
\centering 
\caption{$\mathcal{A}$-distance across domains on six tasks of the \textit{Office-31} dataset.}
\vskip -.1in
\label{Adistance}
\resizebox{.48\textwidth}{!}{
\begin{tabular}{c @{\hskip 0.1in} ccccccc}
\toprule 
& A$\rightarrow$W & D$\rightarrow$W & W$\rightarrow$D & A$\rightarrow$D & D$\rightarrow$A & W$\rightarrow$A & Average \\
\midrule
$\mathcal{D}_s$ and $\mathcal{D}_t$ & 1.84 & 0.71 & 0.89 & 1.90 & 1.68 & 1.90 & 1.54\\
$\mathcal{D}_g$ and $\mathcal{D}_t$ & 0.45 & 0.33 & 0.51 & 0.57 & 1.20 & 1.31 & 0.73\\
$\mathcal{D}_{\hat{s}}$ and $\mathcal{D}_t$ & 0.64 & 0.51 & 0.52 & 0.65 & 1.31 & 1.39 & 0.84 \\
\bottomrule
\end{tabular}
}
\end{table}

\begin{figure}[t]
\centering
\subfigure[A$\rightarrow$W]{
    \includegraphics[width=0.15\textwidth]{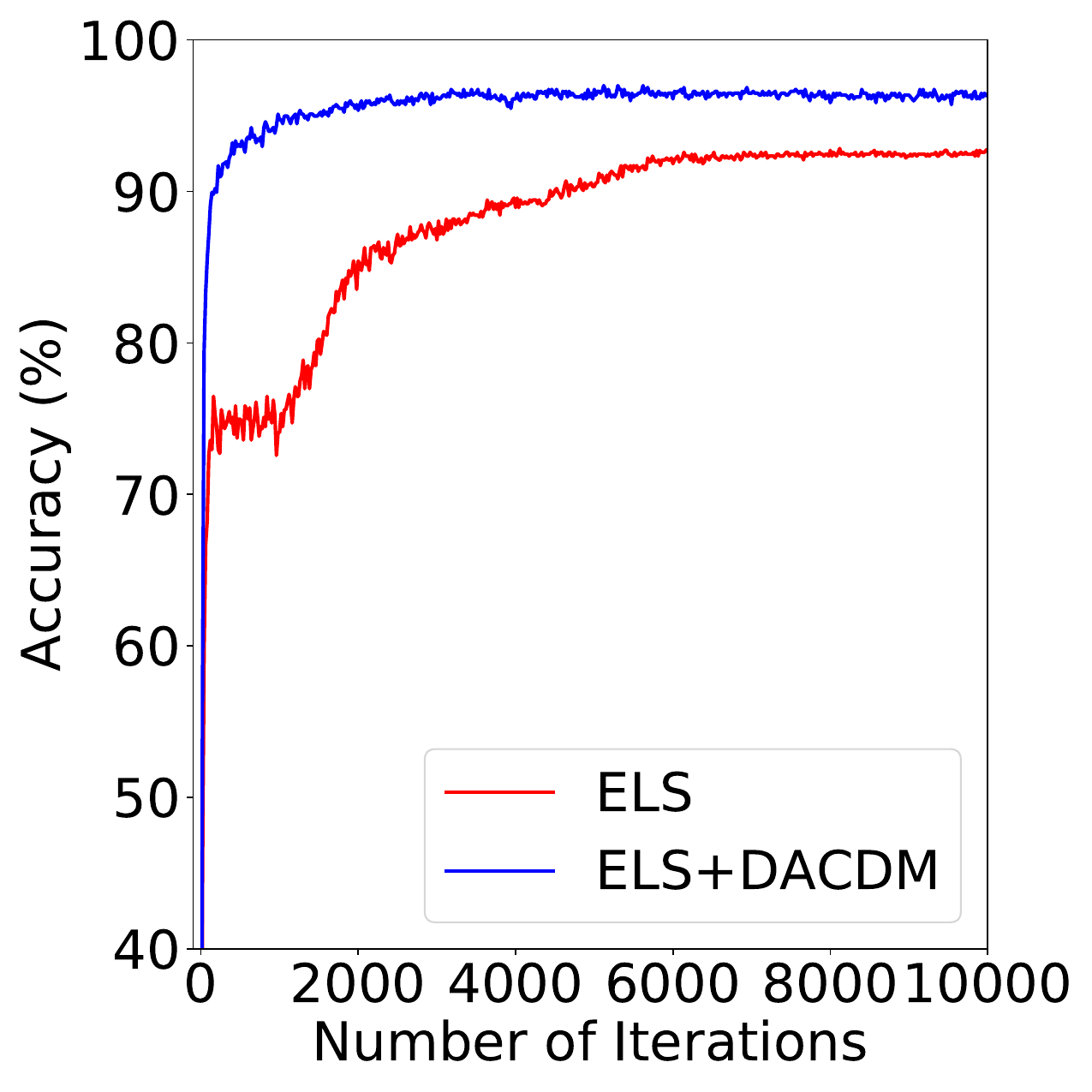}
    \label{fig:acc_a2w}
}\hspace{-3mm}
\subfigure[D$\rightarrow$W]{
    \includegraphics[width=0.15\textwidth]{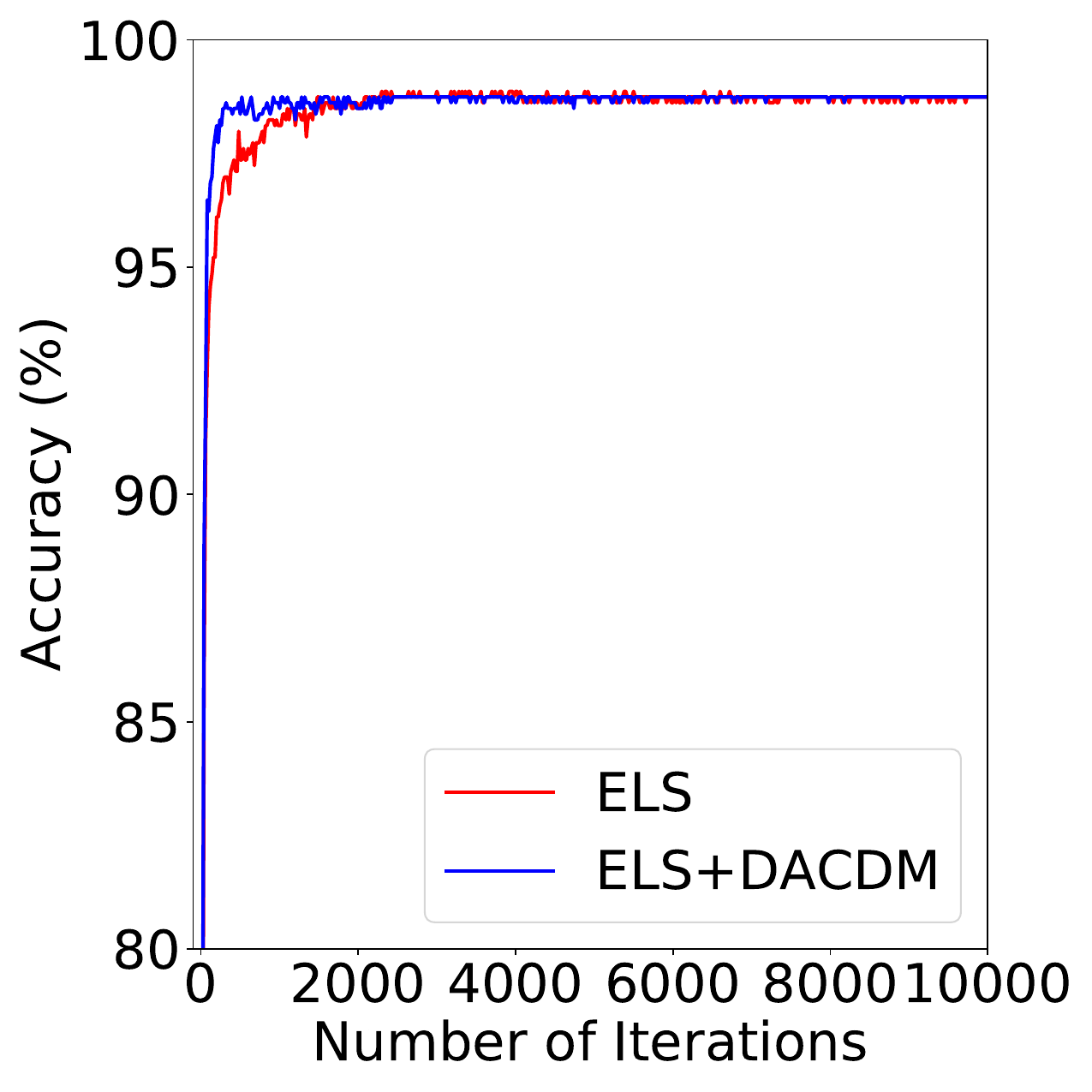}
    \label{fig:acc_d2w}
}\hspace{-3mm}
\subfigure[W$\rightarrow$D]{
    \includegraphics[width=0.15\textwidth]{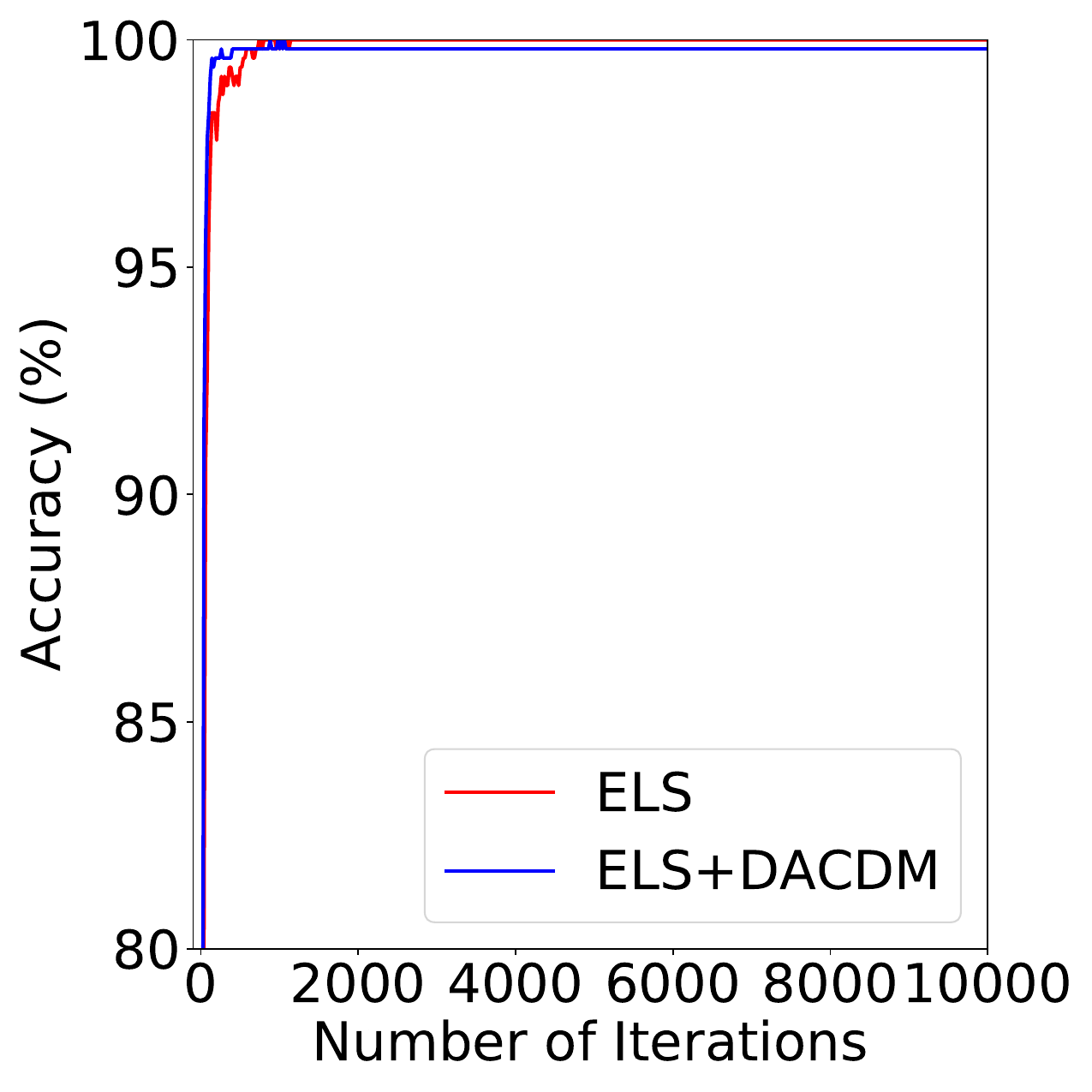}
    \label{fig:acc_w2d}
} 
\subfigure[A$\rightarrow$D]{
    \includegraphics[width=0.15\textwidth]{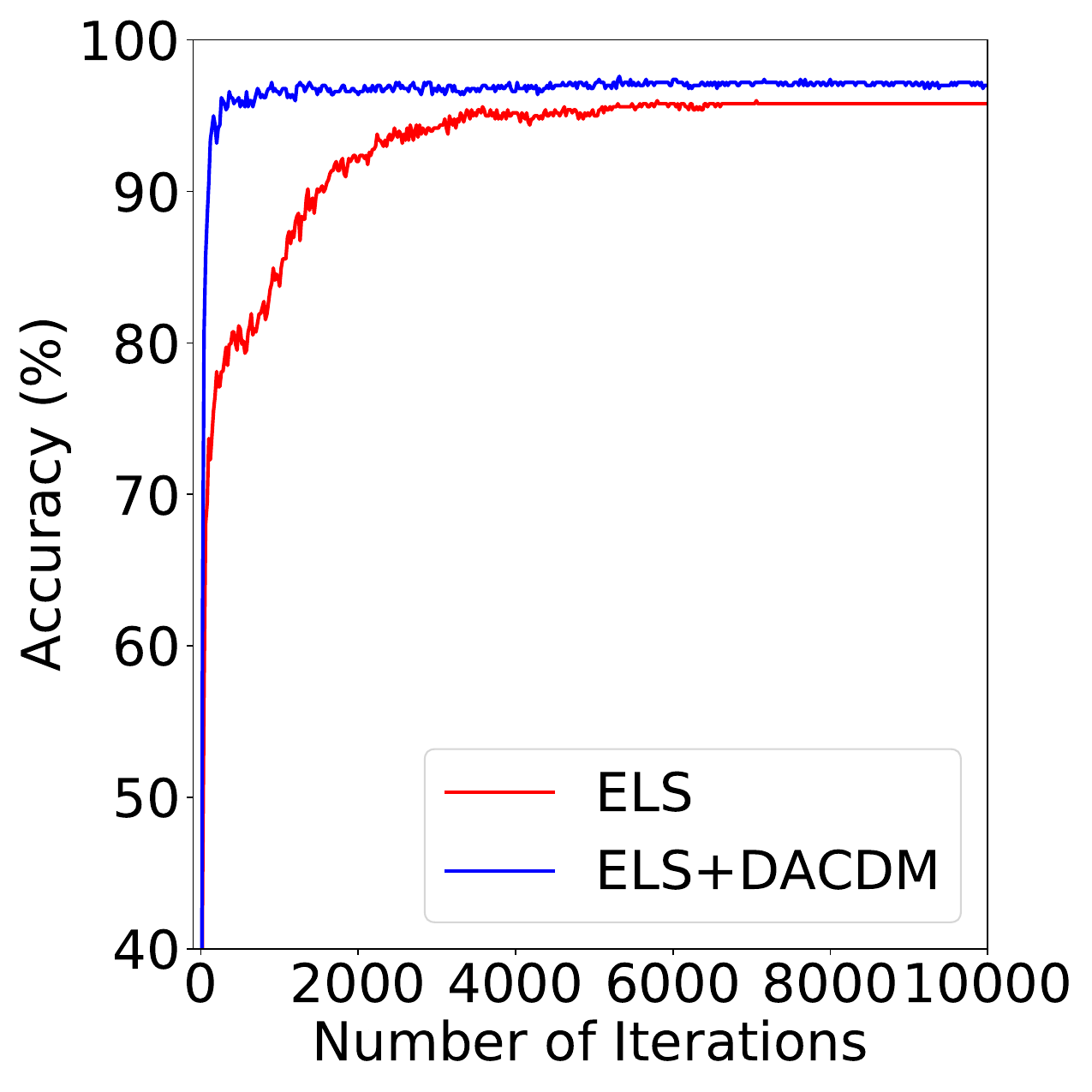}
    \label{fig:acc_a2d}
}\hspace{-3mm} 
\subfigure[D$\rightarrow$A]{
    \includegraphics[width=0.15\textwidth]{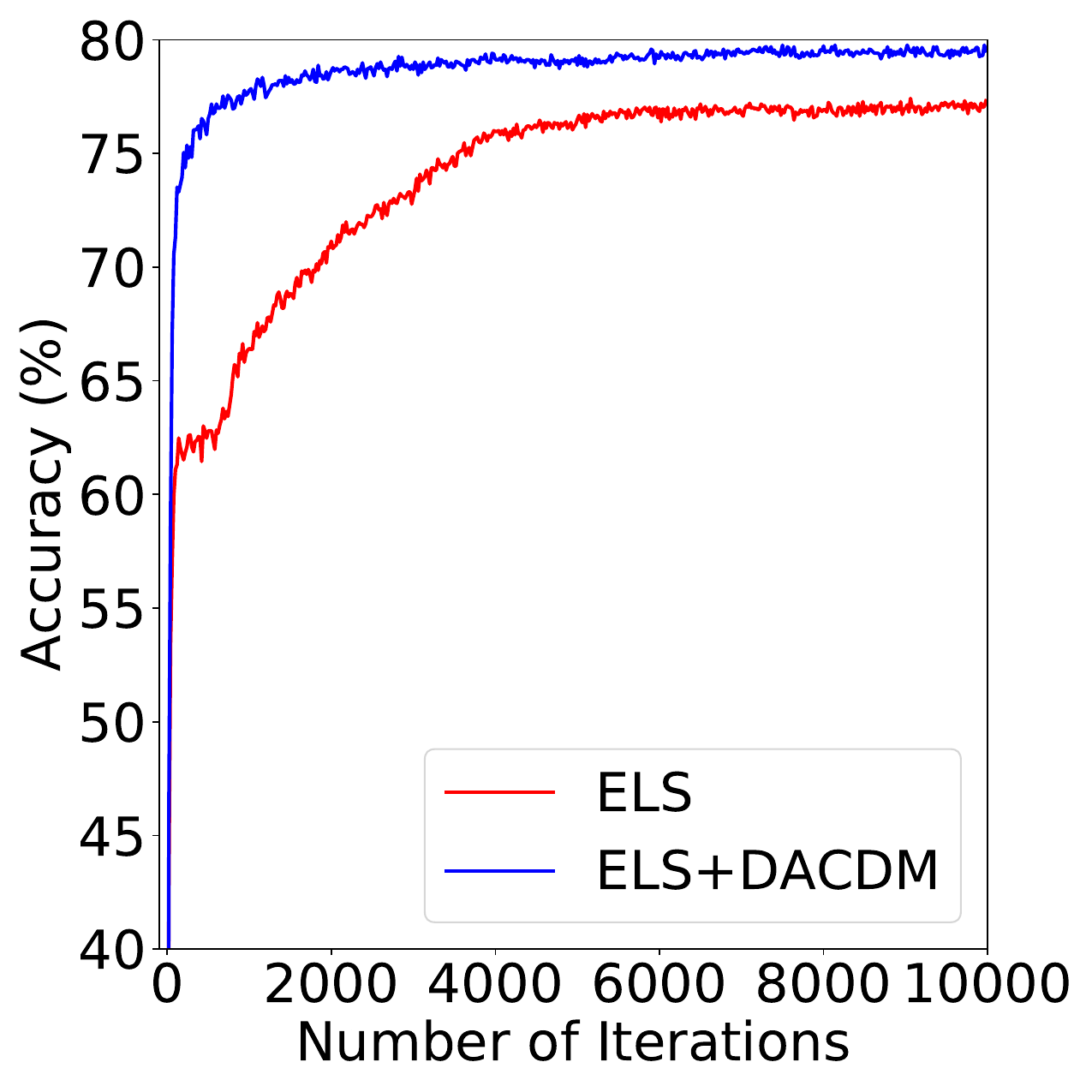}
    \label{fig:acc_d2a}
}\hspace{-3mm}
\subfigure[W$\rightarrow$A]{
    \includegraphics[width=0.15\textwidth]{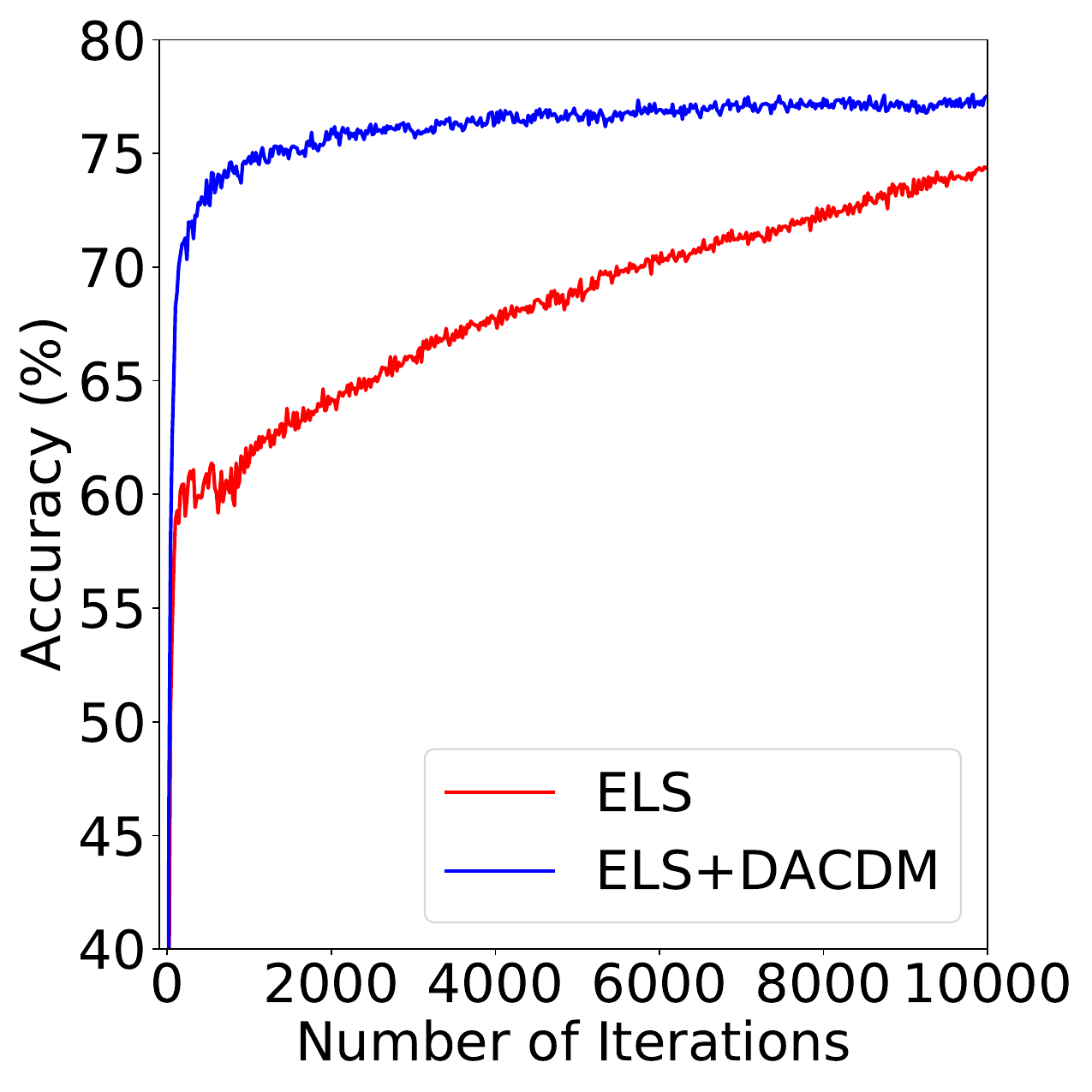}
    \label{fig:acc_w2a}
} 
\caption{Test accuracy (\%) of ELS and ELS+DACDM w.r.t. the number of iterations on the \textit{Office-31} dataset.}
\label{ELS_acc}
\end{figure}

\begin{figure}[t]
\centering
\subfigure[A$\rightarrow$W]{
    \includegraphics[width=0.15\textwidth]{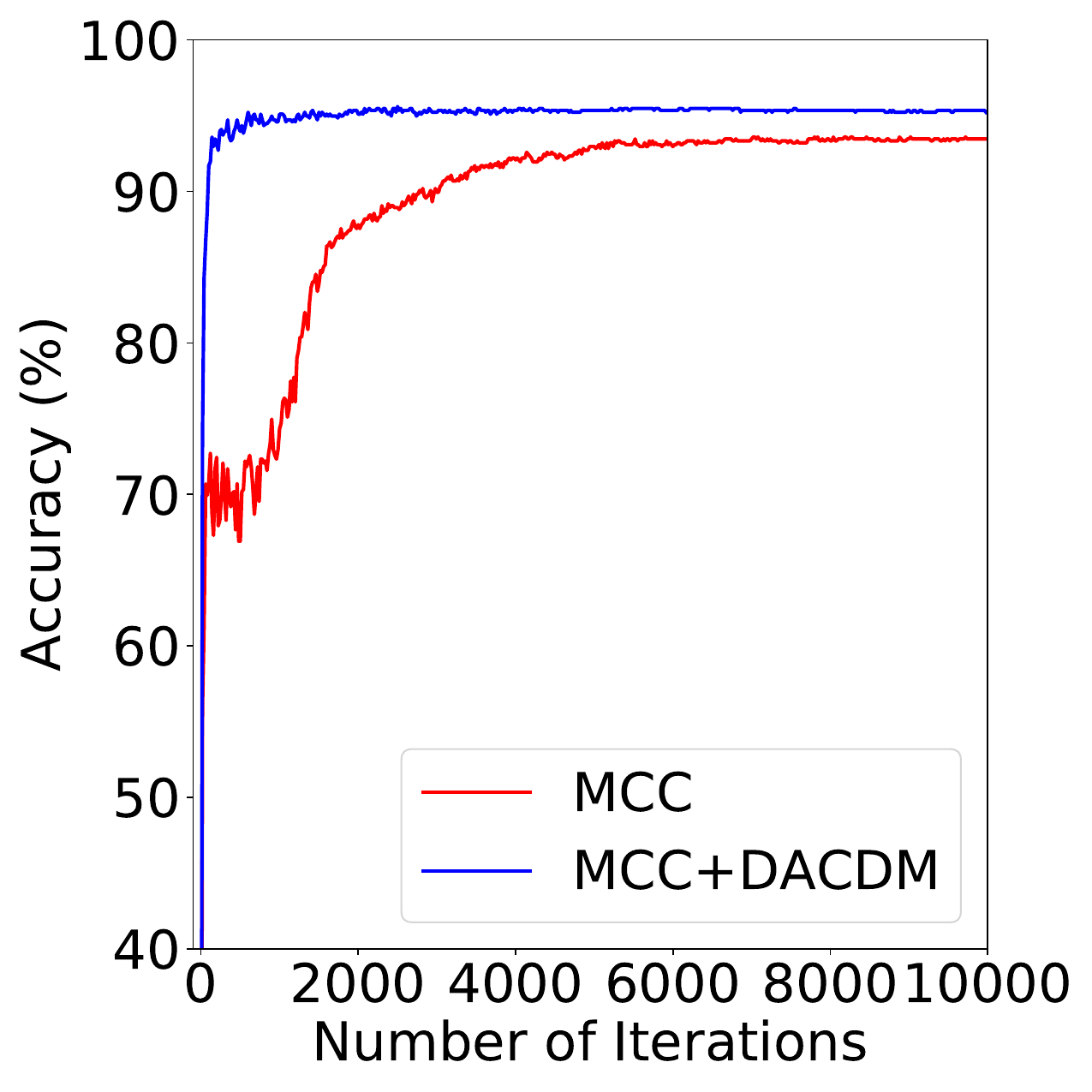}
    \label{fig:acc_mcc_a2w}
}\hspace{-3mm}
\subfigure[D$\rightarrow$W]{
    \includegraphics[width=0.15\textwidth]{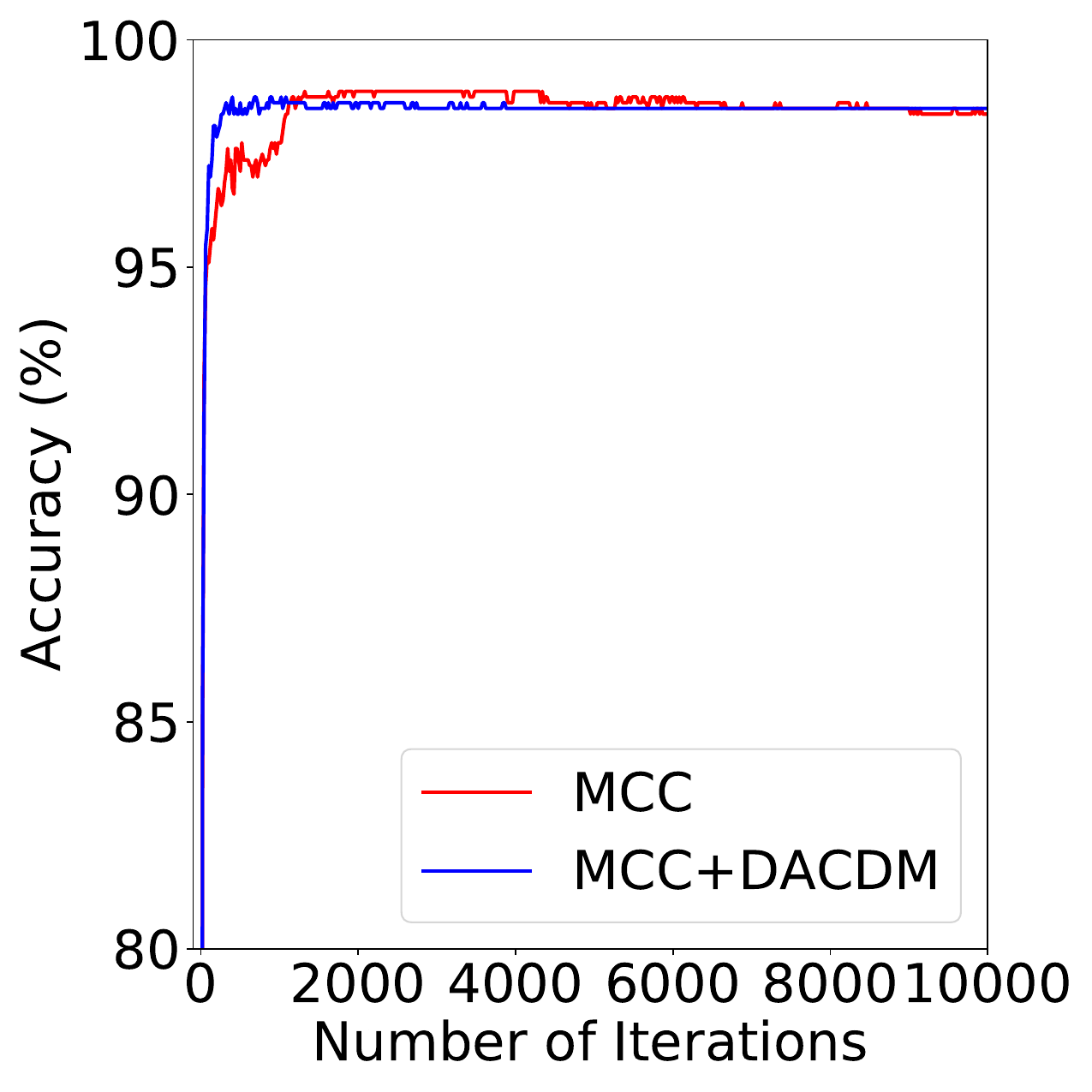}
    \label{fig:acc_mcc_d2w}
} \hspace{-3mm}
\subfigure[W$\rightarrow$D]{
    \includegraphics[width=0.15\textwidth]{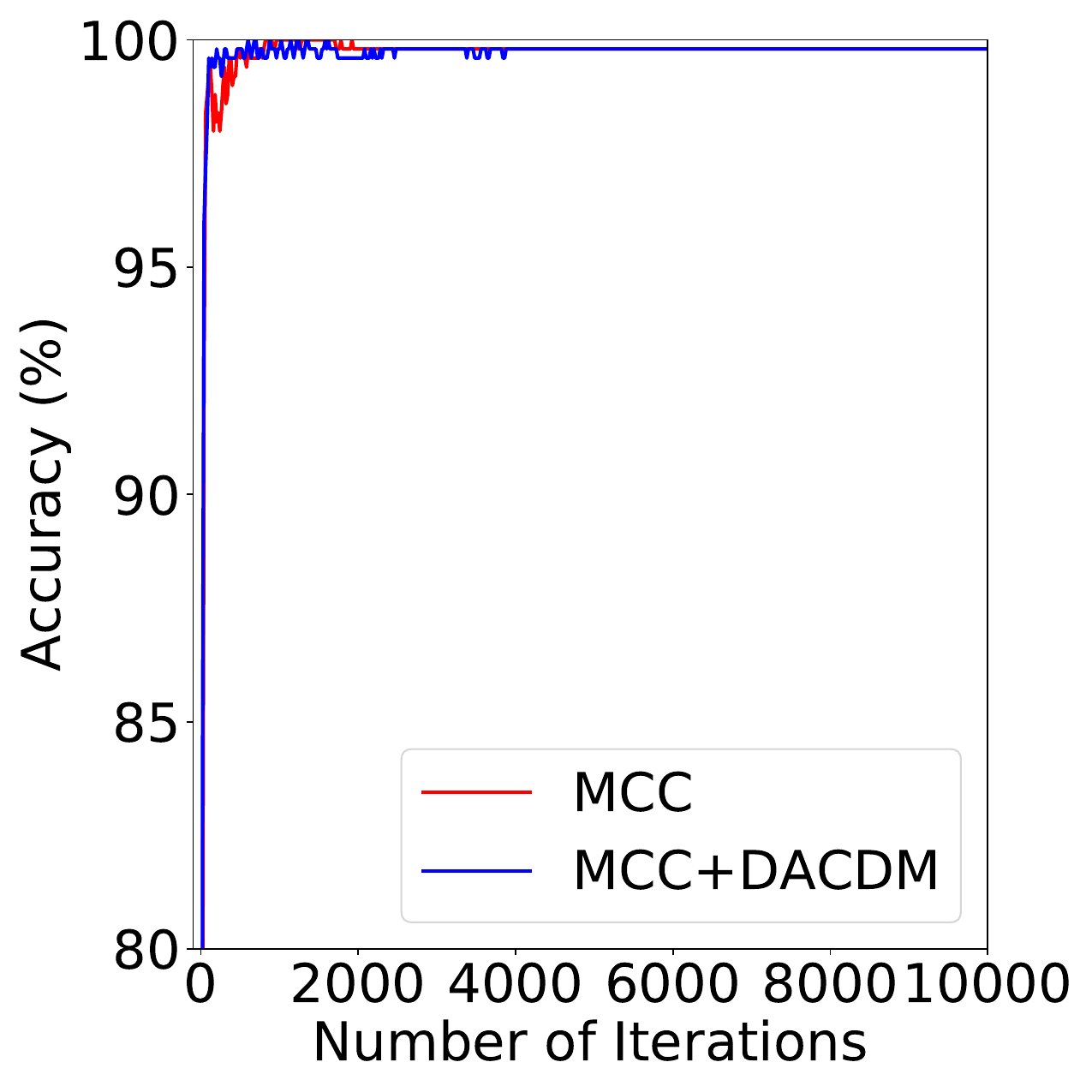}
    \label{fig:acc_mcc_w2d}
} 
\subfigure[A$\rightarrow$D]{
    \includegraphics[width=0.15\textwidth]{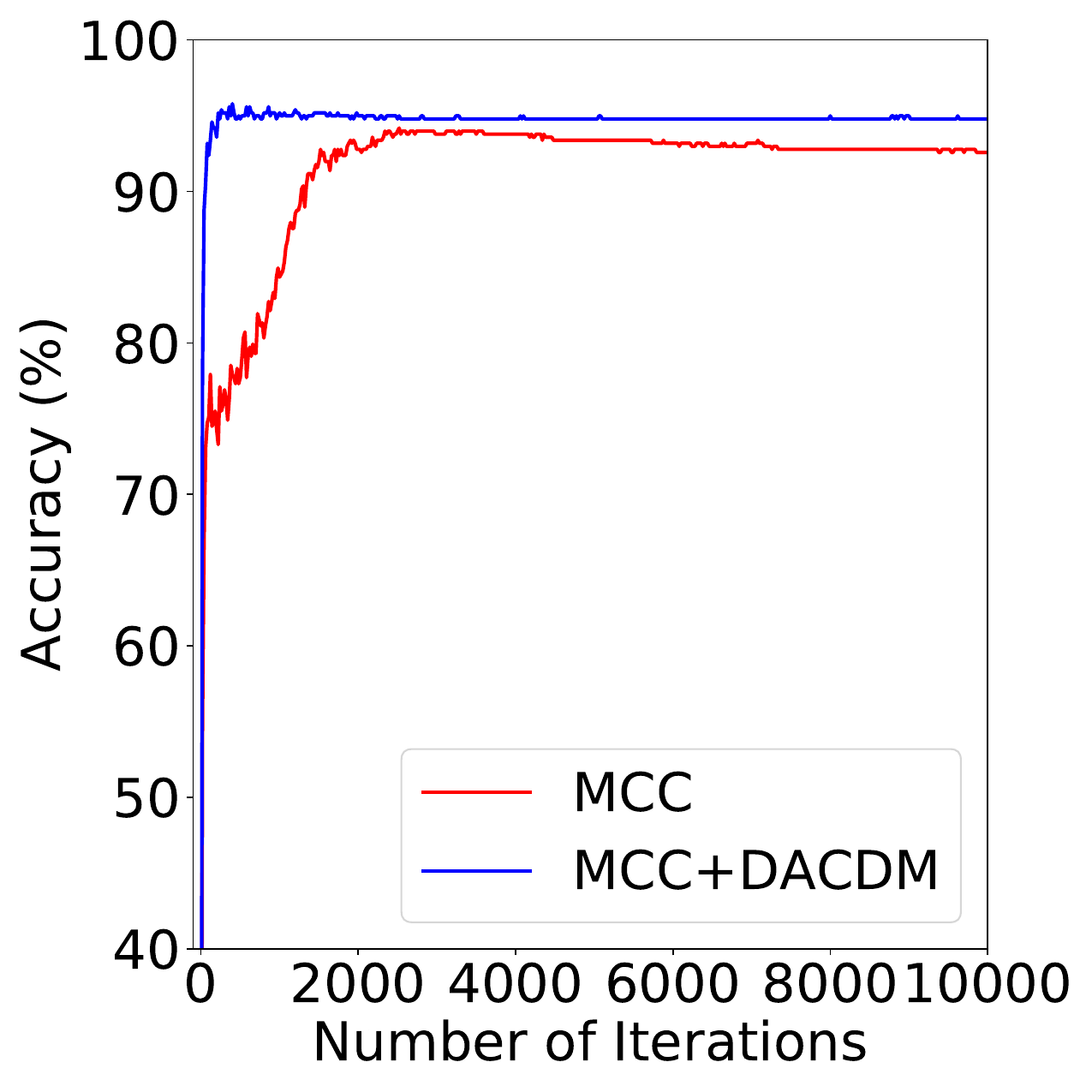}
    \label{fig:acc_mcc_a2d}
} \hspace{-3mm}
\subfigure[D$\rightarrow$A]{
    \includegraphics[width=0.15\textwidth]{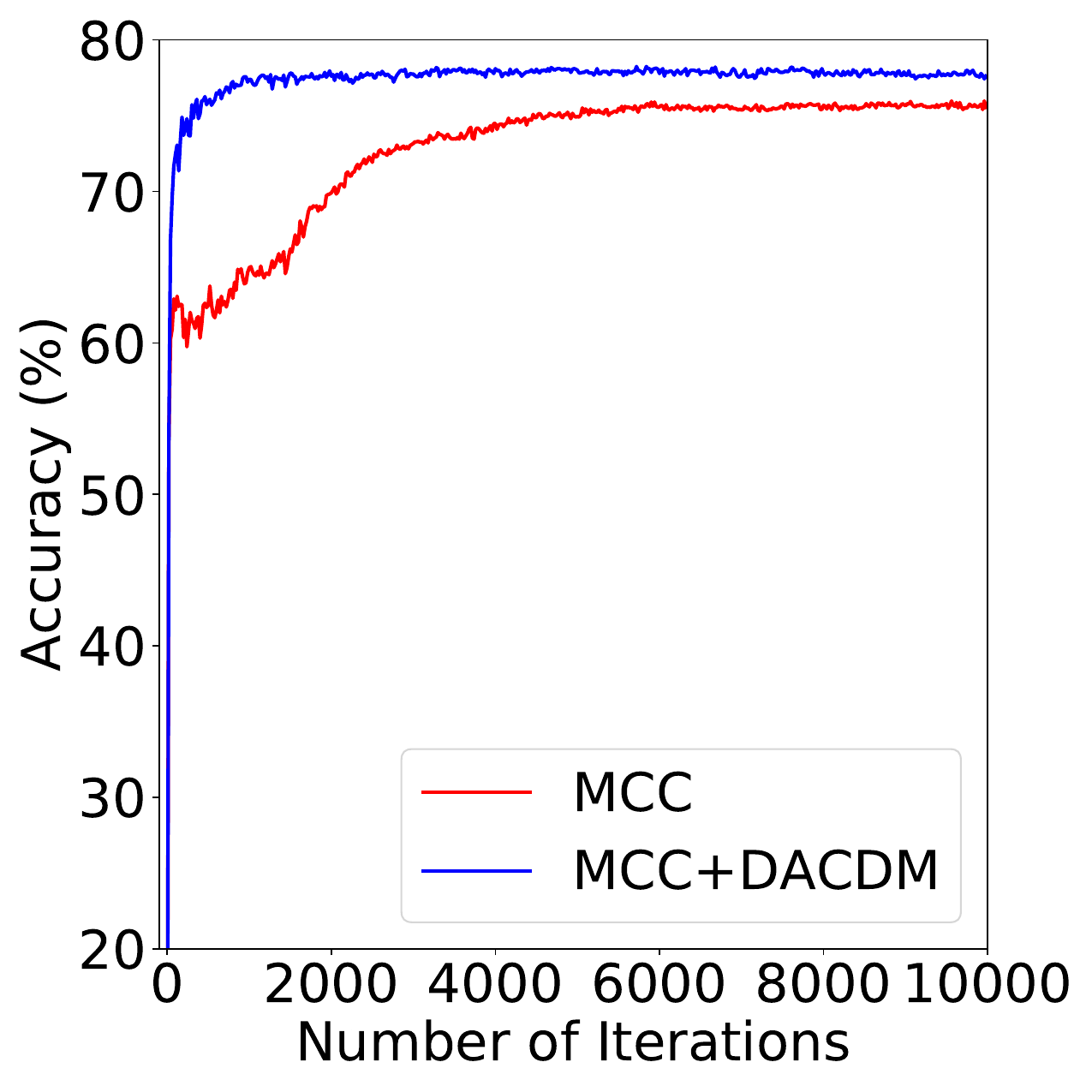}
    \label{fig:acc_mcc_d2a}
} \hspace{-3mm}
\subfigure[W$\rightarrow$A]{
    \includegraphics[width=0.15\textwidth]{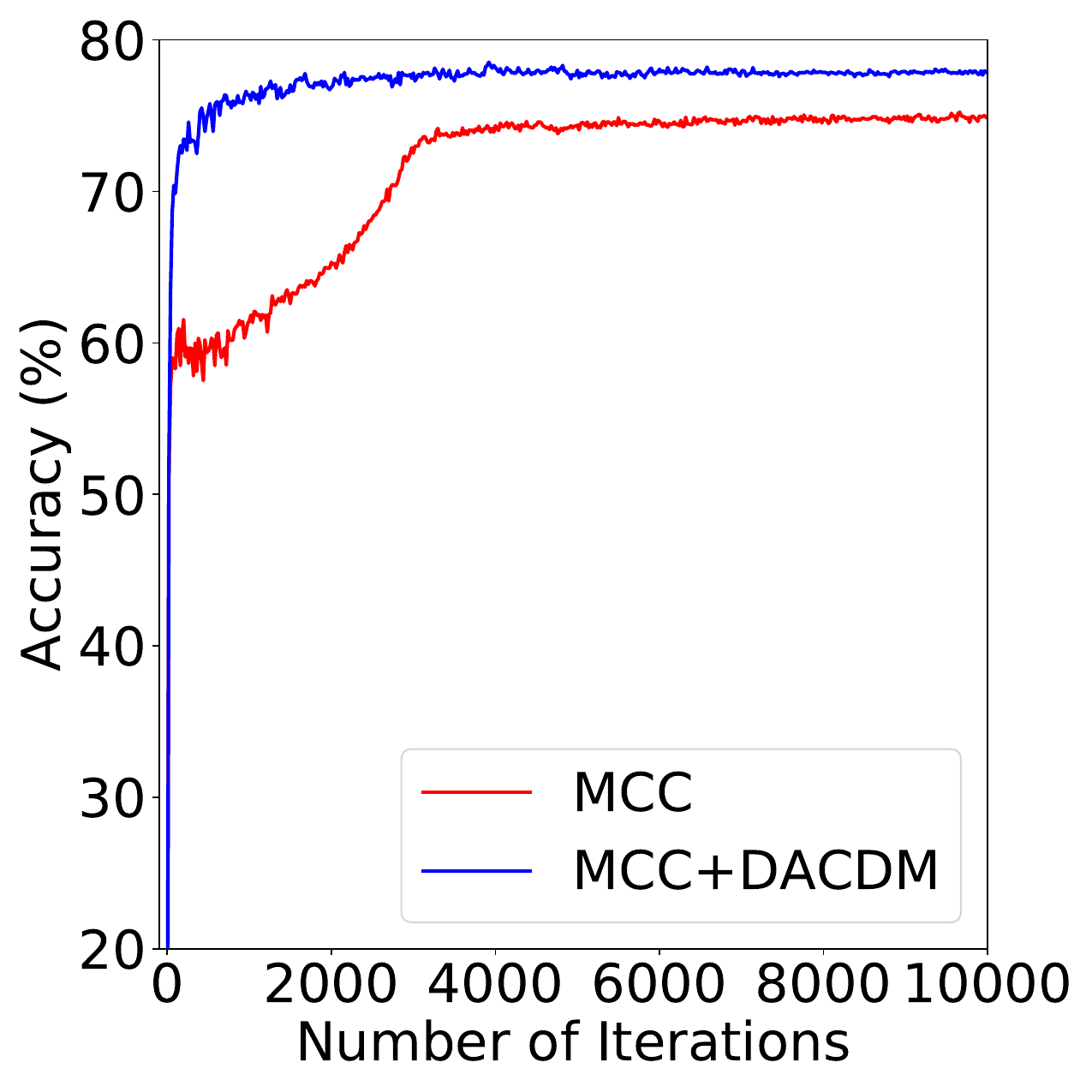}
    \label{fig:acc_mcc_w2a}
} 
\caption{Test accuracy (\%) of MCC and MCC+DACDM w.r.t. the number of iterations on the \textit{Office-31} dataset.}
\label{MCC_acc}
\end{figure}

In this section, we show fidelity and diversity of the generated samples.
Fig.~\ref{generate1} shows the generated images of the transfer task Ar$\rightarrow$Cl on \textit{Office-Home}.
As can be seen, the style of the generated samples 
(Fig.  \ref{fig:Office_home_3}) is similar to that of the real target samples (Fig.
	 \ref{fig:Office_home_2}), showing the effectiveness of DACDM. 
Moreover, the generated samples exhibit high fidelity and diversity,
and thus well simulate the distribution of the target domain. 
    Furthermore, the classes of generated samples are accurately controlled.

Fig.~\ref{generate} shows the generated target samples of the transfer task A$\rightarrow$W on \textit{Office-31}.
Again, the generated samples (Fig. \ref{fig:Office31_3}) are of high quality and have a
similar style as the real target samples (Fig. \ref{fig:Office31_2}).
    Furthermore, the generated images improve diversity of 
    the target domain images.
    For example, in the second row of Fig.~\ref{generate}, the real target domain has only black backpacks, while the generated images contain backpacks in various colors.
    One possible reason is that DACDM incorporates source samples during the
	 diffusion training process, allowing it to generate backpacks with more
	 colors related to the source domain while preserving the style of the target domain.
    Similarly, in the third row of Fig.~\ref{generate}, 
    the target samples contain only metal-style file cabinets,
    while the generated images have cabinets of more styles related to the source domain with backgrounds similar to the target domain.
    This demonstrates 
    domain guidance can generate diverse images for the target domain.

Traditional DA may suffer from spurious correlation in the source domain \cite{bao2022learning}.
For example, on the transfer task W$\to$A of \textit{Office-31}, all backpacks in
domain W are black (Fig.~\ref{fig:spurious1}), while backpacks in target domain A have various colors. 
When transferring from domain W to domain A, the UDA model may focus on recognizing black backpacks. 
Fig.~\ref{fig:spurious2} shows the generated target images. 
As can be seen, the generated backpacks have various colors.
By combining the generated data and source domain data, the accuracy on the class
`backpacks' is improved from $96.74\%$ to $97.83\%$ using ELS+DACDM, demonstrating that DACDM can help existing UDA methods to learn more transferrable features.

Fig. \ref{visda_generate} and \ref{fig:minidomainnet} show the generated images for the transfer tasks Synthetic$\to$Real and R$\to$S on \textit{VisDA-2017} and \textit{miniDomainNet}, respectively.
Again, the generated images have a similar style as the target samples,
demonstrating usefulness of the proposed DACDM.

With domain guidance, the trained DACDM can also generate source domain images.
Fig. \ref{office31_img} shows the real source domain images and generated source domain images for the transfer task A$\to$W on \textit{Office-31}.
As can be seen, their styles are similar, demonstrating effectiveness of domain guidance.

Fig.~\ref{tSNE} shows the t-SNE visualization \cite{van2008visualizing} of feature
embeddings for samples from the six transfer tasks on \textit{Office-31} based
on a pretrained \textit{ResNet-50} on \textit{ImageNet}.
As can be seen, embeddings of the generated target samples (in red) are close to
those of the real target samples (in blue), verifying usefulness of DACDM. 
Moreover, we use the $\mathcal{A}$-distance \cite{ben2006analysis} to measure the distributional discrepancy on \textit{Office-31}.
Table \ref{Adistance} shows the $\mathcal{A}$-distances of $\hD_s$, $\hD_g$ and $\hD_{\hat{s}}$ to $\hD_t$.
As can be seen, 
compared with $\hD_s$, 
$\hD_g$ and $\hD_{\hat{s}}$ are closer to $\hD_t$ on all six tasks, implying that the generated samples by DACDM make it easier to transfer from the augmented source domain $\hD_{\hat{s}}$
    to the target domain $\hD_t$ than the source domain $\hD_{s}$.

\begin{table*}[!t]\small
\centering
\caption{Ablation studies on the \textit{Office-31} dataset using \textit{ResNet-50} in terms of accuracy (\%). The best is in \textbf{bold}.}
\vskip -.1in
\label{ablation}
\setlength{\tabcolsep}{1.5mm}{
\begin{tabular}{ccccccc @{\hskip 0.2in} c}
\toprule 
& A$\rightarrow$W        & D$\rightarrow$W        & W$\rightarrow$D        & A$\rightarrow$D        & D$\rightarrow$A        & W$\rightarrow$A        & Average \\
\midrule
ELS \cite{zhang2023free} & 93.84 & 98.78 & \textbf{100.00} & 95.78 & 77.72 & 75.13 & 90.21 \\
ELS+PseudoLabel & 94.55 & 98.74 & \textbf{100.00}  & 95.71 & 78.95 & 75.45 & 90.57 \\
ELS+DACDM (w/o domain guidance) & 94.48 & \textbf{99.00} & \textbf{100.00}  & 96.92 & 79.09 & 76.88 & 91.06  \\
ELS+DACDM ($\hD_g\to\hD_t$) & 96.56 & 98.74 & \textbf{100.00}  & 97.06 & 79.66 & 76.96 & 91.50 \\			
ELS+DACDM & \textbf{96.90} & 98.91 & \textbf{100.00} & \textbf{97.46} & \textbf{79.79} & \textbf{77.74} & \textbf{91.80} \\
\bottomrule
\end{tabular}}
\end{table*}

\begin{figure*}[!tbh]
\centering
\subfigure[\textit{Office-31}.]{
    \includegraphics[width=0.22\textwidth]{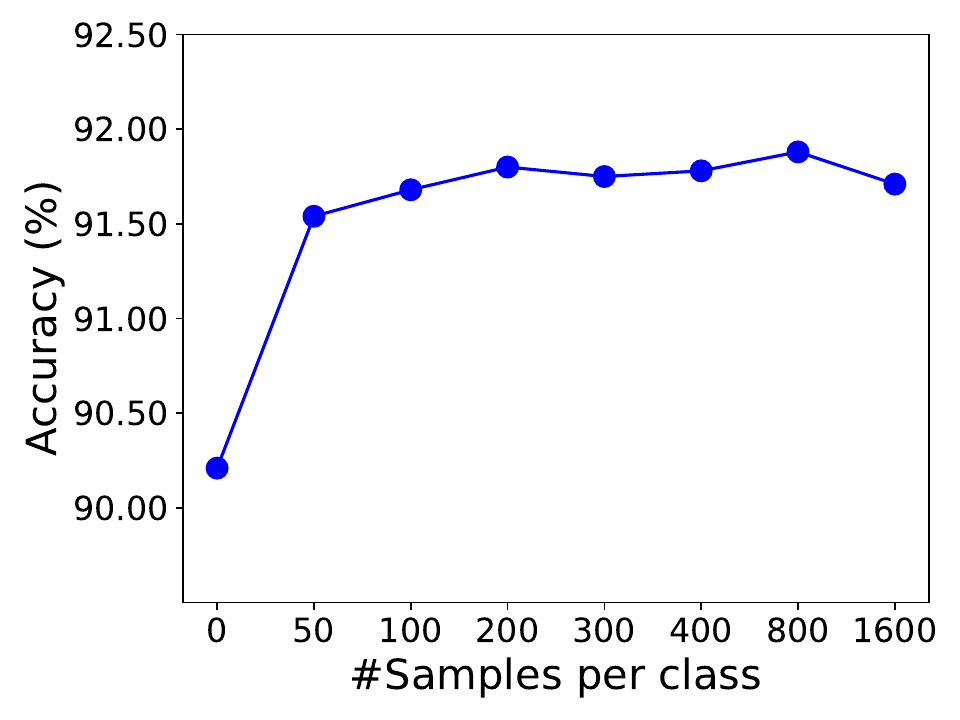}
    \label{fig:Office_31_number}
}
\subfigure[\textit{Office-Home}.]{
    \includegraphics[width=0.22\textwidth]{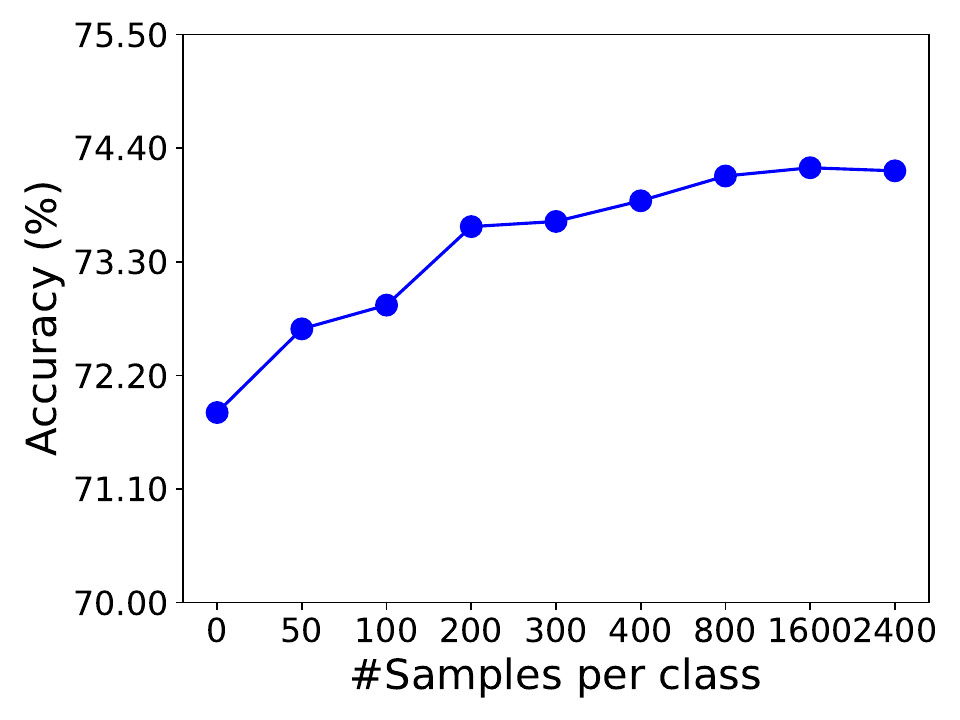}
    \label{fig:Office_Home_number}
} 
\subfigure[\textit{VisDA-2017}.]{
    \includegraphics[width=0.22\textwidth]{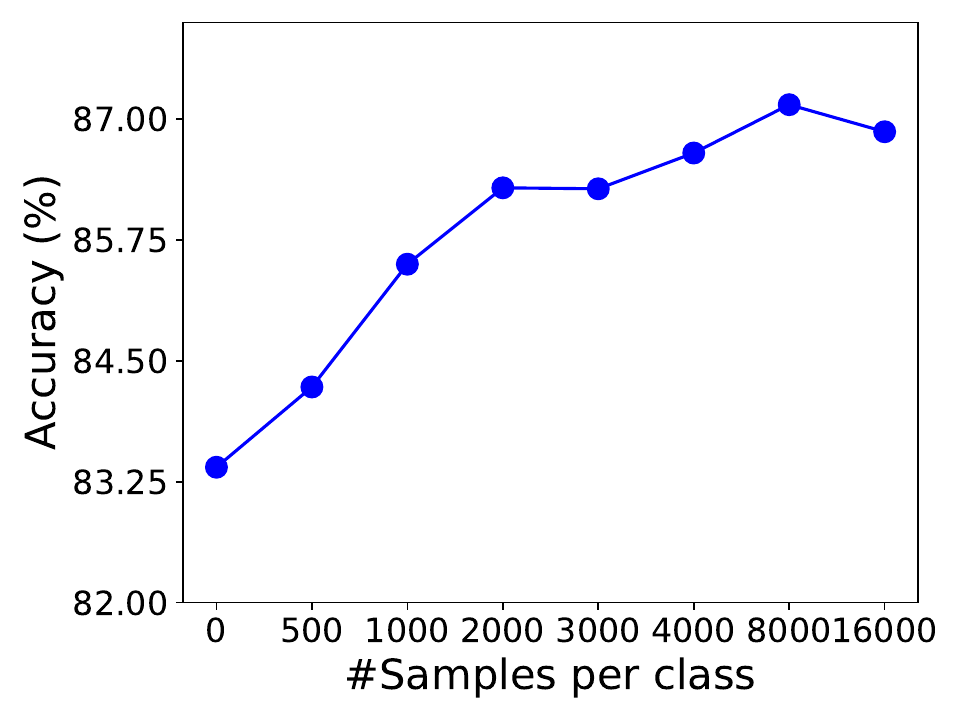}
    \label{fig:VisDA_number}
} 
\subfigure[\textit{miniDomainNet.}]{
    \includegraphics[width=0.22\textwidth]{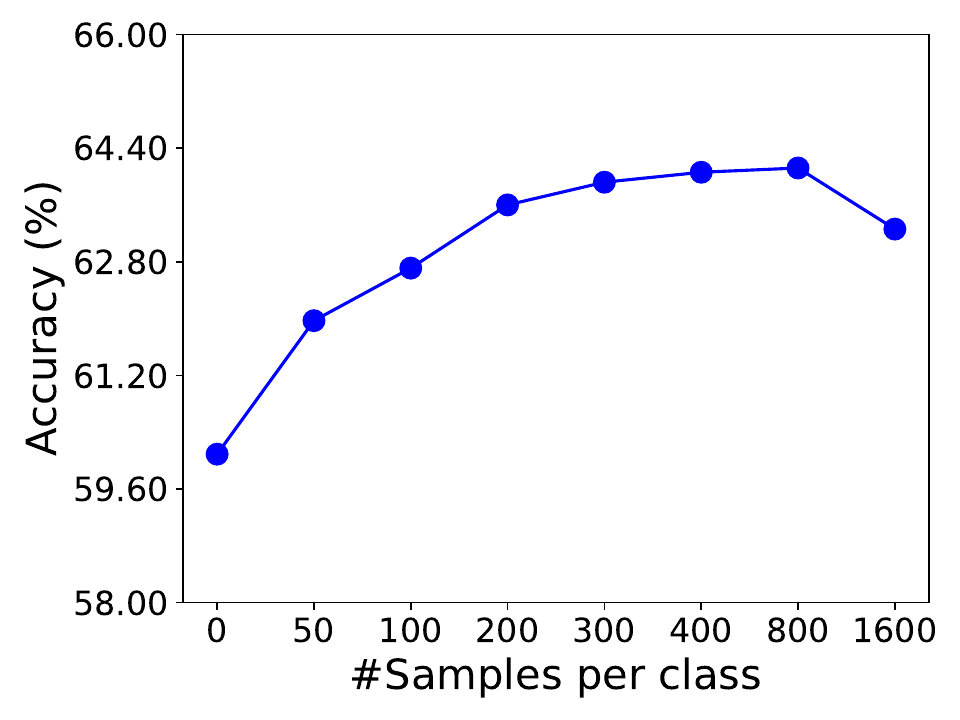}
    \label{fig:miniDomainNet_number}
}
\vskip -.1in
\caption{Accuracy w.r.t. samples per class on the \textit{Office-31}, \textit{Office-Home}, \textit{VisDA-2017}, and \textit{miniDomainNet} datasets.}
\label{Sensitivity}
\end{figure*}

\begin{figure}[!t]
\centering
\!\!
\subfigure[ELS+DACDM (w/o domain guidance).]{
    \includegraphics[width=0.23\textwidth]{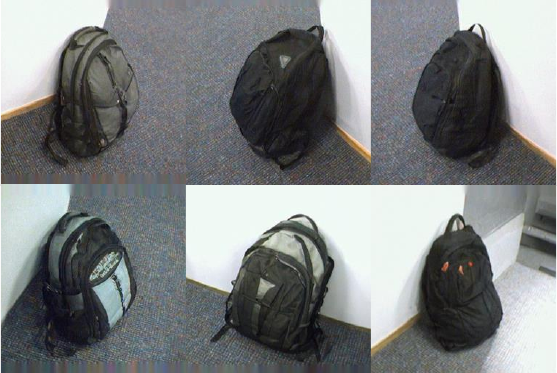}
    \label{fig:showcase1}
}
\subfigure[ELS+DACDM.]{
    \includegraphics[width=0.23\textwidth]{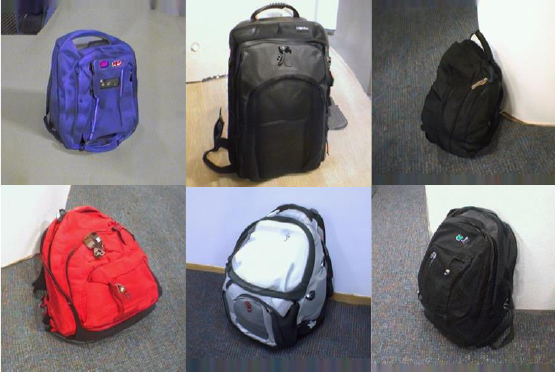}
    \label{fig:showcase2}
}\!\!
\vskip -.1in
\caption{Generated target images for the transfer task A$\rightarrow$W on the \textit{Office-31} dataset for different methods.}
\label{fig:showcase}
\end{figure}

\begin{figure}[!tbh]
\centering
\subfigure[]{
    \includegraphics[width=0.115\textwidth]{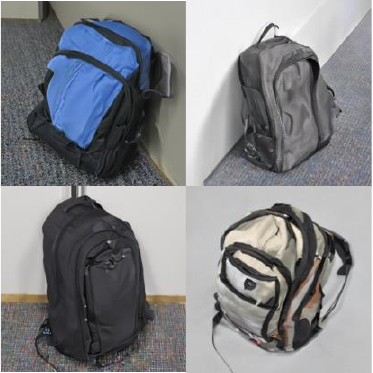}
    \label{fig:office31_ablation1}
}\hspace{-3mm}
\subfigure[]{
    \includegraphics[width=0.115\textwidth]{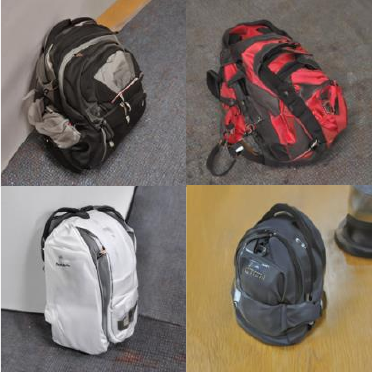}
    \label{fig:office31_ablation2}
}\hspace{-3mm}
\subfigure[]{
    \includegraphics[width=0.115\textwidth]{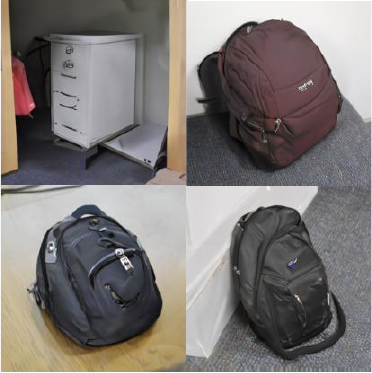}
    \label{fig:office31_ablation3}
}\hspace{-3mm}
\subfigure[]{
    \includegraphics[width=0.11\textwidth]{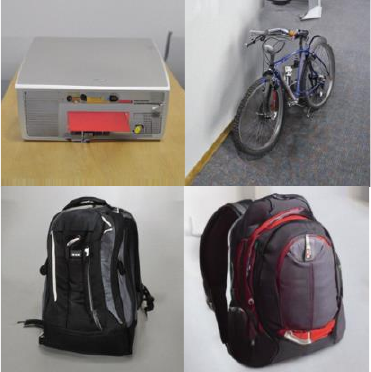}
    \label{fig:office31_ablation4}
}
\vskip -.1in
\caption{Generated target images for the transfer task A$\rightarrow$D on the \textit{Office-31} dataset for ablation studies. (a) The class and domain are controlled both by condition. (b) The classes are controlled by condition and domains are controlled by guidance. (c) The classes are controlled by guidance and domains are controlled by condition. (d) The class and domain are controlled both by guidance.}
\label{fig:office31_ablation}
\end{figure}

\subsection{Analysis on Convergence}
\label{Convergence}

Fig.~\ref{ELS_acc} shows convergence of the test accuracy for ELS and ELS+DACDM
w.r.t. the number of iterations for the six transfer tasks on \textit{Office-31}.
As can be seen, ELS+DACDM converges faster and generalizes better than ELS.
Fig.~\ref{MCC_acc} shows convergence of the test accuracy of MCC and MCC+DACDM w.r.t. the number of iterations on \textit{Office-31}.
MCC+DACDM
again performs better than MCC.
These demonstrate usefulness of DACDM.

\subsection{Ablation Studies}
\label{Ablation Studies}

\textbf{Effects of Components in DACDM.}
We perform ablation experiments on \textit{Office-31} to analyze the effects of 
(i) domain guidance, (ii) transferring from the augmented source domain to the target domain,
and (iii) using generated target samples.
We compare three variants of ELS+DACDM:
(i) 
DACDM trained on $\hD_t$ only (denoted by ELS+DACDM (w/o domain guidance));
(ii) $\hD_g$$\rightarrow$$\hD_t$ instead of $\hD_{\hat{s}}$$\rightarrow$$\hD_t$
(denoted by ELS+DACDM ($\hD_g$$\rightarrow$$\hD_t$));
(iii) ELS trained on $\hD_s$ and $\hD_t$ with pseudo labels, i.e., generated target samples are replaced by target samples with pseudo labels (denoted by ELS+PseudoLabel).

\begin{table}[!t]\small
\centering
\caption{Ablation studies about class and domain control on the \textit{Office-31} dataset using \textit{ResNet-50} in terms of accuracy (\%). The best is \textbf{in bold}.}
\vskip -.1in
\label{control}
\setlength{\tabcolsep}{1.5mm}{
\resizebox{\columnwidth}{!}{
\begin{tabular}{cccccccc @{\hskip 0.1in} c}
\toprule 
Class & Domain & A$\rightarrow$W  & D$\rightarrow$W  & W$\rightarrow$D  & A$\rightarrow$D   & D$\rightarrow$A & W$\rightarrow$A & Average \\
\midrule
Condition & Condition & 96.86 & \textbf{98.99} & \textbf{100.00} & 96.59 & 79.23 & 77.32 & 91.50 \\
Condition  & Guidance & \textbf{96.90} & 98.91 & \textbf{100.00} & \textbf{97.46} & \textbf{79.79} & \textbf{77.74} & \textbf{91.80} \\
Guidance  & Condition & 93.96 & 98.36 & \textbf{100.00} & 93.17 & 72.88 & 73.34 & 88.62 \\
Guidance  & Guidance & 90.82 & 96.73 & 99.00 & 90.56 & 70.86 & 68.65& 86.10 \\
\bottomrule
\end{tabular}}}
\end{table}

\begin{table}[!t]\small
\centering
\caption{Accuracy (\%) w.r.t. the number of generated samples per class on the \textit{Office-31} dataset.
The best is \textbf{in bold}.}
\vskip -.1in
\label{office31_number}
\setlength{\tabcolsep}{1.5mm}{
\resizebox{\columnwidth}{!}{
\begin{tabular}{ccccccc @{\hskip 0.2in} c}
\toprule 
& A$\rightarrow$W        & D$\rightarrow$W        & W$\rightarrow$D        & A$\rightarrow$D        & D$\rightarrow$A        & W$\rightarrow$A        & Average \\
\midrule
0  & 93.84 & 98.78 & \textbf{100.00} & 95.78 & 77.72 & 75.13 & 90.21 \\
50  & 97.15 & 98.74 & \textbf{100.00} & 96.92 & 79.54 & 76.86 & 91.54 \\
100  & \textbf{97.40} & 98.74 & \textbf{100.00} & 97.06 & 79.57 & 77.31 & 91.68 \\
200  & 96.90 & \textbf{98.91} & \textbf{100.00} & \textbf{97.46} & 79.79 & \textbf{77.74} & 91.80 \\
300  & \textbf{97.40} & 98.74 & \textbf{100.00} & 97.39 & 79.65 & 77.30 & 91.75 \\
400  & 97.15 & 98.74 & \textbf{100.00} & 97.26 & 80.06 & 77.46 & 91.78 \\
800  & 97.36 & 98.74 & 99.80 & 97.39 & \textbf{80.23} & \textbf{77.74} & \textbf{91.88} \\
1600  & 97.11 & 98.74 & 99.80 & 97.19 & 79.87 & 77.53 & 91.71 \\
\bottomrule
\end{tabular}}}
\end{table}

Table \ref{ablation} 
shows the results of six transfer tasks on \textit{Office-31}.
As can be seen,
ELS+DACDM
achieves better performance 
than ELS+DACDM (w/o domain guidance) on average,
demonstrating the effectiveness of using the domain guidance for generating samples.
Fig.~\ref{fig:showcase} shows the target backpack images generated by ELS+DACDM (w/o domain guidance) and ELS+DACDM for the transfer task A$\rightarrow$W. 
As can be seen, when training with only the target domain (w/o domain guidance),
only black backpacks can be generated (Fig.~\ref{fig:showcase1}).
In contrast, with the introduction of source samples and domain guidance,
backpacks of various colors following the target domain distribution are generated
(Fig.~\ref{fig:showcase2}).
Moreover, simply transferring from the generated target samples
to the target domain (i.e., ELS+DACDM ($\hD_g$$\rightarrow$$\hD_t$))
is slightly worse than ELS+DACDM on all six tasks, showing the advantage of transferring from the augmented source domain $\hD_{\hat{s}}$ to the target domain $\hD_t$.
Moreover, ELS+DACDM always performs better than ELS+PesudoLabel, verifying that using generated target samples is effective.

\textbf{Effects of Condition and Guidance in DACDM.}
We perform ablation experiments on \textit{Office-31} to further study the effects of different combinations of guidance and conditional methods for controlling classes and domains. \textcolor{blue}{The other parts of DACDM remain unchanged.}
There are four options:
(i)
Both classes and domains are controlled by condition,
where label embeddings and domain embeddings are added together
to train the conditional diffusion model in Algorithm \ref{algorithm:training_cdpm};
(ii) 
The classes are controlled by condition (Section~\ref{sec:condition}) while the domains are controlled by guidance (Section~\ref{sec:guidance}), i.e., the proposed DACDM;
(iii)
The classes are controlled by guidance while domains are controlled by condition,
where the classifier is trained on the source domain data and 
target domain samples with pseudo labels, and the domain labels are used as the condition $\vc$ in \eqref{eqn:loss_func_2};
(iv)
Both classes and domains are controlled by guidance, i.e., the sum of gradients of class classifier and domain classifier, which are trained separately.

Table \ref{control} shows the testing accuracy of six transfer tasks on \textit{Office-31}.
As can be seen, the design choice of using class condition and domain guidance in DACDM achieves the best performance, 
while controlling both classes and domains by condition is slightly inferior.
Furthermore,
when guidance is used for controlling classes (i.e., ``Guidance + Condition'' and ``Guidance + Guidance''),
the performance is largely worse.
This is because the trained class classifier may not be good enough for guiding, as some classes have limited samples, e.g.,  
in Dslr and Webcam domains, 16 and 26 samples per class in average;
however, the domain classification is a binary classification, which is much easier, thus, the trained domain classifier can be good enough for guidance. 
Figs.~\ref{fig:office31_ablation3} and \ref{fig:office31_ablation4} visualize some generated images when using ``Guidance + Condition'' and ``Guidance + Guidance'' (the controlled class is backpack).
As shown, some images are not backpack, leading to 
worse performance (Table \ref{control}).

\begin{table*}[!tbph]\small
\centering
\caption{Accuracy (\%) w.r.t. the number of generated samples per class on the \textit{Office-Home} dataset.
The best is in \textbf{bold}.}
\vskip -.1in
\label{officehome_number}
\setlength{\tabcolsep}{1mm}{
\begin{tabular}{ccccccccccccc @{\hskip 0.05in} c}
\toprule & Ar$\rightarrow$Cl & Ar$\rightarrow$Pr & Ar$\rightarrow$Rw & Cl$\rightarrow$Ar & Cl$\rightarrow$Pr & Cl$\rightarrow$Rw & Pr$\rightarrow$Ar & Pr$\rightarrow$Cl & Pr$\rightarrow$Rw & Rw$\rightarrow$Ar & Rw$\rightarrow$Cl & Rw$\rightarrow$Pr & Average \\
\midrule
0 & 57.79 & 77.65 & 81.62 & 66.59 & 76.74 & 76.43 & 62.69 & 56.69 & 82.12 &  75.63 & 62.85 & 85.35 & 71.84  \\
50 & 58.38 & 76.86 & 81.83 & 68.63 & 79.75 & 78.62 & 64.40 & 57.70 & 82.65 & 74.63 & 63.47 & 84.87 & 72.65 \\
100 & 58.38 & 77.89 & 81.74 & 68.41 & 79.87 & 78.40 & 64.15& 57.81 & 82.61 & 75.24 & 64.60 & 85.40 & 72.88  \\
200 & 60.35 & 78.81 & 82.74 & 69.59 & 80.53 & 79.55 & \textbf{65.16} & 58.26 & 83.11 & 75.81 & 64.18 & 85.55 & 73.64    \\
300 & 60.44 & 78.82 & 82.52 & 69.63 & 81.05 & 79.56 & 64.49 & 58.06 & 83.11 & 75.77 & 65.33 & 85.50 & 73.69 \\
400 & \textbf{61.14} & 78.82 & 82.93 & 69.47 & 81.21 & 79.88 & 64.51 & 58.96 & 83.24 & 75.28 & 65.54 & 85.75 & 73.89 \\
800 & 60.92 & 79.16 & 83.04 & 69.72 & 81.08 & 80.24 & 64.73 & \textbf{59.61} & 83.41 & 75.69 & \textbf{65.91} & \textbf{86.06} & 74.13 \\
1600  & 60.57 & 79.25 & 83.06 & \textbf{70.09} & \textbf{81.89} & \textbf{80.45} & 64.89 & 59.38 & \textbf{83.50} & 75.73 & 65.80 & 85.90 & \textbf{74.21} \\
2400  & 60.94 & \textbf{79.27} & \textbf{83.34} & 69.67 & 81.53 & 80.40 & 65.06 & 58.97 & 83.45 & \textbf{75.90} & 65.61 & 85.99 & 74.18 \\
\bottomrule
\end{tabular}}
\end{table*}

\begin{table*}[!tbph]\small
\centering
\caption{Accuracy (\%) w.r.t. the number of generated samples per class on the \textit{VisDA-2017} dataset. The best is in \textbf{bold}.}
\vskip -.1in
\label{visda_number}
\setlength{\tabcolsep}{1.5mm}{
\begin{tabular}{ccccccccccccc @{\hskip 0.2in} c}
\toprule  & aero & bicycle & bus & car & horse & knife & motor & person & plant & skate & train & truck & mean \\
\midrule
0 & 94.76 & 83.38 & 75.44 & 66.45 & 93.16 & 95.14 & 89.09 & 80.13 & 90.77 & 91.06 & 84.09 & 57.36 & 83.40 \\
500  & 96.16 & 84.29 & 79.29 & 69.91 & 94.20 & \textbf{96.79} & 89.35 & 71.72 & 92.93 & 92.49 & 86.55 & 57.05 & 84.23 \\
1000 & 95.68 & 85.32 & 81.22 & 72.36 & 93.74 & 96.58 & 90.43 & 81.91 & 93.02 & 92.97 & 86.34 & 56.42 & 85.50 \\
2000 & 96.20  & 84.79 & 83.15 & 73.28 & 94.76 & 96.58 & \textbf{90.99} & 82.21 & 92.98 & 93.37 & 87.49 & 59.70 & 86.29 \\
3000 & 96.31 & 86.13 & 81.50 & 72.28 & 94.50 & 96.74 & 90.25 & \textbf{82.92} & 93.21 & 93.13 & 88.09 & 60.30 & 86.28 \\
4000 & 96.19 & 86.23 & 83.08 & \textbf{74.39} & 94.24 & 96.66 & 89.54 & 82.73 & \textbf{94.87} & 94.01 & 87.72 & 60.11 & 86.65 \\
8000 & 96.77 & \textbf{88.49} & \textbf{85.06} & 72.48 & 95.35 & 96.21 & 90.33 & 82.49 & 94.52 & \textbf{94.52} & 88.32 & \textbf{60.89} & \textbf{87.15} \\
16000 & \textbf{97.01} & 87.48 & 83.45 & 72.59 & \textbf{95.76} & 95.37 & 89.91 & 82.78 & 94.04 & 94.39 & \textbf{89.94} & 59.77 & 86.87 \\
\bottomrule
\end{tabular}}
\end{table*}

\begin{table*}[!tbph]\small
\centering
\caption{Accuracy (\%) w.r.t. the number of generated samples per class on the \textit{miniDomainNet} dataset. The best is in \textbf{bold}.}
\vskip -.1in
\label{minidomainnet_number}
\setlength{\tabcolsep}{1.5mm}{
\begin{tabular}{ccccccccccccc @{\hskip 0.2in} c}
\toprule  & C$\rightarrow$P   & C$\rightarrow$R  & C$\rightarrow$S & P$\rightarrow$C & P$\rightarrow$R & P$\rightarrow$S & R$\rightarrow$C & R$\rightarrow$P & R$\rightarrow$S & S$\rightarrow$C & S$\rightarrow$P & S$\rightarrow$R & Average \\
\midrule
0  & 50.11 & 61.45 & 53.02 & 60.77 & 70.61 & 56.04 & 62.43 & 64.16 & 54.89 &  63.93 & 59.19 & 64.47 & 60.09 \\
50  & 54.12 & 66.20 & 54.78 & 62.88 & 72.73 & 56.28 & 63.45 & 64.13 & 56.19 & 65.60 & 60.12 & 67.15 & 61.97  \\
100  & 54.83 & 67.48 & 55.24 & 64.02 & 73.12 & 57.44 & 63.92 & 64.31 & 56.69 & 66.25 & 61.02 & 68.17 & 62.71 \\
200  & 56.14 & 68.52 & \textbf{55.51} & 64.96 & 73.44 & 58.23 & 64.67 & \textbf{64.36} & 57.66 & 67.23 & 62.56 & 69.90 & 63.60 \\
300  & \textbf{56.73} & 69.15 & 55.38 & 65.80 & 73.55 & \textbf{58.41} & 64.72 & 64.19 & 58.29 & 67.23 & 63.05 & 70.61 & 63.92 \\
400  & 56.57 & 69.48 & 55.25 & \textbf{66.42} & \textbf{73.77} & 58.18 & 65.09 & 64.09 & 58.02 & \textbf{67.46} & \textbf{63.31} & 71.06 & 64.06 \\
800  & 56.23 & 69.63 & 53.88 & 65.94 & 73.74 & 58.32 & \textbf{65.17} & 63.77 & \textbf{61.55} & 67.07 & 62.83 & 71.32 & \textbf{64.12} \\
1600  & 54.77 & \textbf{69.70} & 52.43 & 65.02 & 73.58 & 55.78 & 64.79 & 62.89 & 59.74 & 66.51 & 62.50 & \textbf{71.42} & 63.26 \\
\bottomrule
\end{tabular}}
\end{table*}

\subsection{Effects of the Number of Generated Samples}
\label{sec:sen_analysis}

In this section, we perform
experiments to study the effect of the number of generated target samples to the performance of ELS+DACDM on \textit{Office-31}, \textit{Office-Home}, \textit{VisDA-2017}, and \textit{miniDomainNet}.
Fig.~\ref{Sensitivity} shows the accuracy w.r.t. number of samples per class.
As can be seen, increasing the number of generated samples boosts performance, demonstrating effectiveness of the proposed DACDM. 
When there is no generated sample, ELS+DACDM degenerates to ELS, which performs worse than ELS+DACDM on all four datasets.
Furthermore, when the number of generated samples increases to a certain extent, 
the performance tends to stabilize and may even slightly decrease.
As a result,
the number of generated target samples for each class can be set to 200, 200, 2000, and 200 for \textit{Office-31}, \textit{Office-Home}, \textit{VisDA}, and \textit{miniDomainNet}, respectively, which is the default adopted in the above experiments.

Tables \ref{office31_number}, \ref{officehome_number}, \ref{visda_number}, and \ref{minidomainnet_number} show the detailed results of the transfer tasks on \textit{Office-31}, \textit{Office-Home}, \textit{VisDA-2017}, and \textit{miniDomainNet} datasets, respectively.
As can be seen, increasing the number of generated samples improves transfer performance, verifying the effectiveness of DACDM. 
For datasets with a small number of real samples per class (e.g.,
\textit{Office-31}, \textit{Office-Home}, and \textit{miniDomainNet}), a small number of generated samples per class (e.g., 200-400 per class) is sufficient for better performance, while for datasets with a larger number of real samples per class (e.g.,  \textit{VisDA-2017}), a large number of generated samples per class (e.g., 2,000-4,000 per class) is more preferred.

\section{Conclusion}
In this paper, we propose the DACDM method to generate high-quality samples for the target domain in UDA.
The classes of generated samples are controlled
by conditional diffusion models, while the domain is guided by the domain classifier.
DACDM can be integrated into any UDA model.
Extensive experimental results demonstrate 
that DACDM achieves state-of-the-art performance.
In the future work, we are interested in applying DACDM to other transfer learning settings such as multi-source domain adaptation \cite{sun2015survey}.

\section*{Acknowledgements}

This work is supported by NSFC key grant 62136005, NSFC general grant 62076118, and Shenzhen fundamental research program JCYJ20210324105000003.

\bibliographystyle{IEEEtran}
\bibliography{DACDM}

\begin{thebibliography}{10}
\providecommand{\url}[1]{#1}
\csname url@samestyle\endcsname
\providecommand{\newblock}{\relax}
\providecommand{\bibinfo}[2]{#2}
\providecommand{\BIBentrySTDinterwordspacing}{\spaceskip=0pt\relax}
\providecommand{\BIBentryALTinterwordstretchfactor}{4}
\providecommand{\BIBentryALTinterwordspacing}{\spaceskip=\fontdimen2\font plus
\BIBentryALTinterwordstretchfactor\fontdimen3\font minus
  \fontdimen4\font\relax}
\providecommand{\BIBforeignlanguage}[2]{{%
\expandafter\ifx\csname l@#1\endcsname\relax
\typeout{** WARNING: IEEEtran.bst: No hyphenation pattern has been}%
\typeout{** loaded for the language `#1'. Using the pattern for}%
\typeout{** the default language instead.}%
\else
\language=\csname l@#1\endcsname
\fi
#2}}
\providecommand{\BIBdecl}{\relax}
\BIBdecl

\bibitem{li2021survey}
Z.~Li, F.~Liu, W.~Yang, S.~Peng, and J.~Zhou, ``A survey of convolutional
  neural networks: analysis, applications, and prospects,'' \emph{IEEE Trans.
  Neural Netw. Learn. Syst.}, vol.~33, no.~12, pp. 6999--7019, Dec. 2022.

\bibitem{zhang2021survey}
Y.~Zhang and Q.~Yang, ``A survey on multi-task learning,'' \emph{IEEE Trans.
  Knowl. Data Eng.}, vol.~34, no.~12, pp. 5586--5609, Dec. 2022.

\bibitem{subramanian2022generalization}
V.~Subramanian, R.~Arya, and A.~Sahai, ``Generalization for multiclass
  classification with overparameterized linear models,'' in \emph{Proc. Adv.
  Neural Inf. Process. Syst (NeurIPS)}, Nov. 2022, pp. 23\,479--23\,494.

\bibitem{zhang2022diversifying}
Y.~Zhang, Y.~Wang, Z.~Jiang, F.~Liao, L.~Zheng, D.~Tan, J.~Chen, and J.~Lu,
  ``Diversifying tire-defect image generation based on generative adversarial
  network,'' \emph{IEEE Trans. Instrum. Meas.}, vol.~71, pp. 1--12, Mar. 2022.

\bibitem{oza2023unsupervised}
P.~Oza, V.~A. Sindagi, V.~V. Sharmini, and V.~M. Patel, ``Unsupervised domain
  adaptation of object detectors: A survey,'' \emph{IEEE Trans. Pattern Anal.
  Mach. Intell.}, pp. 1--24, Mar. 2023.

\bibitem{yang2020transfer}
Q.~Yang, Y.~Zhang, W.~Dai, and S.~J. Pan, \emph{Transfer learning}.\hskip 1em
  plus 0.5em minus 0.4em\relax Cambridge, U.K.: Cambridge Univ. Press, 2020.

\bibitem{gu2022unsupervised}
X.~Gu, J.~Sun, and Z.~Xu, ``Unsupervised and semi-supervised robust spherical
  space domain adaptation,'' \emph{IEEE Trans Pattern Anal. Mach. Intell.}, pp.
  1--17, Mar. 2022.

\bibitem{zhao2020review}
S.~Zhao, X.~Yue, S.~Zhang, B.~Li, H.~Zhao, B.~Wu, R.~Krishna, J.~E. Gonzalez,
  A.~L. Sangiovanni-Vincentelli, S.~A. Seshia, and K.~Keutzer, ``A review of
  single-source deep unsupervised visual domain adaptation,'' \emph{IEEE Trans.
  Neural Netw. Learn. Syst.}, vol.~33, no.~2, pp. 473--493, Feb. 2022.

\bibitem{long2015learning}
M.~Long, Y.~Cao, J.~Wang, and M.~Jordan, ``Learning transferable features with
  deep adaptation networks,'' in \emph{Proc. Int. Conf. Mach. Learn. (ICML)},
  vol.~1.\hskip 1em plus 0.5em minus 0.4em\relax PMLR, Jul. 2015, pp. 97--105.

\bibitem{MDD}
Y.~Zhang, T.~Liu, M.~Long, and M.~Jordan, ``Bridging theory and algorithm for
  domain adaptation,'' in \emph{Proc. Int. Conf. Mach. Learn. (ICML)}, Jun.
  2019, pp. 12\,805--12\,823.

\bibitem{zhu2020deep}
Y.~Zhu, F.~Zhuang, J.~Wang, G.~Ke, J.~Chen, J.~Bian, H.~Xiong, and Q.~He,
  ``Deep subdomain adaptation network for image classification,'' \emph{IEEE
  Trans. Neural Netw. Learn. Syst.}, vol.~32, no.~4, pp. 1713--1722, Apr. 2021.

\bibitem{chen2020domain}
J.~Chen, X.~Wu, L.~Duan, and S.~Gao, ``Domain adversarial reinforcement
  learning for partial domain adaptation,'' \emph{IEEE Trans. Neural Netw.
  Learn. Syst.}, vol.~33, no.~2, pp. 539--553, Feb. 2022.

\bibitem{rangwani2022closer}
H.~Rangwani, S.~K. Aithal, M.~Mishra, A.~Jain, and V.~B. Radhakrishnan, ``A
  closer look at smoothness in domain adversarial training,'' in \emph{Proc.
  Int. Conf. Mach. Learn. (ICML)}, vol. 162.\hskip 1em plus 0.5em minus
  0.4em\relax PMLR, Jul. 2022, pp. 18\,378--18\,399.

\bibitem{zhang2023free}
Y.~Zhang, X.~Wang, J.~Liang, Z.~Zhang, L.~Wang, R.~Jin, and T.~Tan, ``Free
  lunch for domain adversarial training: Environment label smoothing,'' in
  \emph{Proc. Int. Conf. Learn. Represent. (ICLR)}, May 2023.

\bibitem{hoffman2018cycada}
J.~Hoffman, E.~Tzeng, T.~Park, J.-Y. Zhu, P.~Isola, K.~Saenko, A.~Efros, and
  T.~Darrell, ``{CyCADA}: Cycle-consistent adversarial domain adaptation,'' in
  \emph{Proc. Int. Conf. Mach. Learn. (ICML)}, vol.~5.\hskip 1em plus 0.5em
  minus 0.4em\relax Pmlr, Jul 2018, pp. 1989--1998.

\bibitem{yang2020bi}
G.~Yang, H.~Xia, M.~Ding, and Z.~Ding, ``Bi-directional generation for
  unsupervised domain adaptation,'' in \emph{Proc. AAAI Conf. Artif. Intell.},
  vol.~34, no.~04, Feb. 2020, pp. 6615--6622.

\bibitem{ho2020denoising}
J.~Ho, A.~Jain, and P.~Abbeel, ``Denoising diffusion probabilistic models,''
  \emph{Proc. Adv. Neural Inf. Process. Syst. (NeurIPS)}, vol.~33, pp.
  6840--6851, Dec. 2020.

\bibitem{dhariwal2021diffusion}
P.~Dhariwal and A.~Nichol, ``Diffusion models beat {GANs} on image synthesis,''
  \emph{Proc. Adv. Neural Inf. Process. Syst. (NeurIPS)}, vol.~34, pp.
  8780--8794, Dec. 2021.

\bibitem{lu2022dpm}
C.~Lu, Y.~Zhou, F.~Bao, J.~Chen, C.~Li, and J.~Zhu, ``{DPM-S}olver++: Fast
  solver for guided sampling of diffusion probabilistic models,'' Preprint
  arXiv:2211.01095, May 2023.

\bibitem{jin2020minimum}
Y.~Jin, X.~Wang, M.~Long, and J.~Wang, ``Minimum class confusion for versatile
  domain adaptation,'' in \emph{Proc. Eur. Conf. Comput. Vis. (ECCV)}.\hskip
  1em plus 0.5em minus 0.4em\relax Springer, Aug. 2020, pp. 464--480.

\bibitem{ganin2016domain}
Y.~Ganin, E.~Ustinova, H.~Ajakan, P.~Germain, H.~Larochelle, F.~Laviolette,
  M.~Marchand, and V.~Lempitsky, ``Domain-adversarial training of neural
  networks,'' \emph{J. Mach. Learn. Res.}, vol.~17, no.~1, pp. 2096--2030, Apr.
  2016.

\bibitem{gretton2012kernel}
A.~Gretton, K.~M. Borgwardt, M.~J. Rasch, B.~Sch{\"o}lkopf, and A.~Smola, ``A
  kernel two-sample test,'' \emph{J. Mach. Learn. Res.}, vol.~13, no.~1, pp.
  723--773, Mar. 2012.

\bibitem{AFN}
R.~Xu, G.~Li, J.~Yang, and L.~Lin, ``Larger norm more transferable: An adaptive
  feature norm approach for unsupervised domain adaptation,'' in \emph{Proc.
  IEEE/CVF Int. Conf. Comput. Vis. (ICCV)}, Oct. 2019.

\bibitem{xia2022maximum}
H.~Xia, T.~Jing, and Z.~Ding, ``Maximum structural generation discrepancy for
  unsupervised domain adaptation,'' \emph{IEEE Trans Pattern Anal. Mach.
  Intell.}, vol.~45, no.~3, pp. 3434--3445, Mar. 2023.

\bibitem{goodfellow2014explaining}
I.~J. Goodfellow, J.~Shlens, and C.~Szegedy, ``Explaining and harnessing
  adversarial examples,'' in \emph{Proc. Int. Conf. Learn. Represent. (ICLR)},
  May 2015.

\bibitem{long2018conditional}
M.~Long, Z.~Cao, J.~Wang, and M.~I. Jordan, ``Conditional adversarial domain
  adaptation,'' \emph{Proc. Adv. Neural Inf. Process. Syst. (NeurIPS)},
  vol.~31, pp. 1640--1650, Dec. 2018.

\bibitem{foret2020sharpness}
P.~Foret, A.~Kleiner, H.~Mobahi, and B.~Neyshabur, ``Sharpness-aware
  minimization for efficiently improving generalization,'' in \emph{Proc. Int.
  Conf. Learn. Represent. (ICLR)}, May 2021.

\bibitem{zhu2017unpaired}
J.-Y. Zhu, T.~Park, P.~Isola, and A.~A. Efros, ``Unpaired image-to-image
  translation using cycle-consistent adversarial networks,'' in \emph{Proc.
  IEEE/CVF Int. Conf. Comput. Vis. (ICCV)}, Dec. 2017, pp. 2223--2232.

\bibitem{murphy2023probabilistic}
K.~P. Murphy, \emph{Probabilistic machine learning: Advanced topics}.\hskip 1em
  plus 0.5em minus 0.4em\relax MIT Press, 2023.

\bibitem{nichol2021improved}
A.~Q. Nichol and P.~Dhariwal, ``Improved denoising diffusion probabilistic
  models,'' in \emph{Proc. Int. Conf. Mach. Learn. (ICML)}, vol. 139.\hskip 1em
  plus 0.5em minus 0.4em\relax PMLR, Jul. 2021, pp. 8162--8171.

\bibitem{sohl2015deep}
J.~Sohl-Dickstein, E.~Weiss, N.~Maheswaranathan, and S.~Ganguli, ``Deep
  unsupervised learning using nonequilibrium thermodynamics,'' in \emph{Proc.
  Int. Conf. Mach. Learn. (ICML)}, vol.~3.\hskip 1em plus 0.5em minus
  0.4em\relax PMLR, Jul. 2015, pp. 2256--2265.

\bibitem{ronneberger2015u}
O.~Ronneberger, P.~Fischer, and T.~Brox, ``{U-Net}: Convolutional networks for
  biomedical image segmentation,'' in \emph{Proc. Int. Conf. Med. Image Comput.
  Comput.-Assist. Intervent.}\hskip 1em plus 0.5em minus 0.4em\relax Springer,
  Oct. 2015, pp. 234--241.

\bibitem{yang2022diffusion}
L.~Yang, Z.~Zhang, Y.~Song, S.~Hong, R.~Xu, Y.~Zhao, Y.~Shao, W.~Zhang, B.~Cui,
  and M.-H. Yang, ``Diffusion models: A comprehensive survey of methods and
  applications,'' Preprint arXiv:2209.00796, Mar. 2023.

\bibitem{lu2022dpmsolver}
C.~Lu, Y.~Zhou, F.~Bao, J.~Chen, C.~Li, and J.~Zhu, ``{DPM-S}olver: A fast
  {ODE} solver for diffusion probabilistic model sampling in around 10 steps,''
  in \emph{Proc. Adv. Neural Inf. Process. Syst. (NeurIPS)}, vol.~35, Nov.
  2022.

\bibitem{tlbook}
\BIBentryALTinterwordspacing
J.~Wang and Y.~Chen, \emph{Introduction to Transfer Learning: Algorithms and
  Practice}.\hskip 1em plus 0.5em minus 0.4em\relax Springer Nature, 2023.
  [Online]. Available: \url{jd92.wang/tlbook}
\BIBentrySTDinterwordspacing

\bibitem{croitoru2023diffusion}
F.-A. Croitoru, V.~Hondru, R.~T. Ionescu, and M.~Shah, ``Diffusion models in
  vision: A survey,'' \emph{IEEE Trans Pattern Anal. Mach. Intell.}, pp. 1--20,
  Mar. 2023.

\bibitem{ben2010theory}
S.~Ben-David, J.~Blitzer, K.~Crammer, A.~Kulesza, F.~Pereira, and J.~W.
  Vaughan, ``A theory of learning from different domains,'' \emph{Mach.
  Learn.}, vol.~79, no.~1, pp. 151--175, May 2010.

\bibitem{abu2012learning}
Y.~S. Abu-Mostafa, M.~Magdon-Ismail, and H.-T. Lin, \emph{Learning from
  data}.\hskip 1em plus 0.5em minus 0.4em\relax AMLBook New York, 2012, vol.~4.

\bibitem{vapnik1999nature}
V.~Vapnik, \emph{The nature of statistical learning theory}.\hskip 1em plus
  0.5em minus 0.4em\relax Springer science \& business media, 1999.

\bibitem{saenko2010adapting}
K.~Saenko, B.~Kulis, M.~Fritz, and T.~Darrell, ``Adapting visual category
  models to new domains,'' in \emph{Proc. Eur. Conf. Comput. Vis. (ECCV)}, vol.
  6314.\hskip 1em plus 0.5em minus 0.4em\relax Springer, Sep. 2010, pp.
  213--226.

\bibitem{venkateswara2017deep}
H.~Venkateswara, J.~Eusebio, S.~Chakraborty, and S.~Panchanathan, ``Deep
  hashing network for unsupervised domain adaptation,'' in \emph{Proc. IEEE/CVF
  Conf. Comput. Vis. Pattern Recognit. (CVPR)}, Nov. 2017, pp. 5018--5027.

\bibitem{peng2017visda}
X.~Peng, B.~Usman, N.~Kaushik, J.~Hoffman, D.~Wang, and K.~Saenko, ``{VisDA}:
  The visual domain adaptation challenge,'' Preprint arXiv:1710.06924, Nov.
  2017.

\bibitem{zhou2021domain}
K.~Zhou, Y.~Yang, Y.~Qiao, and T.~Xiang, ``Domain adaptive ensemble learning,''
  \emph{IEEE Trans. Image Process.}, vol.~30, pp. 8008--8018, 2021.

\bibitem{peng2019moment}
X.~Peng, Q.~Bai, X.~Xia, Z.~Huang, K.~Saenko, and B.~Wang, ``Moment matching
  for multi-source domain adaptation,'' in \emph{Proc. IEEE/CVF Int. Conf.
  Comput. Vis. (ICCV)}, Oct. 2019, pp. 1406--1415.

\bibitem{xie2023dirichletbased}
M.~Xie, S.~Li, R.~Zhang, and C.~H. Liu, ``Dirichlet-based uncertainty
  calibration for active domain adaptation,'' in \emph{Proc. Int. Conf. Learn.
  Represent. (ICLR)}, May 2023.

\bibitem{he2016deep}
K.~He, X.~Zhang, S.~Ren, and J.~Sun, ``Deep residual learning for image
  recognition,'' in \emph{Proc. IEEE/CVF Conf. Comput. Vis. Pattern Recognit.
  (CVPR)}, Dec. 2016, pp. 770--778.

\bibitem{bao2022learning}
Y.~Bao, S.~Chang, and R.~Barzilay, ``Learning stable classifiers by
  transferring unstable features,'' in \emph{International Conference on
  Machine Learning. (ICML),}.\hskip 1em plus 0.5em minus 0.4em\relax PMLR,
  2022, pp. 1483--1507.

\bibitem{van2008visualizing}
L.~Van~der Maaten and G.~Hinton, ``Visualizing data using t-{SNE},'' \emph{J.
  Mach. Learn. Res.}, vol.~9, no.~11, pp. 2579--2625, Nov. 2008.

\bibitem{ben2006analysis}
S.~Ben-David, J.~Blitzer, K.~Crammer, and F.~Pereira, ``Analysis of
  representations for domain adaptation,'' \emph{Proc. Adv. Neural Inf.
  Process. Syst. (NeurIPS)}, vol.~19, pp. 137--144, Dec. 2006.

\bibitem{sun2015survey}
S.~Sun, H.~Shi, and Y.~Wu, ``A survey of multi-source domain adaptation,''
  \emph{Inf. Fusion}, vol.~24, pp. 84--92, Jul. 2015.

\end{thebibliography}

\end{document}